\newtheorem{theorem}{Theorem}[section]
\newtheorem{definition}{Definition}
\newcites{Appendix}{Appendix References}
\def\h2{\hspace*{-2pt}}
\definecolor{comment}{rgb}{0.2,0.4,0.2}
\def\comment#1{\textcolor{comment}{\textit{// #1 }}}
\newcommand{\x}{{\textbf{x}}}
\newcommand{\y}{{\textbf{y}}}
\newcommand{\MS}{MS}
\newcommand{\ourmethod}{{\textrm{TreeFARMS}}}
\def\fix{\textrm{\rm fix}}
\def\splitrm{\textrm{\rm split}}
\def\leftrm{\textrm{\rm left}}
\def\rightrm{\textrm{\rm right}}
\def\h2{\hspace*{-2pt}}
\def\x{\mathbf{x}}
\def\y{\mathbf{y}}
\def\refrm{\textrm{\rm ref}}
\def\set{\textrm{\rm set}}
\title{Exploring the Whole Rashomon Set of Sparse Decision Trees}
\author{%
  Rui Xin\textsuperscript{\rm 1}\thanks{Equal Contribution} \quad Chudi Zhong\textsuperscript{\rm 1}\footnotemark[1] \quad Zhi Chen\textsuperscript{\rm 1}\footnotemark[1] \\
  \And Takuya Takagi\textsuperscript{\rm 2} \quad Margo Seltzer\textsuperscript{\rm 3} \quad Cynthia Rudin\textsuperscript{\rm 1} \\
  \\
  %Duke University \\
  %\texttt{rui.xin926@duke.edu} \\
  %\And
  %Chudi Zhong\textsuperscript{1}\footnotemark[1]\\
  %Duke University \\
  %\texttt{chudi.zhong@duke.edu} \\
  %\And
  %Zhi Chen\textsuperscript{1}\footnotemark[1]\\
  %Duke University \\
  %\texttt{zhi.chen1@duke.edu} \\
  %\And
  %Takuya Takagi\textsuperscript{2} \\
  %Fujitsu Laboratories Ltd. \\
  %\texttt{takagi.takuya@fujitsu.com} \\
  %\And
  %Margo Seltzer\textsuperscript{3} \\
  %The University of British Columbia \\
  %\texttt{mseltzer@cs.ubc.ca} \\
  %\And
  %Cynthia Rudin\textsuperscript{1} \\
  %Duke University \\
  %\texttt{cynthia@cs.duke.edu} \\
  \textsuperscript{1} Duke University \textsuperscript{2} Fujitsu Laboratories Ltd. \textsuperscript{3} The University of British Columbia\\
  \texttt{\{rui.xin926, chudi.zhong, zhi.chen1\}@duke.edu}\\
  \texttt{takagi.takuya@fujitsu.com, mseltzer@cs.ubc.ca, cynthia@cs.duke.edu}
%   David S.~Hippocampus\thanks{Use footnote for providing further information
%     about author (webpage, alternative address)---\emph{not} for acknowledging
%     funding agencies.} \\
%   Department of Computer Science\\
%   Cranberry-Lemon University\\
%   Pittsburgh, PA 15213 \\
%   \texttt{hippo@cs.cranberry-lemon.edu} \\
  % examples of more authors
  % \And
  % Coauthor \\
  % Affiliation \\
  % Address \\
  % \texttt{email} \\
  % \AND
  % Coauthor \\
  % Affiliation \\
  % Address \\
  % \texttt{email} \\
  % \And
  % Coauthor \\
  % Affiliation \\
  % Address \\
  % \texttt{email} \\
  % \And
  % Coauthor \\
  % Affiliation \\
  % Address \\
  % \texttt{email} \\
}
\begin{document}

\maketitle

\doparttoc % Tell to minitoc to generate a toc for the parts
\faketableofcontents % Run a fake tableofcontents command for the partocs

\begin{abstract}
In any given machine learning problem, there might be many models that explain the data almost equally well. However, most learning algorithms return only one of these models, leaving practitioners with no practical way to explore alternative models that might have desirable properties beyond what could be expressed by a loss function. 
The \textit{Rashomon set} is the set of these all almost-optimal models. Rashomon sets can be large in size and complicated in structure, particularly for highly nonlinear function classes that allow complex interaction terms, such as decision trees. We provide the first technique for completely enumerating the Rashomon set for sparse decision trees; in fact, our work provides the first complete enumeration of any Rashomon set for a non-trivial problem with a highly nonlinear discrete function class. This allows the user an unprecedented level of control over model choice among all models that are approximately equally good.
%We present a set of theorems that let us construct the Rashomon set by reducing the potentially enormous space of all trees to a smaller space containing the Rashomon set.
We represent the Rashomon set in a specialized data structure that supports efficient querying and sampling. We show three applications of the Rashomon set: 1) it can be used to study variable importance for the set of almost-optimal trees (as opposed to a single tree), 2) the Rashomon set for accuracy enables enumeration of the Rashomon sets for balanced accuracy and F1-score, and 3) the Rashomon set for a full dataset can be used to produce Rashomon sets constructed with only subsets of the data set.
Thus, we are able to examine Rashomon sets across problems with a new lens, enabling users to choose models rather than be at the mercy of an algorithm that produces only a single model.

% Old abstract:
%In real applications, there are often many equally-good models, yet our machine learning algorithms only return one of them. Allowing domain experts to choose between equally-good models changes the machine learning problem from one of \textit{optimality} to one of \textit{feasibility}, where the goal is to capture all feasible highly-accurate models. We present a technique to collect \textit{all} almost-optimal sparse decision trees (which is called the \textit{Rashomon set of trees}). The Rashomon set can include a huge number of trees; we present a time- and memory-efficient way to construct and store these trees. Our technique also allows an efficient way to sample trees from Rashomon sets that are too large to be stored. We show how the Rashomon set of trees can be useful for studying variable importance for the set of almost-optimal trees (as opposed to a single tree). We also show that Rashomon sets of trees created for one performance metric (accuracy) provably contain the Rashomon sets for other performance metrics (balanced accuracy and F1-score). Our work democratizes machine learning by giving users a choice over models they can understand.
\end{abstract}

\section{Introduction}
The \textit{Rashomon set} is the set of almost-equally-optimal models \cite{semenova2019study, fisher2019all}. Rashomon sets were named for the \textit{Rashomon effect} of Leo Breiman, whereby many equally good models could explain the data well \cite{breiman2001statistical}. The Rashomon effect helps us understand that there is not just one ``best'' explanation for the data, but many diverse equally predictive models. The existence of Rashomon sets has important practical implications: practitioners may not want to use the single model from the Rashomon set that was output by a machine learning algorithm. Instead, they may want to explore the Rashomon set to find models with important properties, such as interpretability, fairness, or use of specific variables, or they may want to choose a model that agrees with possible causal hypotheses, monotonicity trends, ease of calculation, or simply domain intuition. In light of viewing the learning problem this way, the whole paradigm that machine learning should provide only one ``optimal'' model makes little sense: perhaps we should turn the \textit{optimization} problem into a \textit{feasibility} problem, and insist that we find all approximately-equally-good models and allow the user to choose among them. Perhaps the tiny sacrifice of a small amount of empirical risk can make the difference between a model that can be used and one that cannot.

While wishing for full Rashomon sets is easy, actually finding them can be extremely difficult for nonlinear function classes. Consider sparse decision trees, which is the focus of this work. Trees have wildly nonlinear relationships between features, where every leaf of depth $d$ represents an interaction between features of order $d$. Even for trees of depth at most 4 with only 10 binary features, the number of possible trees (the size of the hypothesis space) is more than $9.338\times 10^{20}$ models \citep{hu2019optimal}.
This complexity explains why no previous work has been able to provide full Rashomon sets for optimal sparse trees, nor for any non-trivial function class with a large number of interactions.

While the hypothesis space of sparse trees is huge, the Rashomon set of good sparse trees is a small subset of it. In fact, the Rashomon set is often small enough to store, even if the function class itself is too large to be enumerated. Thus, rather than enumerate all sparse trees and store just those in the Rashomon set, we use \textit{analytical bounds to prove that large portions of the search space do not contain any members of the Rashomon set} and can safely be excluded. This allows us to home in on just the portion of the space containing the Rashomon set. We \textit{store the Rashomon set in a specialized data structure}, which permits memory-efficient storage and easy indexing of the Rashomon set's members. 
The combination of these strong bounds and efficient representation enable us to provide \textbf{the first complete enumeration of the Rashomon set for sparse decision trees on real-world datasets}.  Our method is called \ourmethod{}, for ``Trees FAst RashoMon Sets.''

The Rashomon sets we compute give us an unprecedented lens through which to examine learning problems. We can now directly answer questions such as: \textbf{Size:} How large is the Rashomon set? Does its size vary between datasets? \textbf{Variable importance:} How does variable importance change among trees within the Rashomon set? Perhaps this will give us a better sense of how important a variable is to a dataset, rather than just to one model. \textbf{Variability in predictions:} 
%Are there clusters of similar models within the Rashomon set, or is the Rashomon set just one cluster of similar models? 
Do the models in the Rashomon set all predict similarly on the data? 
%Does the Rashomon set contain a model we might want to use in practice? 
\textbf{Robustness:} How does the Rashomon set change if we remove a subset of data?
\textbf{Rashomon sets for other losses:} 
How can we construct Rashomon sets for balanced accuracy and F1-score?
In this work, we show how the answers to these questions can be computed directly after running our algorithm for finding the Rashomon set.

\section{Related Work}\label{sec:related}
%Our work relates to Rashomon sets, model enumeration, and decision trees. 

\textbf{Rashomon sets}. Leo Breiman \citep{breiman2001statistical} proposed the \textit{Rashomon effect} to describe the fact that many accurate-but-different models exist for the same data. Rashomon sets occur in  healthcare, finance, criminal justice, etc.  \citep{d2020underspecification, marx2020predictive}. They have been used for decision making \citep{tulabandhula2014robust} and robustness of estimation \citep{coker2021theory}. Semenova et al$.$ \citep{semenova2019study} show that when the Rashomon set is large, models with various important properties such as interpretability and fairness can exist inside it. Other works use the Rashomon set to study the range of variable importance among all well-performing models \citep{fisher2019all, dong2020exploring}, which provides a better sense of how important a variable is in general, rather than how important it is to a specific model. 
Other works find diverse near-optimal solutions for integer linear programs \citep{danna2007generating, ahanor2022diversitree}. Kissel et al$.$ \cite{kissel2021forward} find collections of accurate models through model path selection. They collect models using forward selection and use these as a collection of accurate models.  
Our work differs from these works in that we find the \textit{full} Rashomon set of a class of sparse decision trees on real problems. %The only previous work we know of to attempt storing a Rashomon set of any complex discrete model class is AGAST \cite{agast}, to which we compare \ourmethod.

\textbf{Model enumeration}.  Ruggieri~\citep{ruggieri2017enumerating,ruggieri2019complete} enumerates trees built from greedy top-down decision tree induction. Our method differs as our trees are not built using greedy induction but by searching the entire binary decision tree space, and our goal is to enumerate all well-performing trees, not simply the features in the greedy trees. Hara et al$.$
\cite{hara2017enumerate,hara2018approximate} enumerate linear models and rule lists or rule sets in descending order
of their objective values. However, they find only one model for any given set of itemsets (association rules). Thus, they enumerate and store only a small subset of the Rashomon set.

\textbf{Decision trees}. Decision tree algorithms have a long history \citep{morgan1963problems, payne1977algorithm, loh2014fifty}, but the vast majority of work on trees has used greedy induction \citep{breiman1984classification, quinlan1993c} to avoid solving the NP-complete problem of finding an optimal tree \citep{laurent1976constructing}. However, greedy tree induction provides suboptimal trees, which has propelled research since the 1990s on mathematical optimization for finding optimal decision trees \citep{hu2019optimal,bennett1996optimal,farhangfar2008fast,bertsimas2017optimal,verwer2019learning,vilas2021optimal,gunluk2021optimal,aghaei2021strong,narodytska2018learning}, as well as dynamic programming with branch-and-bound  \citep{lin2020generalized,mctavish2021smart,aglin2020learning,demirovic2020murtree}. We refer readers to two recent reviews of this area \cite{carrizosa2021mathematical,rudin2022interpretable}. 

\textbf{Bayesian trees.} Bayesian analysis has long aimed to produce multiple almost-optimal models through sampling from the posterior distribution. While it might reasonably seem like a method such as BART \cite{bart_package} or random forest \cite{breiman2001random} might sample effectively from the Rashomon set, as our experiments show (Figure \ref{fig:comp_baselines}), they do not.

\section{Bounds for Reducing the Search Space}
\label{sec:notation}

We denote the training dataset as $\{(\x_i, \y_i)\}_{i=1}^n$, where $\x_i \in \{0,1\}^p$ are binary features. Our notation uses $y_i \in \{0,1\}$ as labels, while our method can be generalized to multiclass classification as well. Let $\ell(t, \x, \y) = \frac{1}{n}\sum_{i=1}^n\mathbf{1}[\hat{y}_i \neq y_i]$ be the loss of tree $t$ on the training set, where $\{\hat{y}_i\}_{i=1}^n$ are predicted labels given $t$. We define the objective function as the combination of the misclassification loss and a sparsity penalty on the number of leaves: $Obj(t, \x, \y) = \ell(t, \x, \y) + \lambda H_t$, where $H_t$ is the number of leaves in tree $t$ and $\lambda$ is a regularization parameter. 

\begin{definition}($\epsilon$-Rashomon set)\label{def:eps-rashomon}
Let $t_{\refrm}$ be a benchmark model or reference model from $\mathcal{T}$, where $\mathcal{T}$ is a set of binary decision trees. The $\epsilon$-Rashomon set is a set of all trees $t\in \mathcal{T}$ with $Obj(t,\x,\y)$ at most $(1+\epsilon)\times Obj(t_{\refrm},\x,\y)$: 
\begin{equation}\label{eq:rset_definition}
    \hspace*{-3pt}
    R_{\textrm{set}}(\epsilon, t_{\refrm}, \mathcal{T})\!:=\!\{t \!\in\! \mathcal{T}\!:\! Obj(t,\x,\y)\!\leq\! (1+\epsilon)\!\times\! Obj(t_{\refrm},\x,\y) \}.
\end{equation}
\end{definition}
For example, if we permit models within 2\% of the reference objective value, we would set $\epsilon$ to be 0.02. 
% Typically, $\epsilon$ is above zero, perhaps 0.02 if we permit models within 2\% of the reference loss value. 
Note that we use $R_{\textrm{set}}(\epsilon)$ to represent $R_{\textrm{set}}(\epsilon, t_{\refrm}, \mathcal{T})$ when $t_{\refrm}$ and $\mathcal{T}$ are clearly defined. We use $\theta_{\epsilon}:=(1+\epsilon)\times Obj(t_{\refrm},\x,\y)$ to denote the threshold of the Rashomon set. Typically, the reference model is an empirical risk minimizer $t_{\textrm{ref}}\in 
\arg\min_{t\in\textrm{trees}} Obj(t,\x,\y)$. Recent advances in decision tree optimization have allowed us to find this empirical risk minimizer, specifically using the GOSDT algorithm \citep{lin2020generalized,mctavish2021smart}.
Our goal is to store $R_{set}(\epsilon, t_{\textrm{ref}}, \mathcal{T})$, sample from it, and compute statistics from it.

Customized analytical bounds, leveraging tools from \cite{lin2020generalized,hu2019optimal},
help reduce the search space for Rashomon set construction. As every possible tree is being grown in the process of dynamic programming, some of its leaves will have been determined (``fixed''), and others will not have yet been determined (``unfixed''). Bounds for these incomplete trees compare two quantities: the performance that might be achieved in the best possible case, when every unfixed part of the tree has perfect classification for all of its points, and  $\theta_{\epsilon}$, the threshold of the Rashomon set. If the first is larger -- i.e., worse -- than $\theta_{\epsilon}$, we know that extensions of the tree will never be in the Rashomon set.
%extending the tree is pointless, because even if its extensions are perfect, they will never be in the Rashomon set. In this case, we have reduced the search space.
Each partial tree $t$ (that is, a tree that can be extended) is represented in terms of five variables: $t_{\fix}$ are the ``fixed'' leaves we do not extend during this part of the search (there will be a different copy of the tree elsewhere in the search where these leaves will potentially be split), $\delta_{\fix}$ are the labels of the data points within the fixed leaves, $t_{\splitrm}$ are the ``unfixed'' leaves we could potentially split during the exploration of this part of the search space (with labels $\delta_{\splitrm}$) and $H_t$ is the number of leaves in the current tree. Thus, our current position in the search space is a partial tree $t = (t_{\fix},\delta_{\fix}, t_{\splitrm}, \delta_{\splitrm}, H_t)$.
%where $t_{\fix}$ is the set of fixed leaves, $\delta_{\fix}$ are the predicted labels of leaves $t_{\fix}$. 
%Similarly, $t_{\splitrm}$ is the set of leaves that can be further split and $\delta_{\splitrm}$ are the predicted labels of leaves $t_{\splitrm}$.
The theorems below allow us to exclude large portions of the search space.
%based on a lower bound of the objective. The conditions for the left and right trees in the second bound are tighter when combined with the third bound.
\begin{theorem}\label{corollary:lb} (Basic Rashomon Lower Bound)
Let $\theta_{\epsilon}$ be the threshold of the Rashomon set. Given a tree $t = (t_{\fix},\delta_{\fix}, t_{\splitrm}, \delta_{\splitrm}, H_t)$, let $b(t_{\fix}, \x, \y):=\ell(t_{\fix}, \x, 
\y)+\lambda H_t$ denote the lower bound of the objective for tree $t$. If $b(t_{\fix}, \x, \y) > \theta_{\epsilon}$, then the tree $t$ and all of its children are not in the $\epsilon$-Rashomon set.
\end{theorem}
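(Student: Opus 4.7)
The plan is to show that the quantity $b(t_{\fix}, \x, \y) = \ell(t_{\fix}, \x, \y) + \lambda H_t$ is a valid lower bound on the objective $Obj(t', \x, \y)$ of every completion $t'$ obtainable by further splitting the unfixed leaves in $t_{\splitrm}$. Once this is established, the contrapositive immediately gives the theorem: if $b(t_{\fix}, \x, \y) > \theta_{\epsilon}$, then $Obj(t', \x, \y) > \theta_{\epsilon}$ for every such $t'$, so neither $t$ nor any of its descendants can satisfy the Rashomon inequality $Obj(\cdot, \x, \y) \leq \theta_{\epsilon}$ of Definition~\ref{def:eps-rashomon}.

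First I would exploit the additive decomposition of $\ell$ across the leaves of a tree. Since the misclassification loss $\ell(t, \x, \y) = \tfrac{1}{n}\sum_i \mathbf{1}[\hat y_i \neq y_i]$ sums over training points, and the leaves of any tree induce a partition of the training set, the loss splits as a sum of per-leaf misclassification counts divided by $n$. For a completion $t'$ of $t$, the leaves decompose into two groups: the fixed leaves inherited unchanged from $t_{\fix}$ (contributing exactly $\ell(t_{\fix}, \x, \y)$), and the descendants of the unfixed leaves in $t_{\splitrm}$ (contributing a non-negative amount). Hence $\ell(t', \x, \y) \geq \ell(t_{\fix}, \x, \y)$.

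Next I would handle the sparsity penalty. Any tree $t'$ obtained by further splitting leaves in $t_{\splitrm}$ can only add leaves: each split replaces one leaf by two, so $H_{t'} \geq H_t$. Combining this with the loss inequality gives
\begin{equation*}
Obj(t', \x, \y) = \ell(t', \x, \y) + \lambda H_{t'} \;\geq\; \ell(t_{\fix}, \x, \y) + \lambda H_t \;=\; b(t_{\fix}, \x, \y).
\end{equation*}
If $b(t_{\fix}, \x, \y) > \theta_{\epsilon}$, then $Obj(t', \x, \y) > \theta_{\epsilon}$ for every completion $t'$ (including $t$ itself, by taking the trivial completion), so none of them belong to $R_{\set}(\epsilon)$.

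The argument is largely bookkeeping, so the only real subtlety is being careful about what ``children'' of $t$ means: the statement should be read as ranging over all trees obtainable by successively splitting unfixed leaves (descendants in the search tree), rather than arbitrary supersets of $t$'s structure. Once this scope is fixed, the additive decomposition of $\ell$ and the monotonicity $H_{t'} \geq H_t$ are both immediate, and no further ingredients are needed.
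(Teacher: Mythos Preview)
Your proposal is correct and matches the paper's own proof essentially step for step: both argue that $\ell(t',\x,\y)\ge \ell(t_{\fix},\x,\y)$ by the additive leaf decomposition and that $H_{t'}\ge H_t$ because splitting only adds leaves, then conclude $Obj(t',\x,\y)\ge b(t_{\fix},\x,\y)>\theta_\epsilon$. The only cosmetic difference is that the paper routes the inequality through the intermediate quantity $b(t'_{\fix},\x,\y)$ using $t_{\fix}\subseteq t'_{\fix}$, whereas you bound $Obj(t',\x,\y)$ directly; the substance is identical.
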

We can tighten the basic Rashomon lower bound by using knowledge of \textit{equivalent points} \cite{angelino2018learning}. Data points are equivalent if they have exactly the same feature values. Let $\Omega$ be a set of leaves. \textit{Capture} is an indicator function that equals 1 if $\x_i$ falls into one of the leaves in $\Omega$, and 0 otherwise, in which case we say that $\textrm{cap}(\x_i, \Omega) = 1$. Let $e_u$ be a set of equivalent points and $q_u$ be the minority class label among points in $e_u$. A dataset consists of multiple sets of equivalent points. Let $\{e_u\}_{u=1}^U$ enumerate these sets. The bound below incorporates equivalent points. 
\begin{theorem}\label{thm:equiv}
(Rashomon Equivalent Points Bound) Let $\theta_{\epsilon}$ be the threshold of the Rashomon set. Let $t$ be a tree with leaves $t_{\fix}, t_{\splitrm}$ and lower bound $b(t_{\fix}, \x, \y)$. Let $b_{\textrm{equiv}}(t_{\splitrm}, \x, \y):=\frac{1}{n}\sum_{i=1}^n \sum_{u=1}^U \textrm{cap}(\x_i, t_{\splitrm}) \wedge \mathbbm{1}[\x_i \in e_u] \wedge \mathbbm{1}[y_i=q_u]$ be the lower bound on the misclassification loss of the unfixed leaves. Let $B(t,\x, \y):= b(t_{\fix}, \x, \y) + b_{\textrm{equiv}}(t_{\splitrm}, \x, \y)$ be the Rashomon lower bound of $t$. If $B(t,\x, \y) > \theta_{\epsilon}$, tree $t$ and all its children are not in the $\epsilon$-Rashomon set.
\end{theorem}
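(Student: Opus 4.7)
The plan is to show that for every complete tree $t'$ that extends the partial tree $t = (t_{\fix}, \delta_{\fix}, t_{\splitrm}, \delta_{\splitrm}, H_t)$, we have $Obj(t', \x, \y) \geq B(t, \x, \y)$. Combined with the hypothesis $B(t, \x, \y) > \theta_{\epsilon}$, this will immediately imply that $t$ and every extension of $t$ fail the Rashomon threshold, and hence none of them belongs to $R_{\textrm{set}}(\epsilon)$.

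First, I would decompose the misclassification loss of $t'$ into the portion contributed by points captured by the fixed leaves $t_{\fix}$ and the portion contributed by points captured by the unfixed leaves $t_{\splitrm}$, and track the sparsity term $\lambda H_{t'}$ separately. Because an extension only refines what happens inside the unfixed leaves, the fixed leaves and their assigned labels are untouched, so the first portion equals $\ell(t_{\fix}, \x, \y)$ exactly. Since extending a tree can only add leaves, $H_{t'} \geq H_t$, and therefore the sparsity term is at least $\lambda H_t$. Together these reproduce the term $b(t_{\fix}, \x, \y)$ from Theorem~\ref{corollary:lb}.

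The heart of the argument will be lower-bounding the loss on points captured by $t_{\splitrm}$. The key observation is that equivalent points share an identical feature vector, so they make the same decision at every internal node and therefore traverse the same root-to-leaf path in every tree; consequently, they receive the same predicted label under any extension. For each equivalence class $e_u$ whose points fall under $t_{\splitrm}$, any label choice in its final leaf must misclassify at least the points in $e_u$ whose true label equals the minority label $q_u$. Summing these unavoidable misclassifications across all equivalence classes reproduces precisely the double-indicator expression defining $b_{\textrm{equiv}}(t_{\splitrm}, \x, \y)$, so the loss over points captured by $t_{\splitrm}$ is bounded below by this quantity regardless of how the extension is carried out. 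Adding this to the previous bound yields $Obj(t', \x, \y) \geq b(t_{\fix}, \x, \y) + b_{\textrm{equiv}}(t_{\splitrm}, \x, \y) = B(t, \x, \y) > \theta_{\epsilon}$, which is exactly what is needed.

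The hard part will be articulating the indistinguishability step rigorously: one must verify that no sequence of splits in any extension can ever separate two equivalent points (since identical feature vectors produce identical test outcomes at every node), so the per-leaf label is forced to be the same for all members of an equivalence class that reaches it, and the minority-class count is therefore an unconditional lower bound on that class's contribution to the loss. Once this structural fact is established, the remaining pieces are routine: an invocation of Theorem~\ref{corollary:lb} to account for $b(t_{\fix},\x,\y)$, a monotonicity argument for the leaf count, and a straightforward summation over equivalence classes to assemble $b_{\textrm{equiv}}$.
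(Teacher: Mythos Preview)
Your proposal is correct and follows essentially the same approach as the paper: decompose the objective into the fixed-leaf contribution (handled via $b(t_{\fix},\x,\y)$ and the monotonicity of $H_{t'}$) and the split-leaf contribution (lower-bounded by $b_{\textrm{equiv}}$), then conclude that both $t$ and any $t'\in\sigma(t)$ exceed $\theta_\epsilon$. In fact, you supply more justification than the paper does for why $b_{\textrm{equiv}}$ is a valid lower bound---the paper simply invokes the definitions and asserts $\ell(t_{\splitrm},\x,\y)\geq b_{\textrm{equiv}}(t_{\splitrm},\x,\y)$ without spelling out the indistinguishability argument you outline.
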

%We can use the combination of the basic lower bound on the fixed leaves and use the equivalent points on the leaves that can be further split to create the following more general theorem. 
We can use this bound recursively on all subtrees we discover during the process of dynamic programming. If, at any time, we find the sum of the lower bounds of subproblems created by a split exceed the threshold of the Rashomon set, the split that led to these subproblems will never produce any member of the Rashomon set. This is formalized in the Rashomon Equivalent Points Bound for Subtrees,
Theorem \ref{corollary:lb_dev} in Appendix \ref{app:theorems}, which dramatically helps reduce the search space. We also use the ``lookahead'' bound, used in GOSDT \citep{lin2020generalized}, which looks one split forward and cuts all the children of tree $t$ if $B(t,\x, \y) + \lambda > \theta_{\epsilon}$.

% \textcolor{red}{move this theorem to appendix}
% \begin{theorem}\label{corollary:lb_dev} (Rashomon Bound with Equivalent Points for Subtrees)
% Let $t$ be a tree such that the root node is split by a feature, where two subtrees $t_{\leftrm}$ and $t_{\rightrm}$ are generated with $H_{t_{\leftrm}}$ and $H_{t_{\rightrm}}$ leaves for $t_{\leftrm}$ and  $t_{\rightrm}$ respectively. Let $B(t_{\leftrm}, \x, \y)$ and $B(t_{\rightrm}, \x, \y)$ be the Rashomon bound with equivalent points for the left and right subtrees respectively. (Note that $B(t_{\leftrm}, \x, \y) \leq \ell(t_{\leftrm}, \x, \y)+\lambda H_{t_{\leftrm}}$ and $B(t_{\rightrm}, \x, \y) \leq \ell(t_{\rightrm}, \x, \y)+\lambda H_{t_{\rightrm}}$).
% If %$b(t_{\leftrm}, \x, \y) > \theta_{\epsilon}$ or $b(t_{\rightrm}, \x, \y) > \theta_{\epsilon}$ or 
% $B(t_{\leftrm}, \x, \y) + B(t_{\rightrm}, \x, \y) > \theta_{\epsilon}$, then the tree $t$ is not a member of the $\epsilon$-Rashomon set.
% \end{theorem}

% \section{Storing, and Sampling from the Rashomon Set}
% \section{Representing, Extracting, and Sampling from the Rashomon Set}
\section{Storing, Extracting, and Sampling the Rashomon Set}
%We present our implementation in four steps. 1) We review the GOSDT~\cite{lin2020generalized} algorithm for finding the optimal decision tree (Section \ref{sec:gosdt}). 2) We introduce our Model Set data structure (Section \ref{sec:new_data_structure}), which efficiently stores the Rashomon Set. 3) We present an efficient algorithm, \ourmethod{}, for constructing the Rashomon set (Section \ref{sec:newalgorithm}). 4) We demonstrate how to sample from the Rashomon set when it is too large to fully materialize (Section \ref{sec:sampling}).
%\subsection{The \ourmethod{} Data Structure}
\label{sec:new_data_structure}

The key to \ourmethod's scalability is a novel \emph{Model Set} representation.
%and two indexes on those maps.
The \textbf{Model Set (MS)} is a set of hierarchical maps; each map is a \textbf{Model Set Instance (MSI)}.
Conceptually, we identify a MSI by a <subproblem, objective> pair; in reality, we use pointers to improve execution time and reduce memory consumption.
%Thus, there is one model set instance for each unique objective value for a given subproblem. 
%
A MSI can represent a terminal (leaf) node, an internal node, or both (See Appendix \ref{app:datastructure_detail}).
A leaf MSI stores only the subproblem's prediction and the number of false positives and negatives (or false predictions of each class in the case of multiclass classification).
An internal MSI, $M$, is a map whose keys are the features on which to split the subproblem and whose values are an array of pairs, each referring to left and right MSIs whose objectives sum to the objective of $M$.
\ourmethod{}'s efficiency stems from
%a key observation to produce an implementation that is both fast and space-efficient:
the fact that the loss function for decision trees takes on a discrete number of values (approximately equal to the number of samples in the training data set), while the number of trees in the Rashomon set is frequently orders of magnitude larger. Therefore, many trees (and subtrees) have the same objective. By grouping together trees with the same objective, we avoid massive amounts of data duplication and computation. See Appendix~\ref{app:datastructure_detail} for an example. %\textcolor{red}{add a comparison between MSI and trie representation in the appendix, and add a reference here.}
Equipped with these data structures, we now present our main algorithm. 
%Consider a subproblem, $s$, and a specific subtree, $t$, for that subproblem it has objective $o$.
%If $t$ appears in a tree in the Rashomon set, \emph{all} subtrees for $s$ with objective $o$ are also in the Rashomon set.
%This suggests two things: 1) we can reduce memory consumption by storing the set of subtrees for $s$ with objective $o$ only once and then reference the set in every tree in the Rashomon set in which it appears (Section~\ref{sec:new_data_structure}). This avoids the combinatorial explosion that results if one were to enumerate all combinations of these equivalently good subtrees. 2) We can reduce computational overhead by computing objectives based on these sets of subtrees (Section~\ref{sec:newalgorithm}), rather than on individual trees.
% See Appendix \ref{app:datastructure_detail} for detailed implementation of {MS} and {MSI}.
%
%We augment our Model Set with two indexes:
%the \textbf{Subproblem Index (SI)} and the \textbf{Subproblem-Objective Index (SOI)}.
%The \textbf{Subproblem Index (SI)} is an index from a subproblem to the set of all objective values for which a MSI exists, and
%the \textbf{Subproblem-Objective Index (SOI)} maps from a subproblem/objective pair to an MSI.

\subsection{\ourmethod{} Implementation}\label{sec:gosdt}
We implement \ourmethod{} in GOSDT \citep{lin2020generalized}, which uses a dynamic-programming-with-bounds formulation to find the optimal sparse decision trees.
Each \emph{subproblem}
% of the dynamic programming algorithm
is defined by a support set $s_a\in \{0, 1\}^n$ such that the $i^{th}$ element is 1 iff $\x_i$ satisfies the Boolean assertion $a$ that corresponds to a decision path in the tree.
For each subproblem in the dynamic program, GOSDT keeps track of upper and lower bounds on its objective.
It stores these subproblems and their bounds in a \emph{dependency graph}, which expresses the relationships between subproblems.
\ourmethod{} transforms GOSDT in two key ways.
First, while searching the space, \ourmethod{} prunes the search space by removing only those subproblems whose objective lower bound is greater than the thresholds defined by the Rashomon set bound, $\theta_\epsilon$, rather than GOSDT's objective-based upper bound.
Second, rather than finding the \textit{single} best model expressed by the dependency graph, \ourmethod{} returns \textit{all models in the Rashomon set defined by $\theta_\epsilon$}.

\subsection{Extraction Algorithm}\label{sec:newalgorithm}
%Our algorithm is divided into two parts: 
\ourmethod{} (Alg. \ref{alg:Rashomon}) constructs the dependency graph using the bounds from Section \ref{sec:notation}, and Extract (Alg. \ref{alg:extract_short}) extracts the Rashomon set from the dependency graph.
%using dynamic programming. 

\noindent \textbf{\ourmethod{} (Algorithm \ref{alg:Rashomon})}: \textbf{Line 1}: Call GOSDT to find the best objective.
%\textbf{Lines 2-4}: Using the best objective from Line 1, compute $\theta_\epsilon$, as defined in Definition \ref{def:eps-rashomon}. Run dynamic-programming-with-bounds to get the dependency graph containing all subproblems that could be in the Rashomon set. 
\textbf{Line 2}: Using the best objective from Line 1, compute $\theta_\epsilon$, as defined in Definition \ref{def:eps-rashomon}. \textbf{Lines 3-6}: Configure and execute (the modified) GOSDT to produce a dependency graph containing all subproblems in the Rashomon set.
\textbf{Lines 7-10}: Initialize the parameters needed by $\mathbf{extract}$ and then call it to extract the Rashomon set from the dependency graph.

\noindent \textbf{Extract (Algorithm \ref{alg:extract_short})}: We present an abbreviated version of the algorithm here with the full details in Appendix \ref{app:algorithm}. 
\textbf{Line 1}: Check to see if we already have the Rashomon set for the given problem and scope; if so, return immediately.
\textbf{Lines 2-3}: If we can make a prediction for the given subproblem that produces loss less than or equal to scope (using Theorems \ref{corollary:lb} and \ref{thm:equiv}), then the leaf for this subproblem should be part of trees in the Rashomon set, so we add it to our Model Set.
\textbf{Lines 4-12}: Loop over each feature and consider splitting the current problem on that feature.
\textbf{Lines 5-6}: Skip over any splits that either do not appear in the dependency graph or whose objectives produce a value greater than scope (using Theorem \ref{corollary:lb_dev}).
\textbf{Lines 7-9}: Find all subtrees for left and right that should appear in trees in the Rashomon set, and construct the set of MSI identifiers for each.
\textbf{Lines 10-12}: Now, take the cross product of the sets of MSI identifiers.
For each pair, determine if the sum of the objectives for those MSI are within scope (using Theorem \ref{corollary:lb_dev}).
If so, we add the left/right pair to the appropriate MSI, creating a new MSI if necessary.
When this loop terminates, all trees in the Rashomon set are represented in MS.

\subsection{Sampling from the Rashomon Set}
\label{sec:sampling}
If we can store the entire Rashomon set in memory, then sampling is unnecessary.
However, sometimes the set is too large to fit in memory (e.g., the COMPAS data set~\cite{LarsonMaKiAn16} with a regularization of 0.005 and a Rashomon threshold that is within 15\% of optimal produces $10^{12}$ trees).
%or too large for interactive visualization tools.
%Additionally, some of these trees might have properties that make them uninteresting in the context of an application.
Our Model Set representation permits easy uniform sampling of the Rashomon set that can be used to explore the set with a much lower computational burden. Appendix~\ref{app:sample} presents a sampling algorithm.

%\begin{minipage}
\begin{algorithm}[ht]
\caption{$\ourmethod(\mathbf{x}, \mathbf{y},\lambda, ~\epsilon)~\rightarrow~R_{\textrm{set}}$} 
\label{alg:Rashomon}
\comment{Given a dataset $(\x,\y)$, $\lambda$, and $\epsilon$, return the set, $R_{\textrm{set}}$, of all trees whose objective is in $\theta_\epsilon$. }\\
1: $opt\leftarrow\textit{gosdt.optimal$\_$obj}(\mathbf{x}, \mathbf{y},\lambda)$ \comment{Use GOSDT to find $opt$,  the objective of the optimal tree.}\\
2: $\theta_\epsilon\leftarrow opt~*~(1+\epsilon)$ \comment{Compute $\theta_\epsilon$, which is the threshold of the Rashomon Set. 
%Then configure GOSDT's lookahead bound, and disable the leaf support and incremental accuracy bounds that would remove trees from the Rashomon set.
}\\
3: \textit{gosdt.best$\_$current$\_$obj} $\leftarrow\theta_\epsilon$ \comment{Set and fix GOSDT's best current objective $R^c$ to $\theta_\epsilon$}\\
% 3: \textit{gosdt.configure$\_$bounds}($\theta_\epsilon$) 
%3: CONFIGURE\_BOUNDS(\textit{gosdt.dp}, $\theta_\epsilon$) 
%3: $dp \leftarrow \textit{configure\_bounds}($\textit{gosdt.}$dp, \theta_\epsilon)$ 
% \comment{Configure all bounds for dependency graph computation}\\
% \comment{Set GOSDT's lookahead bound to $\theta_\epsilon$, and disable the leaf support and incremental accuracy bounds that would remove trees from the Rashomon set.}\textcolor{red}{add complete explanation}\\
\comment{Disable leaf accuracy bound and incremental accuracy bound in GOSDT (see Algorithm 7 in \cite{lin2020generalized}), which are used to find optimal trees but could remove near-optimal trees in the Rashomon set} \\
4: $\textit{gosdt.leaf\_accuracy}\leftarrow False$ \comment{Disable leaf accuracy bound.}\\
5: $\textit{gosdt.incremental\_accuracy}\leftarrow False$ \comment{Disable incremental accuracy bound.}\\
6: $\textit{gosdt.fit}(\mathbf{x}, \mathbf{y},\lambda)$ \comment{Run the branch-and-bound algorithm using new settings of bounds.}\\
7: $G\leftarrow\textit{gosdt.get\_graph}()$ \comment{Return dependency graph obtained from branch-and-bound.}\\
% \textcolor{red}{add complete explanation}\\
% Here we use Theorem \ref{corollary:lb_dev} (and Theorems \ref{thm:equiv} and \ref{corollary:lb})}\\
8: global $MS~\leftarrow~\{\}$
% $ \comment{Initialize $MS$}\\
% $ \comment{See Algorithm \ref{alg:treeFarmFull} for roles of $SI$ and $SOI$}\\
% ; SI~\leftarrow~[]; SOI~\leftarrow~[]$
\comment{Initialize Rashomon Model Set $MS$, which is a global data structure.}\\
9: $P~\leftarrow~ones({|\mathbf{y}|})$ \comment{ Create the subproblem representation for the entire dataset.} \\
10: extract$(G, P, \theta_\epsilon)$ \comment{Fill in $MS$ with trees in the Rashomon set.}\\ 
11: \textbf{return} $MS_P$
\end{algorithm}

\begin{algorithm}[ht]

\caption{$\textrm{extract}(G, sub, scope)$ (Detailed algorithm in Appendix ~\ref{app:algorithm})} \label{alg:extract_short}
\comment{Given a dependency graph, $G$; a subproblem, $sub$; and a maximum allowed objective value, $scope$, populate the global variable $MS$ with the Rashomon set for sub within scope.}\\
\comment{Check if we have already solved the subproblem $sub$. SOLVED is presented in Alg.~\ref{alg:newextract}.} \\
1: \textbf{if} SOLVED$(MS, sub, scope)$ \textbf{then} \textbf{return} \\
2: \textbf{if} $G[sub] \le scope$ \textbf{then} \comment{Check if we should create a leaf. (Theorems \ref{corollary:lb} and \ref{thm:equiv}).}\\
3:$\>\>\>\>\MS~\leftarrow~\MS~\cup~\textit{newLeaf}(sub)$ \comment{\textit{newLeaf} is presented in Alg.~\ref{alg:newextract}.}\\
\comment{Consider splits on each feasible split feature skipping those not in G or with bounds too large.}\\
4: \textbf{for} each feature $j~\in~[1,M]$ \textbf{do}\\
5: $\>\>\>\>sub_l,sub_r~\leftarrow~split(sub,j)$\\
6: $\>\>\>\>$\textbf{if} either $sub_l, sub_r$ \textbf{not in} $G$ \textbf{or} $G[sub_l]+G[sub_r]>scope$ \textbf{then continue}

$\>\>\>\>\>\>\>\>\>$\comment{Find Model Sets Instances for left and right.}\\
7: $\>\>\>\>\textit{extract}(G, sub_l, scope-G[sub_r])$ \\
8: $\>\>\>\>\textit{extract}(G, sub_r, scope-G[sub_l])$ \\
9: $\>\>\>\>\textit{left}, \textit{right}~\leftarrow~\MS_{sub_l}, \MS_{sub_r}$

% \>\>\>\>\>\>\>\>\>\>\\
10: $\>\>\>$\textbf{for} each $(m_l, m_r)\in (\textit{left}\times \textit{right})$ \textbf{do} \comment{Consider cross product of left/right MSI.}

% \>\>\>\>\>\>\>\>\>\>\>\>\comment{Skip trivial extensions (see \ref{alg:newextract})}\\
$\>\>\>\>\>\>\>\>\>\>\>\>\>\>$\comment{Skip trees with objective outside of scope. (Theorem \ref{corollary:lb_dev}).} \\
11: $\>\>\>\>\>\>\>$\textbf{if} $obj(m_l) + obj(m_r) > scope$ \textbf{then continue} \\
12: $\>\>\>\>\>\>\>\MS~\leftarrow~\MS~\cup~ add(sub, obj(m_l) + obj(m_r), m_l, m_r)$ \comment{Add pair to Model Set.}

\comment{$\MS$ now contains all MSI for sub with objective less than or equal to scope.} \\ 
13: \textbf{return}
\end{algorithm}
%\end{minipage}

%Appendix ~\ref{sec:algorithm} presents the algorithm in detail, along with a line by line explanation.

\section{Applications of the Rashomon Set}
Besides allowing users an unprecedented level of control over model choice, having access to the Rashomon set unlocks powerful new capabilities. We present three example applications here.

\subsection{Variable Importance for Models in the Rashomon Set via Model Class Reliance}\label{sec:MCR}
The problem with classical variable importance techniques is that they generally provide the importance of one variable to one model. However, just because a variable is important to one model does not mean that it is important in general. To answer this more general question, we consider model class reliance (MCR) \citep{fisher2019all}. MCR provides the range of variable importance values across the set of all well-performing models. $\textrm{MCR}_-$ and $\textrm{MCR}_+$ denote the lower and upper bounds of this range, respectively. A feature with a large $\textrm{MCR}_-$ is important in all well-performing models; a feature with a small $\textrm{MCR}_+$ is unimportant to every well-performing model. Past work managed to calculate $\textrm{MCR}_-$ only for convex loss in linear models \citep{fisher2019all}. For decision trees, the problem is nonconvex and intractable using previous methods. A recent work \citep{smith2020model} estimates MCR for random forest, leveraging the fact that the forest's trees are grown greedily from the top down; the optimized sparse trees we consider are entirely different, and as we will show, many good sparse trees are difficult to obtain through sampling. 
Since \ourmethod{} can enumerate the whole Rashomon set of decision trees, we can \textit{directly} calculate variable importance for every tree in the Rashomon set and then find its minimum and maximum to compute the MCR, as described in Appendix \ref{app:mcr}. See Section \ref{sec:exp_mcr} for the results. If the Rashomon set is too large to enumerate, sampling (Section \ref{sec:sampling}) can be used to obtain sample estimates for the MCR (shown in Section \ref{sec:exp_mcr_sampling}). 
%shows that the MCR can be estimated efficiently through sampling.

\subsection{Rashomon Sets Beyond Accuracy: Constructing the Rashomon Set for Other Metrics}\label{sec:other_metrics}
For imbalanced datasets, high accuracy is not always meaningful. Metrics such as balanced accuracy and F1-score are better suited for these datasets. We show that, given the Rashomon set constructed using accuracy, we can directly find the Rashomon sets for balanced accuracy and F1-score.   

Let $q^+$ be the proportion of positive samples and $q^-$ be the proportion of negative samples, i.e., $q^+ + q^-=1$. Let $q_{\min} := \min(q^+, q^-)$ and $q_{\max} := \max(q^+, q^-)$. Let $FPR$ and $FNR$ be the false positive and false negative rates. %respectively. %Let $t^*$ be the optimal tree that minimizes the sum of misclassification loss and leaf penalty, i.e. $t^* := \arg\min_{t\in \mathcal{T}} q^- FPR_t + q^+ FNR_t+\lambda H_t$. 
We define the Accuracy Rashomon set as $A_{\theta}:=\{t\in \mathcal{T}: q^- FPR_t + q^+ FNR_t+\lambda H_t \leq \theta\}$, where $\theta$ is the objective threshold of the Accuracy Rashomon set, % be the accuracy Rashomon set. 
similar to $\theta_\epsilon$ in Section \ref{sec:notation}. 
The next two theorems guide us to find the Balanced Accuracy or F1-Score Rashomon set from the Accuracy Rashomon set. 
% Here, $\theta$ serves the same purpose to $\theta_\epsilon$ in Section \ref{sec:notation} for Accuracy Rashomon set, and $\delta$ below for Balanced Accuracy or F1-Score Rashomon sets. 
% Here, $\theta$ defines the Accuracy Rashomon set, just like $\theta_\epsilon$ in Section \ref{sec:notation}. Similarly, 
Their proofs are presented in Appendix \ref{app:theorems}.
We use $\delta$ in these theorems to denote the objective threshold of Balanced Accuracy or F1-Score Rashomon sets.

\begin{theorem}\label{thm:bacc}(Accuracy Rashomon set covers Balanced Accuracy Rashomon set) Let %$\tilde{t}^* := \arg\min_{t\in \mathcal{T}} \frac{1}{2}(FPR_t + FNR_t) + \lambda H_t$ be the optimal tree that minimizes the negation of balanced accuracy objective, and
$B_{\delta}:=\{t \in \mathcal{T}: \frac{FPR_t + FNR_t}{2} + \lambda H_t \leq \delta\}$ be the Balanced Accuracy Rashomon set.
If \[\theta \geq \min\Big(2q_{\max}\delta, q_{\max} + (2\delta-1)q_{\min} + (1- 2q_{\min}) \lambda 2^d\Big),\] where $d$ is the depth limit, then $\forall t \in B_{\delta}, t\in A_{\theta}$.
\end{theorem}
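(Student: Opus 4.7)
The plan is to prove two separate upper bounds on the accuracy objective of any $t \in B_\delta$, and then observe that since both bounds are valid, the accuracy objective must be at most their minimum; hence $\theta$ exceeding the minimum is sufficient for $t \in A_\theta$.

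First, I would WLOG assume $q^+ \geq q^-$ so that $q_{\max} = q^+$ and $q_{\min} = q^-$ (the other case is symmetric by swapping the roles of $FPR$ and $FNR$). The accuracy loss of $t$ is then $q_{\min} FPR_t + q_{\max} FNR_t + \lambda H_t$. Since $q^+ + q^- = 1$, we have $q_{\max} \geq 1/2$ and $q_{\min} \leq 1/2$, which I will use in both bounds.

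For the first bound, I would write
\begin{align*}
q_{\min} FPR_t + q_{\max} FNR_t + \lambda H_t
&\leq q_{\max}(FPR_t + FNR_t) + \lambda H_t \\
&= 2q_{\max}\cdot\tfrac{FPR_t+FNR_t}{2} + \lambda H_t \\
&\leq 2q_{\max}(\delta - \lambda H_t) + \lambda H_t \\
&= 2q_{\max}\delta + \lambda H_t(1 - 2q_{\max}),
\end{align*}
and since $1 - 2q_{\max} \leq 0$ and $H_t \geq 0$, this is at most $2q_{\max}\delta$. For the second bound, I would instead decompose as
\begin{align*}
q_{\min} FPR_t + q_{\max} FNR_t + \lambda H_t
&= q_{\min}(FPR_t + FNR_t) + (q_{\max}-q_{\min})FNR_t + \lambda H_t \\
&\leq 2q_{\min}(\delta - \lambda H_t) + (q_{\max}-q_{\min}) + \lambda H_t \\
&= q_{\max} + (2\delta - 1)q_{\min} + \lambda H_t(1 - 2q_{\min}),
\end{align*}
using $FNR_t \leq 1$ and $\tfrac{FPR_t+FNR_t}{2} \leq \delta - \lambda H_t$. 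Since $1 - 2q_{\min} \geq 0$ and $H_t \leq 2^d$, this is at most $q_{\max} + (2\delta-1)q_{\min} + (1-2q_{\min})\lambda 2^d$.

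Finally, since the accuracy loss is simultaneously at most each of these two quantities, it is at most their minimum; combined with $\theta \geq \min(\cdot,\cdot)$, this gives $t \in A_\theta$. The main subtlety (rather than a hard obstacle) is ensuring the two relaxations use the sign of $1-2q_{\max}$ and $1-2q_{\min}$ correctly so that bounding $H_t$ by $0$ in the first argument and by $2^d$ in the second is justified; the symmetric case $q^- \geq q^+$ is handled identically after relabeling, so no separate argument is needed.
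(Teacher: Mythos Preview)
Your proof is correct and follows essentially the same approach as the paper: both arguments establish the two bounds $2q_{\max}\delta$ and $q_{\max} + (2\delta-1)q_{\min} + (1-2q_{\min})\lambda 2^d$ via the same inequalities (replacing $q_{\min}$ by $q_{\max}$ for the first, and bounding $FNR_t$ or $FPR_t$ by $1$ for the second), then take the minimum. The only cosmetic difference is that the paper motivates the two bounds geometrically via a case split on whether $2(\delta-\lambda H_t)<1$, whereas you derive both bounds directly without cases; your decomposition $q_{\min}(FPR_t+FNR_t)+(q_{\max}-q_{\min})FNR_t$ is a slightly cleaner route to the second bound than the paper's verification that $q_{\max}+(FPR_t+FNR_t-1)q_{\min}-q^-FPR_t-q^+FNR_t\geq 0$.
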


\begin{theorem}\label{thm:f1-score}(Accuracy Rashomon set covers F1-Score Rashomon set) Let %$t^*:=\arg\min_t \frac{q^- FPR_t + q^+ FNR_t}{2q^+ + q^- FPR_t - q^+ FNR_t} + \lambda H_t$ be the optimal tree that minimizes the negation of F1-score and leaf penalty, and
\[F_{\delta}:=\left\{t \in \mathcal{T}: \frac{q^- FPR_t + q^+ FNR_t}{2q^+ + q^- FPR_t - q^+ FNR_t} + \lambda H_t \leq \delta\right\}\] be the F1-score Rashomon set. Suppose $q^+ \in (0,1)$, $q^- \in (0,1)$, and $\delta - \lambda H_t \in (0,1)$.
If $\theta \geq \min\Big(\max\big(\frac{2q^+\delta}{1-\delta}, \frac{2q^+(\delta - \lambda 2^d)}{1-(\delta - \lambda 2^d)} + \lambda 2^d\big),  \mathbbm{1}[\delta < \sqrt{2}-1]\frac{2\delta}{1+\delta} + \mathbbm{1}[\delta \geq \sqrt{2}-1](\delta + 3 -2\sqrt{2})\Big)$, then $\forall t \in F_{\delta}$, $t\in A_{\theta}$. 
\end{theorem}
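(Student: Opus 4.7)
}
The plan is to rewrite the F1-score constraint as a linear inequality in two per-class error quantities, derive two independent upper bounds on the accuracy objective (one retaining the prevalence $q^+$, one not), optimize each over $H_t$, and then conclude by taking the tighter of the two.

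First I would change variables to $u:=q^-FPR_t$ and $v:=q^+FNR_t$, so that the misclassification loss is exactly $u+v$, and the ranges are $u\in[0,q^-]$ and $v\in[0,q^+]$. Setting $\delta':=\delta-\lambda H_t$ and noting that the denominator $2q^++u-v\ge q^+>0$ so that clearing it is sign-preserving, the F1-Rashomon inequality rearranges to the linear constraint
\begin{equation*}
u(1-\delta')+v(1+\delta')\;\le\;2q^+\delta'.
\end{equation*}

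Second, I would upper bound $u+v$ by a vertex analysis of the polytope cut out by this inequality and the box $[0,q^-]\times[0,q^+]$. Enumerating vertices, the maximum of $u+v$ is attained at $(u,v)=\left(\tfrac{2q^+\delta'}{1-\delta'},0\right)$ when $\delta'\le\tfrac{q^-}{1+q^+}$ (value $\tfrac{2q^+\delta'}{1-\delta'}$) and at $(u,v)=\left(q^-,\tfrac{2q^+\delta'-q^-(1-\delta')}{1+\delta'}\right)$ when $\delta'\ge\tfrac{q^-}{1+q^+}$ (value $\tfrac{2\delta'}{1+\delta'}$). A short cross-regime check establishes that \emph{each} of the two expressions is a valid upper bound on $u+v$ in \emph{both} regimes: in the first regime the Case-1 condition rewrites as $q^+\le\tfrac{1-\delta'}{1+\delta'}$, which when substituted into $\tfrac{2q^+\delta'}{1-\delta'}$ yields $\tfrac{2\delta'}{1+\delta'}$; in the second regime the ordering $\tfrac{2\delta'}{1+\delta'}\le\tfrac{2q^+\delta'}{1-\delta'}$ is direct.

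Third, I would maximize each bound plus $\lambda H_t$ over $x:=\lambda H_t$. For the $q^+$-dependent bound $g_1(x):=\tfrac{2q^+(\delta-x)}{1-(\delta-x)}+x$, direct computation gives $g_1''(x)=\tfrac{4q^+}{(1-\delta+x)^3}>0$, so $g_1$ is convex and its maximum over $x\in[0,\lambda 2^d]$ lies at an endpoint, yielding $T_1=\max\!\bigl(g_1(0),g_1(\lambda 2^d)\bigr)$. For the $q^+$-free bound $g_2(x):=\tfrac{2(\delta-x)}{1+(\delta-x)}+x$, one obtains $g_2''(x)=-\tfrac{4}{(1+\delta-x)^3}<0$, so $g_2$ is concave with unique global maximizer $x^\star=1+\delta-\sqrt{2}$. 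If $\delta\ge\sqrt{2}-1$ then $x^\star\ge 0$ and $\max g_2=g_2(x^\star)=\delta+3-2\sqrt{2}$; otherwise $x^\star<0$ and $g_2$ is decreasing on $[0,\infty)$, giving $\max g_2=g_2(0)=\tfrac{2\delta}{1+\delta}$. Assembling the two cases through the indicators produces $T_2$.

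Since both $T_1$ and $T_2$ are upper bounds on $u+v+\lambda H_t$ over every $t\in F_\delta$, so is $\min(T_1,T_2)$; hence $\theta\ge\min(T_1,T_2)$ implies $q^-FPR_t+q^+FNR_t+\lambda H_t\le\theta$, i.e., $t\in A_\theta$. I expect the main obstacle to be the cross-regime vertex analysis in the second step: one must correctly identify the active vertex in each regime of $\delta'$ and then verify that the ``other'' candidate expression remains a valid upper bound in the opposite regime. The convexity/concavity computations and the case split on $\delta$ versus $\sqrt{2}-1$ for $g_2$ are more routine by comparison.
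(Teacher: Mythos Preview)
Your proposal is correct and follows essentially the same overall architecture as the paper: derive two universal upper bounds on the accuracy loss from the F1 constraint, maximize each over $H_t$, and take the minimum. The differences are in the execution of the two intermediate steps, and in each case your treatment is somewhat cleaner.

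For the two loss bounds, the paper works in the $(FPR,FNR)$ plane and establishes $q^-FPR_t+q^+FNR_t\le \frac{2q^+f1_t}{1-f1_t}$ and $q^-FPR_t+q^+FNR_t\le \frac{2f1_t}{1+f1_t}$ directly, by dividing through by $1-FNR_t$ and by $q^++q^-FPR_t$ respectively; these manipulations are slick and hold unconditionally, so no regime split or cross-regime check is needed. Your change of variables to $(u,v)=(q^-FPR_t,q^+FNR_t)$ followed by a vertex analysis of the resulting linear program is more systematic and makes the geometry transparent, at the cost of the extra cross-regime verification you correctly identify as the delicate step.

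For the maximization of $g_1$ over $H_t$, your convexity argument via $g_1''>0$ immediately gives the endpoint maximum, whereas the paper locates the stationary point, checks it is a minimum, and then argues piecewise monotonicity under the extra assumption $0\le\delta-1+\sqrt{2q^+}\le\lambda 2^d$ (the remaining cases are not explicitly handled there, though they also give an endpoint maximum). Your argument covers all cases uniformly. The treatment of $g_2$ is essentially identical in both.
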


We can use Theorems \ref{thm:bacc} and \ref{thm:f1-score} above to find all trees in the Balanced Accuracy or F1-score Rashomon set directly by searching through the Accuracy Rashomon set with objective threshold $\theta$ that satisfies the inequality constraint. In our implementation, we first use GOSDT to find the optimal tree w.r.t$.$ misclassification loss and then calculate its objective w.r.t$.$ balanced accuracy and F1-score. Then we set $\delta$ and the corresponding $\theta$. This guarantees that the Rashomon sets for balanced accuracy and F1-score objective are not empty. %at least one tree has good objective.
Experiments in Section \ref{sec:exp:other_metrics} illustrate this calculation.

\subsection{Sensitivity to Missing Groups of Samples}\label{sec:sensitivity}
Though sparse decision trees are usually robust (since predictions are made separately in each leaf), we are also interested in how a sample or a group of samples influences all well-performing models (i.e., whether this subset of points is \textit{influential} \cite{koh2017understanding}.) Influence functions cannot be calculated for decision trees since they require differentiability. 
%and techniques like Cook's distance \citep{cook1977detection} measure properties of algorithms, not broader properties of the data.
%This idea can be connected to influential instances \citep{koh2017understanding}. An instance (sample) is \textit{influential} if deleting the instance considerably changes the parameters or predictions of the model. For example, in linear regression, outliers are influential instances. They can change the linear model a lot. There are several ways to identify the influential instances. In linear regression, we can calculate Cook's distance \citep{cook1977detection} to measure the effect of deleting an instance on model predictions. But this method usually requires retraining the model after removing an instance. In neural networks, influence functions can be used without retraining the model, but it requires the loss function to be twice differentiable with respect to the model parameters. Since the loss function for decision trees is not differentiable, influence functions cannot be used. However, 
The following two theorems help us find optimal or near-optimal trees for a dataset in which a group of instances has been removed by searching through the Rashomon set obtained from the full dataset.
Here, we consider the misclassification loss.

%Let $t^* = \arg\min_{t\in \mathcal{T}} Obj(t, \x, \y)$ be the optimal tree trained on the original full dataset. %$\{\x, \y\}$ and $Obj(t^*, \x, \y) = \frac{1}{n}\sum_{i=1}^n \mathbbm{1}[y_i \neq \hat{y}_i^{t^*}] + \lambda H_{t^*}$ be the risk of the optimal tree $t^*$. 
Let $\tilde{t}^*$ be the optimal tree trained on  the reduced dataset $\{\x_{[\backslash K, \cdot]}, \y_{[\backslash K]}\}$ where $K$ is a set of indices of instances that we remove from the original dataset. Let $|K|$ denote the cardinality of the set $K$. %Let $$R_{set}(\epsilon, t^*, \x,\y):=\{t: Obj(t, \x, \y) \leq (1+\epsilon)\times Obj(t^*, \x, \y)\}$$ be the $\epsilon$-Rashomon set on the original full dataset. 
Overloading notation to include the dataset, let $ R_{set}(\epsilon, t^*, \mathcal{T},\x,\y) = R_{set}(\epsilon, t^*, \mathcal{T})$ (see Eq \eqref{def:eps-rashomon}) be the Rashomon set of the original dataset, 
%The $\epsilon$-Rashomon set of the original full dataset is defined by Eq \ref{eq:rset_def},
where $t^*$ is the optimal tree trained on the original dataset. We define the $\epsilon'$-Rashomon set on the reduced dataset as $$R_{set}(\epsilon', \tilde{t}^*,\mathcal{T}, \x_{[\backslash K, \cdot]}, \y_{[\backslash K]})\h2:=\h2\left\{t \in \mathcal{T}: Obj(t, \x_{[\backslash K, \cdot]}, \y_{[\backslash K]}) \h2\leq\h2 (1+\epsilon')\h2\times\h2 Obj(\tilde{t}^*, \x_{[\backslash K, \cdot]}, \y_{[\backslash K]})\right\}.$$

\begin{theorem}\label{thm:opt_tree_rm_instance}(Optimal tree after removing a group of instances is still in full-dataset Rashomon set)
If $\epsilon \geq \frac{2|K|}{n\times Obj(t^*,\x,\y)}$, then $\tilde{t}^* \in R_{set}(\epsilon, t^*, \mathcal{T},\x,\y)$.  
\end{theorem}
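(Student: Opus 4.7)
The plan is to show that for any fixed tree $t$, the objective value changes by at most $|K|/n$ when we delete the instances indexed by $K$ from the dataset. Applying this stability bound twice, once to $\tilde{t}^*$ and once to $t^*$, and sandwiching them with the optimality of $\tilde{t}^*$ on the reduced data, will give $Obj(\tilde{t}^*,\x,\y) \le Obj(t^*,\x,\y) + 2|K|/n$, from which the hypothesis on $\epsilon$ immediately places $\tilde{t}^*$ in the Rashomon set.

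First I would establish the stability lemma. Let $n' = n - |K|$, and for any tree $t$ let $L_K(t) := \sum_{i \in K}\mathbf{1}[\hat{y}_i \neq y_i]$ and $L_{\backslash K}(t) := \sum_{i \notin K}\mathbf{1}[\hat{y}_i \neq y_i]$ denote the unnormalized error counts inside and outside $K$. The sparsity penalty $\lambda H_t$ is identical in the two objectives, so subtracting and combining fractions gives
\[
Obj(t,\x,\y) - Obj(t,\x_{[\backslash K,\cdot]},\y_{[\backslash K]}) \;=\; \frac{L_K(t)}{n} \;-\; \frac{|K|\,L_{\backslash K}(t)}{n\,n'}.
\]
Since $L_K(t)\in[0,|K|]$ and $L_{\backslash K}(t)\in[0,n']$, each of the two terms lies in $[0,\,|K|/n]$, so the signed difference has absolute value at most $|K|/n$.

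Next I would apply the lemma twice to obtain
\[
Obj(\tilde{t}^*,\x,\y) \le Obj(\tilde{t}^*,\x_{[\backslash K,\cdot]},\y_{[\backslash K]}) + \tfrac{|K|}{n}, \qquad Obj(t^*,\x_{[\backslash K,\cdot]},\y_{[\backslash K]}) \le Obj(t^*,\x,\y) + \tfrac{|K|}{n}.
\]
Optimality of $\tilde{t}^*$ on the reduced dataset yields $Obj(\tilde{t}^*,\x_{[\backslash K,\cdot]},\y_{[\backslash K]}) \le Obj(t^*,\x_{[\backslash K,\cdot]},\y_{[\backslash K]})$. Chaining these three inequalities delivers $Obj(\tilde{t}^*,\x,\y) \le Obj(t^*,\x,\y) + 2|K|/n$. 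Under the hypothesis $\epsilon \ge 2|K|/(n\,Obj(t^*,\x,\y))$, the additive slack $2|K|/n$ is at most $\epsilon\,Obj(t^*,\x,\y)$, so $Obj(\tilde{t}^*,\x,\y) \le (1+\epsilon)\,Obj(t^*,\x,\y)$, placing $\tilde{t}^*$ in $R_{set}(\epsilon, t^*, \mathcal{T}, \x, \y)$.

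The main bookkeeping obstacle will be the mismatched denominators in the loss: $\ell$ normalizes by the size of the dataset passed in, so the full and reduced losses divide by $n$ and $n'$ respectively, which makes a naive ``difference of losses is at most $|K|/n$'' argument subtly incorrect. I would sidestep this by tracking unnormalized error counts as above and only dividing at the very end. The factor of $2$ in the theorem is then transparent; it is precisely the sum of the two invocations of the stability lemma, one for each direction of comparison.
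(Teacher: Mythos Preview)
Your proposal is correct and follows essentially the same approach as the paper: both compute the signed difference $Obj(t,\x,\y)-Obj(t,\x_{[\backslash K,\cdot]},\y_{[\backslash K]})$, bound it by $|K|/n$ in absolute value via the same unnormalized-count decomposition, apply this once to $t^*$ and once to $\tilde t^*$, and sandwich with the optimality of $\tilde t^*$ on the reduced data. Your packaging of the bound as a single stability lemma applied twice is slightly cleaner than the paper's two parallel derivations, but the argument is identical.
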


Now we consider not only the optimal tree on the reduced dataset but also the near-optimal trees. 

\begin{theorem}\label{thm:rset_rm_instance}(Rashomon set after removing a group of instances is within full-dataset Rashomon set)
If $\epsilon \geq \epsilon' + \frac{(2+\epsilon')|K|}{n \times Obj(t^*, \x, \y)}$, then $\forall t \in R_{set}(\epsilon', \tilde{t}^*, \mathcal{T},\x_{[\backslash K, \cdot]}, \y_{[\backslash K]}), t\in R_{set}(\epsilon, t^*, \mathcal{T}, \x, \y)$.  
\end{theorem}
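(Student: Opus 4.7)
The plan is to upper-bound the full-dataset objective of any candidate tree by its reduced-dataset objective plus a small error, do the analogous comparison between the two optima, and then chain these bounds through the defining inequality of $R_{set}(\epsilon', \tilde{t}^*, \mathcal{T}, \x_{[\backslash K,\cdot]}, \y_{[\backslash K]})$. Since the leaf penalty $\lambda H_t$ does not depend on the data, only the misclassification loss moves under removal of $K$, and careful handling of the two normalizations $1/n$ versus $1/(n-|K|)$ shows it can move by at most $|K|/n$ in either direction.

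Concretely, I would first establish that for \emph{any} tree $t$,
\[
Obj(t,\x,\y) \;\le\; Obj(t,\x_{[\backslash K,\cdot]},\y_{[\backslash K]}) + |K|/n.
\]
Writing $m$ for the mistakes $t$ makes outside $K$ and $m_K$ for those inside $K$, the left side is $(m+m_K)/n + \lambda H_t$ and the right side is $m/(n-|K|) + \lambda H_t + |K|/n$. The identity $m/(n-|K|) - m/n = m|K|/[n(n-|K|)] \ge 0$ together with $m_K \le |K|$ makes this inequality immediate.

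Next, an analogous calculation with the roles of the two normalizations reversed (using $m \le n-|K|$ to bound the slack) yields $Obj(t^*, \x_{[\backslash K,\cdot]}, \y_{[\backslash K]}) \le Obj(t^*,\x,\y) + |K|/n$. Combined with the optimality of $\tilde t^*$ on the reduced dataset, namely $Obj(\tilde t^*, \x_{[\backslash K,\cdot]}, \y_{[\backslash K]}) \le Obj(t^*, \x_{[\backslash K,\cdot]}, \y_{[\backslash K]})$, this gives
\[
Obj(\tilde t^*, \x_{[\backslash K,\cdot]}, \y_{[\backslash K]}) \;\le\; Obj(t^*,\x,\y) + |K|/n.
\]

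Finally, for any $t \in R_{set}(\epsilon', \tilde{t}^*, \mathcal{T}, \x_{[\backslash K,\cdot]}, \y_{[\backslash K]})$, I apply the first lemma, then the reduced-Rashomon condition $Obj(t, \x_{[\backslash K,\cdot]}, \y_{[\backslash K]}) \le (1+\epsilon')\,Obj(\tilde t^*, \x_{[\backslash K,\cdot]}, \y_{[\backslash K]})$, then the second lemma, to obtain
\[
Obj(t,\x,\y) \;\le\; (1+\epsilon')\,Obj(t^*,\x,\y) + (2+\epsilon')\,|K|/n.
\]
Requiring the right-hand side to be at most $(1+\epsilon)\,Obj(t^*,\x,\y)$ and solving for $\epsilon$ reproduces exactly the threshold $\epsilon \ge \epsilon' + (2+\epsilon')|K|/[n\cdot Obj(t^*,\x,\y)]$ stated in the hypothesis. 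The main obstacle is the careful bookkeeping with the two sample sizes $n$ and $n-|K|$; once the key identity above is isolated, the remainder is just non-negativity and the definition of $\tilde t^*$, and Theorem~\ref{thm:opt_tree_rm_instance} is recovered as the special case $\epsilon'=0$.
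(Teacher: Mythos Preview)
Your proposal is correct and follows essentially the same route as the paper: both arguments establish the two one-sided bounds $Obj(t,\x,\y)\le Obj(t,\x_{[\backslash K,\cdot]},\y_{[\backslash K]})+|K|/n$ and $Obj(t^*,\x_{[\backslash K,\cdot]},\y_{[\backslash K]})\le Obj(t^*,\x,\y)+|K|/n$, invoke the optimality of $\tilde t^*$ on the reduced data, and chain through the $(1+\epsilon')$ Rashomon inequality to reach $(1+\epsilon')\,Obj(t^*,\x,\y)+(2+\epsilon')|K|/n$. Your presentation is slightly more economical in isolating the identity $m/(n-|K|)-m/n=m|K|/[n(n-|K|)]$ once, whereas the paper rederives the analogous extreme-value computation separately for each bound, but the logical skeleton is identical.
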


\section{Experiments}

 %   \item For decision trees, what does the MCR look like for real datasets?   (Section \ref{???})
  %  \item How quickly does the sampled MCR converge to the true MCR in practice?  (Section \ref{???})

  %  \item After removing a group of samples, how does the Rashomon set compare to the full-dataset Rashomon set?  (Section \ref{???})
%\end{enumerate}

Our evaluation answers the following questions:
1. How does \ourmethod{} compare to baseline methods for searching the hypothesis space? (\S\ref{sec:compare_baselines}),
2. How quickly can we find the entire Rashomon set? (\S\ref{sec:compare_baselines}),
% 3. How does the size of the Rashomon set change with respect to $\lambda$, $\epsilon$, and the dataset?
% (\S\ref{sec:rset_size}),
3. What does the Rashomon set look like?  What can we learn about its structure? (\S\ref{sec:rset_visual}),
4. What does MCR look like for real datasets? (\S\ref{sec:exp_mcr}),
5. How do balanced accuracy and F1-score Rashomon sets compare to the accuracy Rashomon set?  (\S\ref{sec:exp:other_metrics}), and
6. How does removing samples affect the Rashomon set? (\S\ref{sec:exp:influence}).

Finding the Rashomon set is computationally difficult due to searching an exponentially growing search space. 
%and storing all of its members below a certain loss threshold, 
%In this first paper on Rashomon sets, our goal is to solve this problem for realistically-sized datasets for medical or criminal justice (hundreds or thousands of units, $\sim$15  features). We present a larger dataset  for scalability experiments.
We use datasets from the UCI Machine Learning Repository \citep[Car Evaluation, Congressional Voting Records, Monk2, and Iris, see][]{Dua:2019}, a penguin dataset \citep{gorman2014ecological}, a criminal recidivism dataset \citep[COMPAS, shared by][]{LarsonMaKiAn16}, the Fair Isaac (FICO) credit risk dataset \citep{competition} used for the Explainable ML Challenge, and four coupon datasets (Bar, Coffee House, Cheap Restaurant, and Expensive Restaurant) ~\cite{wang2017bayesian} that come from surveys. More details are in Appendix \ref{app:datasets}.

\subsection{Performance and Timing Experiments}\label{sec:compare_baselines}
Our method directly constructs the Rashomon set of decision trees of a dataset.
%given a regularization parameter $\lambda$ and multiplier $\epsilon$. While, 
While, to the best of our knowledge, there is no previous directly comparable work,
%directly extracting all well-performed decision trees,
there are several methods one might naturally consider to find this set. 
%In particular, 
One might use methods that sample from the high-posterior region of tree models,
%or methods designed to produce a diverse set of trees to combine for an ensemble, 
though we would not know how many samples we need to extract the full Rashomon set.
Thus, the first baseline method we consider is sampling trees from the posterior distribution of Bayesian Additive trees \citep{chipman1998bayesian, chipman2010bart}. We used the R package BART \citep{bart_package}, setting the number of trees in each iteration to 1. %We sampled models from the posterior with 10-iteration intervals between draws. To get sparser models, we used a sparse Dirichlet prior. 
Many ensemble methods combine a diverse set of trees. The diversity in this set comes from fitting on different subsets of data. Trees produced by these methods would be natural approaches for finding the Rashomon set. We thus generated trees from three different methods (Random Forest \citep{breiman2001random}, CART \citep{breiman1984classification}, and GOSDT \citep{lin2020generalized}), on many subsets of our original data. %To get a set of trees from Random Forest and CART, we used RandomForestClassifier from scikit-learn \citep{scikit-learn} and set min\_samples\_split to $\max(\lceil 2n\lambda\rceil, 2)$, min\_samples\_leaf to $\lceil n\lambda\rceil$, and max\_leaf\_nodes=$\lfloor 1 / (2 \lambda)\rfloor$. We set max\_features to ``auto'' for Random Forest and ``None'' for CART. For GOSDT, we sampled 75\% of the dataset size, with replacement to improve diversity, and fit an optimal tree on the sampled data. To improve performance for BART, Random Forest, and CART with sampling, we collapsed trivial splits (nodes that have two leaves with the same predicted label) into a single node. We record the time used to construct the Rashomon set and ran the baselines for a similar time to sample trees. Thus, all methods were given the same amount of time to produce good trees. 

\begin{figure*}
    \centering
    \includegraphics[height=0.25\textwidth]{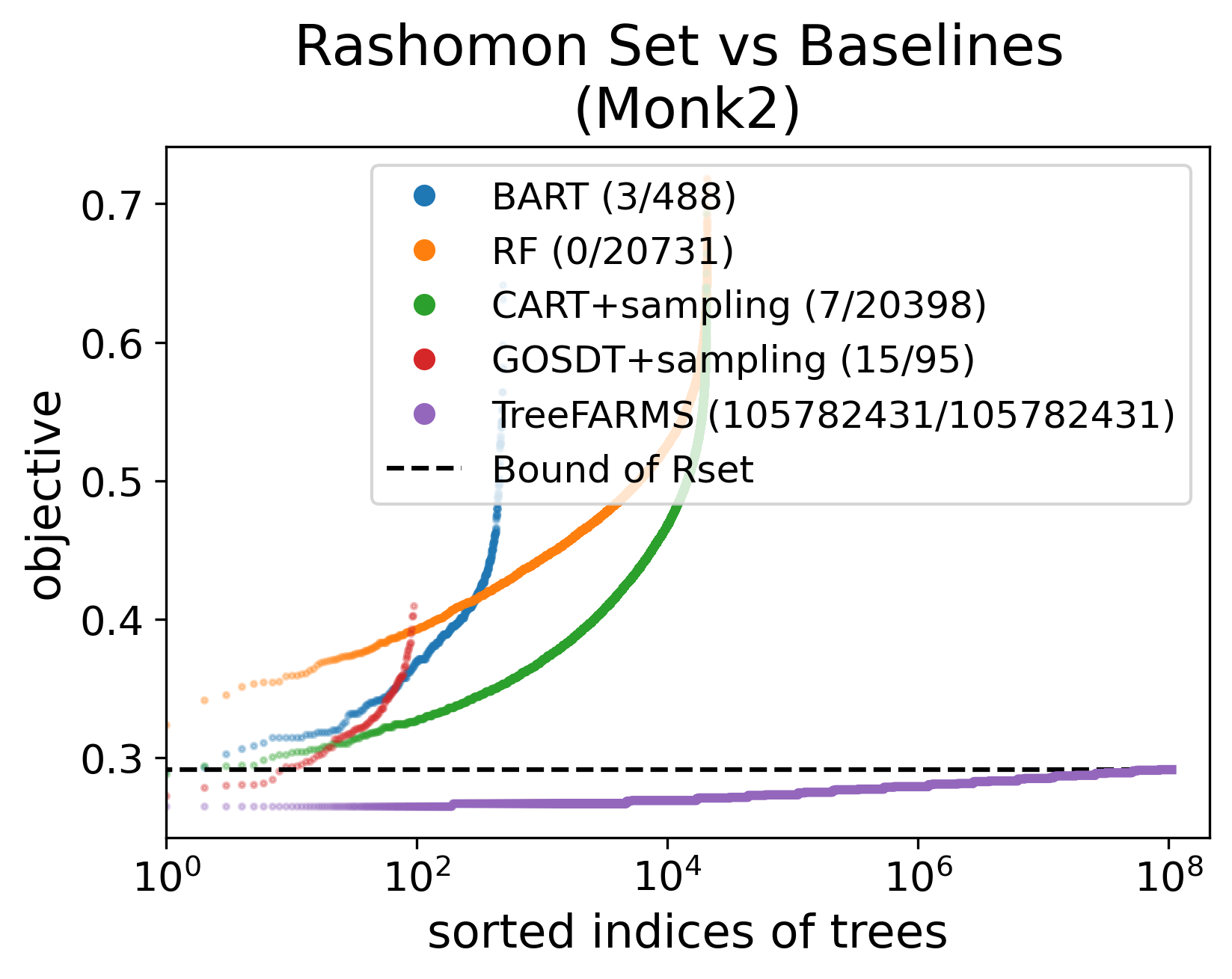}
     \includegraphics[height=0.25\textwidth]{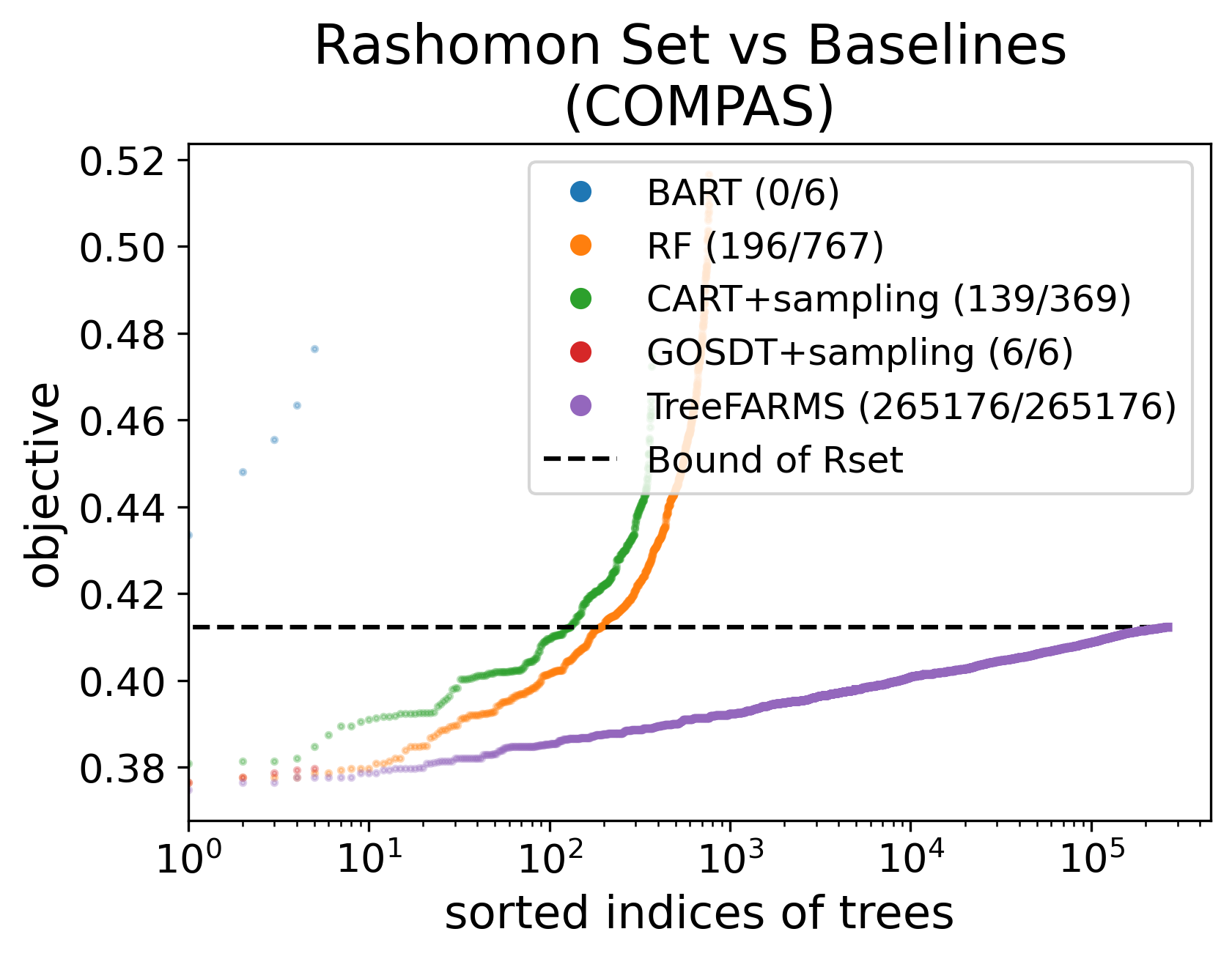}
     \includegraphics[height=0.25\textwidth]{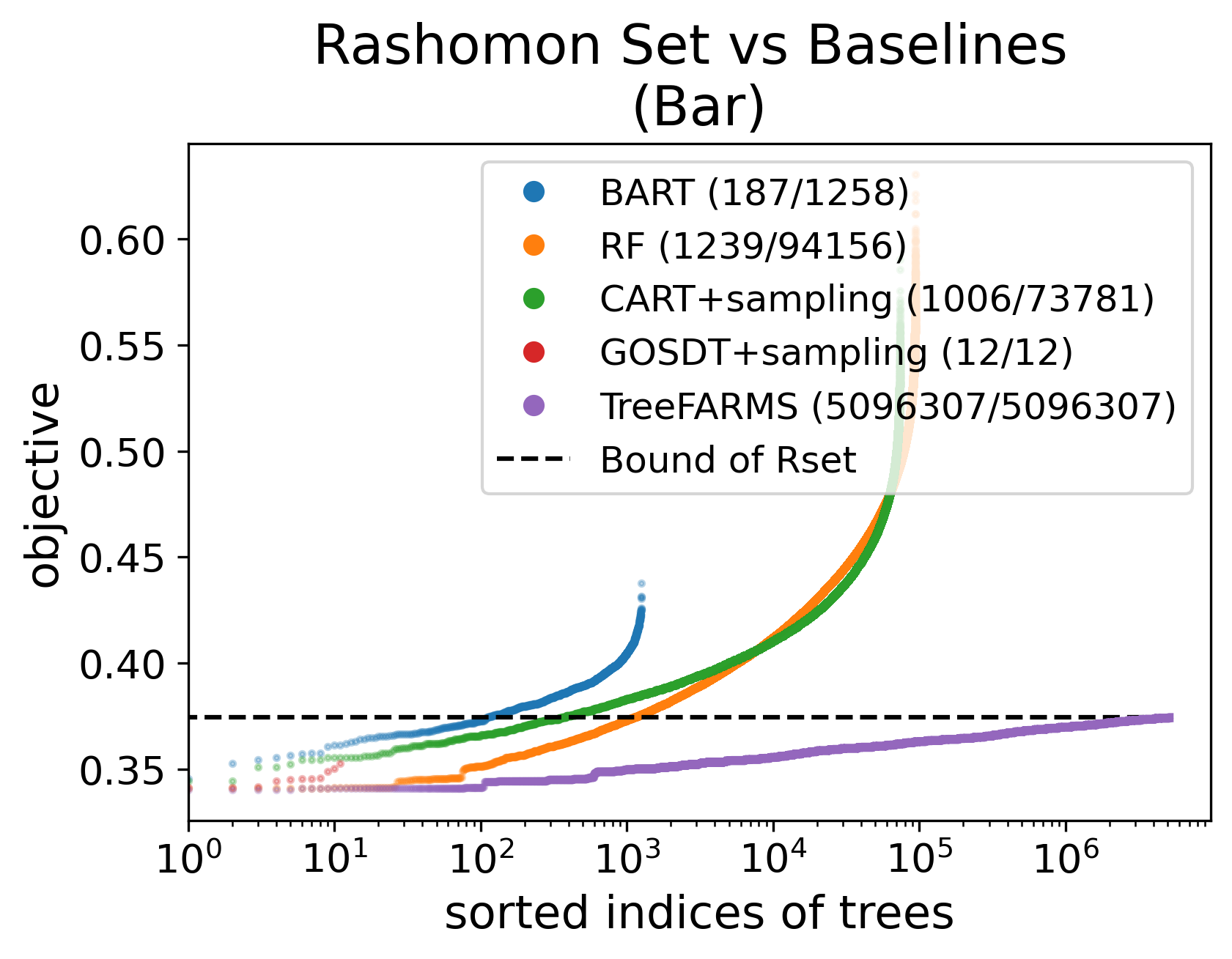}
    \caption{Comparison of trees in the Rashomon set ($\lambda=0.01, \epsilon=0.1$) and trees generated by baselines. Trees in the Rashomon set have objective below the dashed line. (A/B) in legend represents that A trees among B trees trained by the baseline are in the Rashomon set. For example, RF (196/767) means 196 trees among 767 distinct trees trained by Random Forest are in the Rashomon set. Indices are in log scale to accommodate differences in orders of magnitude of tree counts among methods. }
    \label{fig:comp_baselines}
\end{figure*}

Figure \ref{fig:comp_baselines} compares the Rashomon set with the four baselines on the Monk2, COMPAS, and Bar datasets. We show the number of distinct trees versus the objective value. We sort the trees with respect to their objective values, so all methods show an increasing trend. \ourmethod{} (in \textcolor{violet}{purple}) found \textbf{\textit{orders of magnitude more distinct trees in the Rashomon set}} than any of the four baselines on all of the datasets. The baseline methods tend to find many duplicated trees. For example, in 46 seconds, BART finds only 488 distinct trees on Monk2, whereas \ourmethod{} found $10^8$. Other methods find $\sim$20,000 distinct trees. Most trees found by the baselines are not even in the Rashomon set, i.e., most of their trees have objective values higher than the threshold of the Rashomon set. 

% The three lower subfigures show the distribution of objective values of trees from each method. An interesting finding is that GOSDT, which finds provably optimal trees for a dataset, does not obtain many distinct trees that are in the Rashomon set: instead, many subsamples produce either the same optimal solutions or trees that are outside of the full dataset's Rashomon set. For instance, in the bottom left subfigure, the distribution of objective values for GOSDT with sampled data (red bars) is shifted to the right relative to \ourmethod{} (purple bars), and only 2 of the 81 trees generated by GOSDT+sampling were in the Rashomon set for the full dataset. Appendix \ref{app:baselines} presents results for more datasets. 

Figure \ref{fig:lambda_epsi_size} shows run times, specifically, Fig$.$ \ref{fig:lambda_epsi_size}(c) shows that \ourmethod{} \textit{\textbf{finds trees in the Rashomon set at a dramatically faster rate}} than the baselines. % the best average run time to find a distinct tree in the Rashomon set compared to the baselines. 
Appendix  \ref{app:exp_scalability} has more results. 
%Though Random Forest and CART with sampling run more quickly overall, they find only a few trees in the Rashomon set. GOSDT with sampling finds relatively more trees in the Rashomon set, but it runs for much longer than constructing the Rashomon set directly.

The takeaway from this experiment is that \textbf{\textit{natural baselines find at best a tiny sliver of the Rashomon set}}. Further, \textbf{\textit{the way we discovered this was to develop a method that actually enumerates the Rashomon set}}. We would not have known, using any other way we could think of, that sampling-based approaches barely scratch the surface of the Rashomon set. 
%Hence, the method presented here provides an unprecedented view into the set of accurate-and-sparse decision trees that could not be found using other approaches. We now leverage this new lens to study the Rashomon set in the following subsections.

\begin{figure*}[!ht]
\centering

\includegraphics[width=0.95\textwidth]{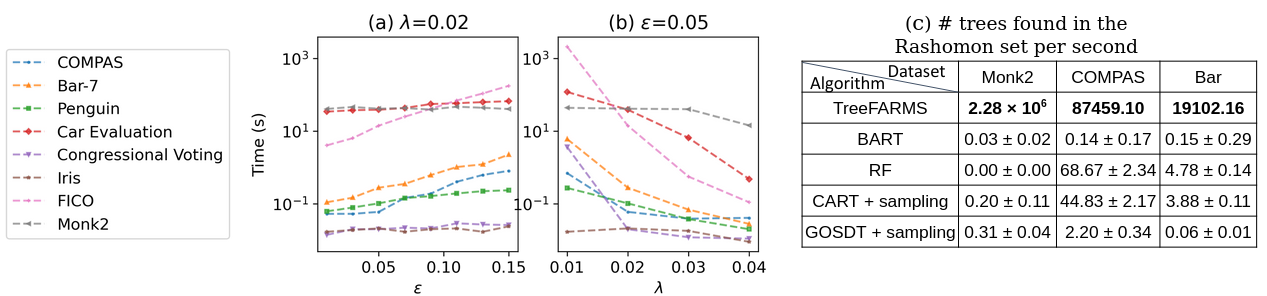}
% \begin{table}[ht]
% \footnotesize
%     \centering
%     \begin{tabular}{|c|c|c|c|c|c|}\hline
%         & AGAST & BART & RF & CART & GOSDT  \\\hline
%         Monk2 & \textbf{8.12e-5} & $\varnothing$ & $\varnothing$ & $\varnothing$ & 33.12\\\hline
%         COMPAS & \textbf{4.00e-4} & $\varnothing$ & 0.068 & 0.20 & 6.06\\\hline
%         Bar & \textbf{0.0145} & 18.91 & 1.00 & 1.88 & 139.71\\\hline
%     \end{tabular}
%     %\caption{Average search time (seconds) to find a distinct tree in the Rashomon set. $\varnothing$ indicates that none of the trees found by the algorithm are in the Rashomon set.}
%     \label{tab:time_baselines}
% \end{table}
\caption{
(a), (b) Run times for computing Rashomon sets as a function of $\epsilon$ and $\lambda$ respectively. 
% We did not report errors as \ourmethod{} is deterministic.
(c) Number of trees in the Rashomon set found by each method per second. The total time is approximately 46, 3, and 270 seconds for Monk2, COMPAS, and Bar respectively. 
%For Without Optimization entry, we ran \ourmethod without utilizing proposed data structure. It found the entire Rashomon set yet ran longer than \ourmethod. 
\ourmethod{} is the only algorithm \textit{guaranteed} to find all trees in the Rashomon set. (Appendix \ref{app:exp_scalability} has results for all the datasets.)
}
\label{fig:lambda_epsi_size}
\end{figure*}

\subsection{Variable Importance: Model Class Reliance}\label{sec:exp_mcr}
Without \ourmethod{}, it has not been possible to compute overall variable importance calculations such as MCR for complex function classes with interaction terms such as decision trees. Here, we exactly compute MCR on the COMPAS and Bar datasets (see Figure \ref{fig:tentative_mcr}). For the COMPAS dataset (left subfigure)features related to prior counts generally have high $\textrm{MCR}_+$, which means these features are very important for some trees in the Rashomon set. For the Bar dataset (right subfigure), features ``Bar\_1-3'' and ``Bar\_4-8'' have dominant $\textrm{MCR}_+$ and $\textrm{MCR}_-$ compared with other features, indicating that for all well-performing trees, these features are the most important. This makes sense, since people who go to bars regularly would be likely to accept a coupon for a bar.

\begin{figure}
    \includegraphics[width=0.48\textwidth]{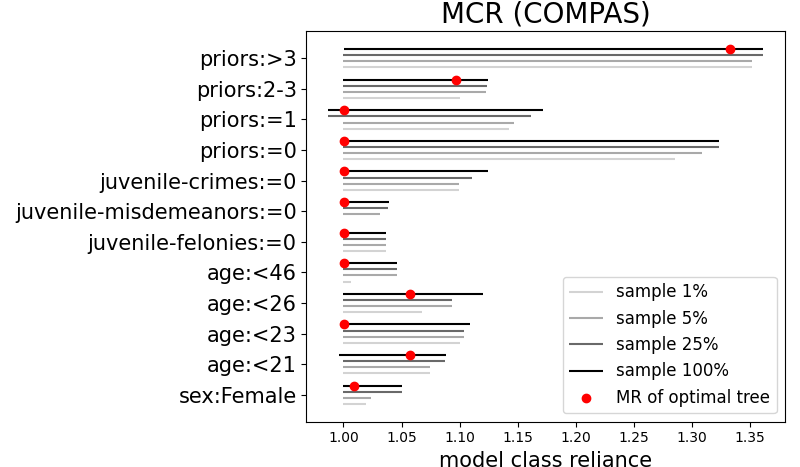}
    \includegraphics[width=0.48\textwidth]{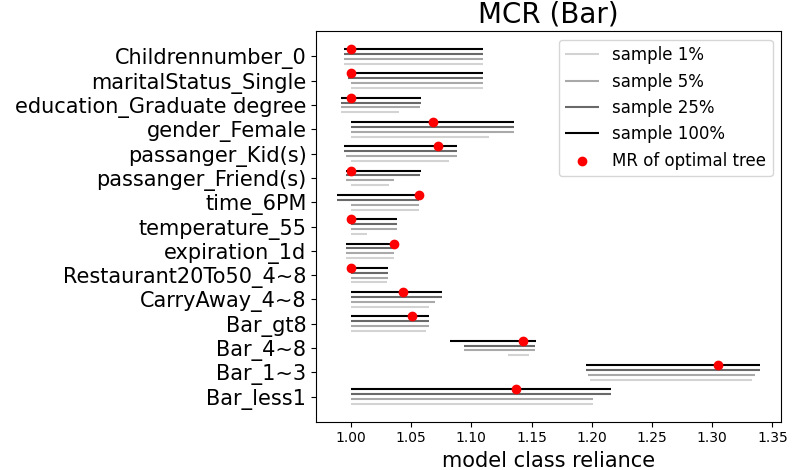}
    \caption{Variable Importance: Model class reliance on the COMPAS and Bar ($\lambda=0.01, \epsilon=0.05$). Red dots indicate the model reliance (variable importance) calculated from the optimal tree. 
    Each line connects $\textrm{MCR}_-$ and $\textrm{MCR}_+$ showing the range of variable importance among all good models.
    }
    \label{fig:tentative_mcr}
\end{figure}

\textbf{Sampling for MCR}: 
\label{sec:exp_mcr_sampling}
Sampling has a massive memory benefit over storing the whole Rashomon set, because we do not need to store the samples.
Since MCR requires computing extreme value statistics (max and min over the Rashomon set), it poses a test for the sampling technique posed in Section \ref{sec:sampling}. 
%Our sampling technique gives uniform samples over the Rset. This means statistics such as averages converge quickly, but extreme value statistics may take a lot of samples.
Figure \ref{fig:tentative_mcr} shows sampled MCR  and its convergence to true MCR. 25\% of the samples are usually sufficient. More results are shown in Appendix \ref{app:exp_mcr}.

% \begin{figure}
%     \centering
%     \includegraphics[width=0.49\textwidth]{}
%     % \includegraphics[scale=0.3]{figures/mcr_bar.png}
%     \includegraphics[width=0.49\textwidth]{}
%     \caption{sampled MCR COMPAS: $\lambda = 0.01, \epsilon=0.10$}
%     \label{fig:sampled_MCR}
% \end{figure}

\subsection{Balanced Accuracy and F1-score Rashomon set from Accuracy Rashomon set} \label{sec:exp:other_metrics}

As discussed in Section \ref{sec:other_metrics}, Rashomon sets of balanced accuracy and F1-Score are contained in the Rashomon set for accuracy. Figure \ref{fig:bacc-f1} shows trees in the Accuracy Rashomon set, which covers the Balanced Accuracy Rashomon set (left) and F1-score Rashomon set (right). The black dashed line indicates the corresponding objective thresholds %of the balanced accuracy objective (left) and F1-score objective (right) 
and blue dots below the dashed line are trees in these Rashomon sets. 
Note that the tree with the minimum misclassification objective is not the tree that optimizes other evaluation metrics. For example, in the left subfigure, a single root node that predicts all samples as 1 has the optimal accuracy objective (in yellow), while another 6-leaf tree minimizes the balanced accuracy objective (in green). Actually, many trees have better balanced accuracy objective than the tree that minimizes the accuracy objective. A similar pattern holds for the F1-score Rashomon set (see right subfigure). Some trees that have worse accuracy objectives are better in terms of the F1-score objective. More figures are shown in Appendix \ref{app:exp_other_metrics}.

\begin{figure}
    \centering
    \includegraphics[height=0.27\textwidth]{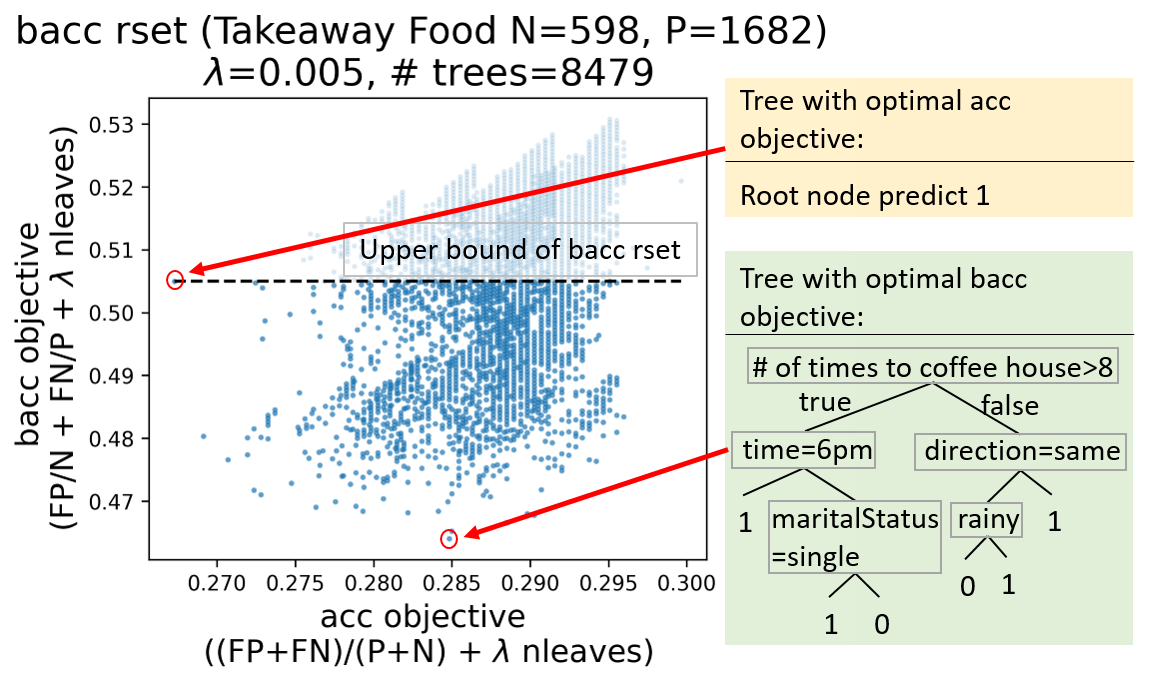}
    \includegraphics[height=0.27\textwidth]{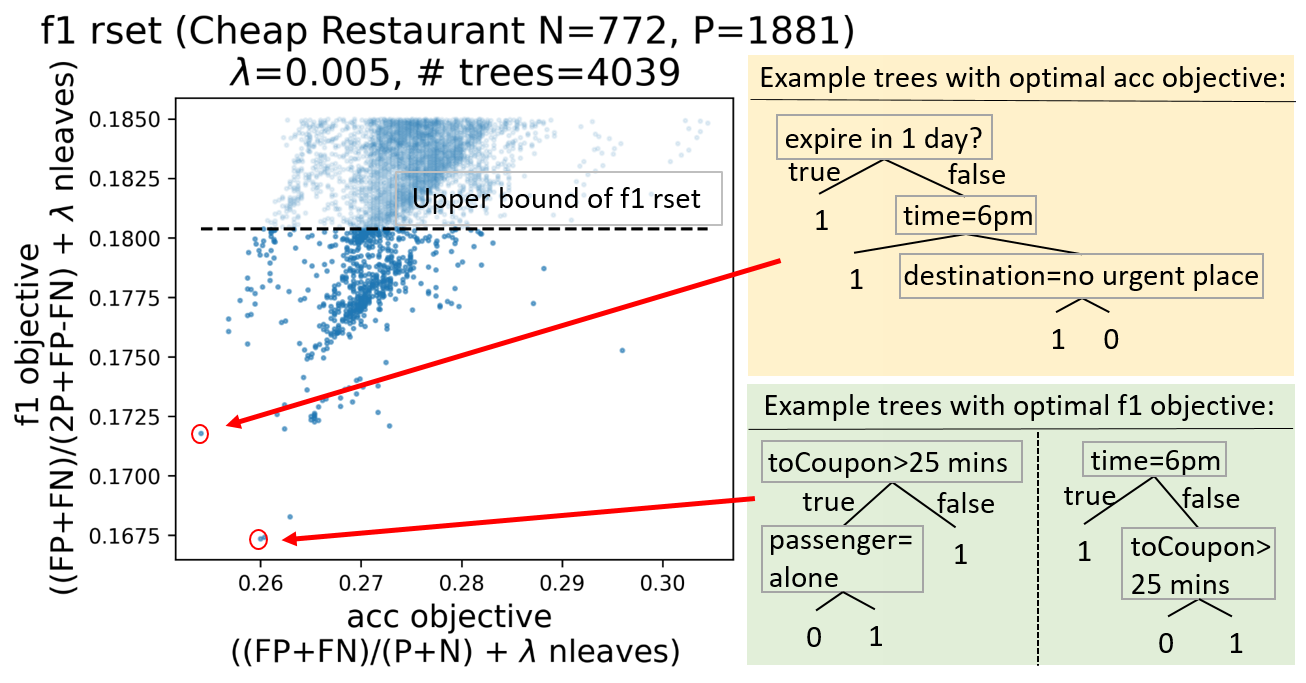}
    \caption{Example of Balanced Accuracy Rashomon set (left) and F1-Score Rashomon set (right). \# trees indicates the number of trees in the Balanced Accuracy or F1-score Rashomon set. Trees in the yellow region have  optimal accuracy objective and trees in the green region have optimal balanced accuracy or F1-score objective. }
    \label{fig:bacc-f1}
\end{figure}

\subsection{Rashomon set after removing a group of samples}
\label{sec:exp:influence}
\begin{figure}
    \centering
    \includegraphics[height=0.3\textwidth]{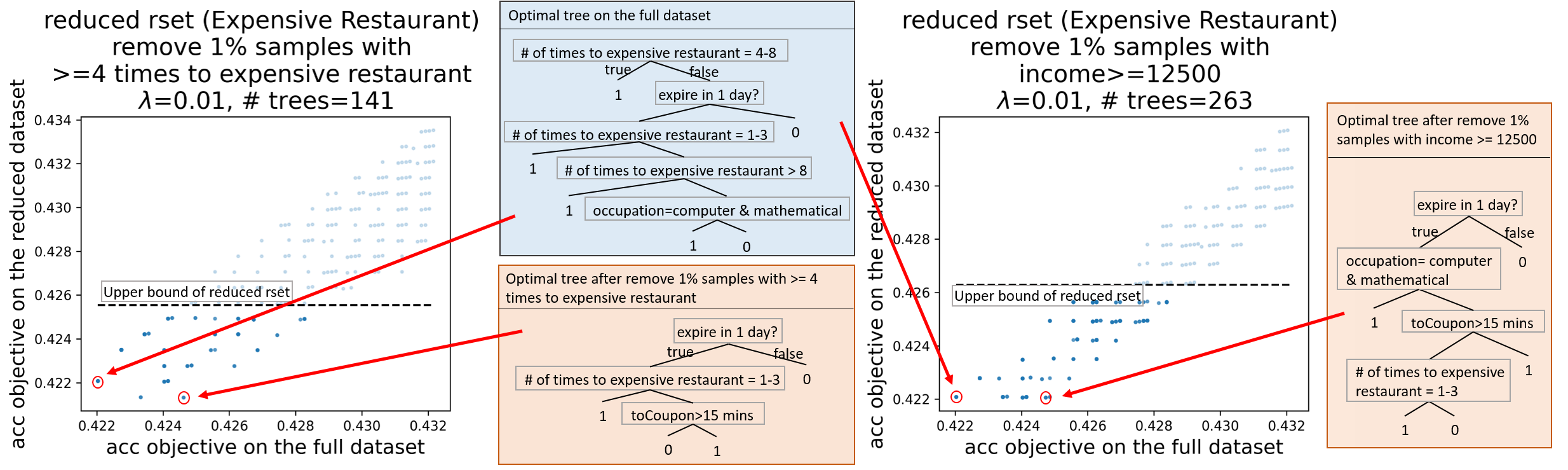}
    
    \caption{Example Rashomon sets and optimal trees after we remove the 1\% of samples with ``number of times to expensive restaurant $\geq$ 4'' (left) and ``income $\geq$ \$12,500'' (right) on the Expensive Restaurant dataset. The optimal tree on the full dataset is shown in the gray region and optimal trees on the corresponding reduced datasets are in the orange region. }
    \label{fig:reduced}
\end{figure}

Figure \ref{fig:reduced} shows accuracy objective on the full dataset versus the objective on the reduced dataset after 1\% of samples with ``number of times to expensive restaurant $\geq$ 4'' (left) and ``income $\geq$ \$12,500'' (right) are removed. The black dashed line indicates the objective threshold of the reduced Rashomon set and blue dots below the dashed line are trees in the reduced Rashomon set. As we can see, both scatter plots show a high correlation between the accuracy objective on the full dataset and the reduced dataset, indicating sparse near-optimal trees are robust to the shift in sample distribution. In other words, well-performing trees trained on the full dataset are usually still well-performing if some samples are removed. Optimal trees on the reduced dataset might be different, as we see by comparing the trees in the orange region and blue region. 
%Removing different samples may yield different optimal trees. 
More results are shown in Appendix \ref{app:exp_sensitivity}. 

\section{Conclusion}
This work opens the door to interesting discussions on variable importance, distributional shift, and user options. By efficiently representing \textit{all} optimal and slightly suboptimal models for complex nonlinear function classes with interactions between variables, we provide a range of new user-centered capabilities for machine learning systems, and a new understanding of the importance of variables. Importantly, \ourmethod{} allows users a \textit{choice} rather than handing them a single model.

\section*{Acknowledgements}
We thank the anonymous reviewers for their suggestions and insightful questions. The authors acknowledge funding from the National Science Foundation under grants IIS-2147061 and IIS-2130250, National Institute on Drug Abuse under grant R01 DA054994, Department of Energy under grants DE-SC0021358 and DE-SC0023194, and National Research Traineeship Program under NSF grants DGE-2022040 and CCF-1934964. We acknowledge the support of the Natural Sciences and Engineering Research Council of Canada (NSERC).
Nous remercions le Conseil de recherches en sciences naturelles et en génie du Canada (CRSNG) de son soutien.

\section*{Code Availability }
Implementations of \ourmethod{} is available at https://github.com/ubc-systopia/treeFarms.

% DOE DE-SC0023194, NSF IIS-2147061, NIDA R01 DA054994, NSF IIS-2130250, NSF NRT-HSR DGE-2022040, NSF CCF-1934964, and a grant from Fujitsu Limited.
\bibliography{ref}
\bibliographystyle{unsrt}

%%%%%%%%%%%%%%%%%%%%%%%%%%%%%%%%%%%%%%%%%%%%%%%%%%%%%%%%%%%%
\section*{Checklist}

\begin{enumerate}

\item For all authors...
\begin{enumerate}
  \item Do the main claims made in the abstract and introduction accurately reflect the paper's contributions and scope?
    \answerYes{}
  \item Did you describe the limitations of your work?
    % \answerTODO{}
    \answerYes{\ourmethod's models should not be interpreted as causal. \ourmethod{} creates sparse binary decision trees and thus has limited capacity. \ourmethod's models inherit flaws from data it was trained on. \ourmethod{} is not yet customized to a given application, which can be done in future work.}
  \item Did you discuss any potential negative societal impacts of your work?
    \answerNA{} We cannot think of any.
  \item Have you read the ethics review guidelines and ensured that your paper conforms to them?
    \answerYes{}
\end{enumerate}

\item If you are including theoretical results...
\begin{enumerate}
  \item Did you state the full set of assumptions of all theoretical results?
    \answerYes{}
  \item Did you include complete proofs of all theoretical results?
    \answerYes{see Appendix \ref{app:theorems}}
\end{enumerate}

\item If you ran experiments...
\begin{enumerate}
  \item Did you include the code, data, and instructions needed to reproduce the main experimental results (either in the supplemental material or as a URL)?
    \answerYes{see Code Availability and Appendix \ref{app:datasets} and \ref{app:more_exp_results}}
  \item Did you specify all the training details (e.g., data splits, hyperparameters, how they were chosen)?
    \answerYes{see Appendix \ref{app:datasets} and \ref{app:more_exp_results}}
  \item Did you report error bars (e.g., with respect to the random seed after running experiments multiple times)?
    \answerYes{see Appendix \ref{app:more_exp_results}}
        \item Did you include the total amount of compute and the type of resources used (e.g., type of GPUs, internal cluster, or cloud provider)?
    \answerYes{see Appendix \ref{app:datasets} and \ref{app:more_exp_results}}
\end{enumerate}

\item If you are using existing assets (e.g., code, data, models) or curating/releasing new assets...
\begin{enumerate}
  \item If your work uses existing assets, did you cite the creators?
    \answerYes{}
  \item Did you mention the license of the assets?
     \answerYes{The code and data we used all have MIT licenses.}
  \item Did you include any new assets either in the supplemental material or as a URL?
      \answerYes{We include our code as a URL in Code Availability.}
  \item Did you discuss whether and how consent was obtained from people whose data you're using/curating?
    \answerNA{}
  \item Did you discuss whether the data you are using/curating contains personally identifiable information or offensive content?
    \answerNA{}
\end{enumerate}

\item If you used crowdsourcing or conducted research with human subjects...
\begin{enumerate}
  \item Did you include the full text of instructions given to participants and screenshots, if applicable?
    % \answerTODO{}
    \answerNA{}
  \item Did you describe any potential participant risks, with links to Institutional Review Board (IRB) approvals, if applicable?
    % \answerTODO{}
    \answerNA{}
  \item Did you include the estimated hourly wage paid to participants and the total amount spent on participant compensation?
    % \answerTODO{}
    \answerNA{}
\end{enumerate}

\end{enumerate}

%%%%%%%%%%%%%%%%%%%%%%%%%%%%%%%%%%%%%%%%%%%%%%%%%%%%%%%%%%%%

\newpage
\appendix
% \textcolor{red}{Add a Table of content here.}
\addcontentsline{toc}{section}{Appendix} % Add the appendix text to the document TOC
\part{Appendix} % Start the appendix part
\parttoc % Insert the appendix TOC
\newpage

\section{The Model Set Representation}
\label{app:datastructure_detail}

In Section ~\ref{sec:new_data_structure}, we described the Model Set representation for the Rashomon Set. Figures ~\ref{fig:modelset} and ~\ref{fig:LRsets} illustrate these data structures in more detail.
Figure ~\ref{fig:modelset} introduces major components of our Model Set Instance (MSI) representation. 
Figure ~\ref{fig:LRsets} provides an example of the representation on a toy dataset. 
% Motivation

The Model Set representation's memory efficiency stems from two key properties: grouping models for the same subproblem together and then referencing these subproblems rather than duplicating them.
Figure ~\ref{fig:LRsets} illustrates this for a tiny, 10-sample dataset. Let's say that our Rashomon threshold is 0.40 and that we can split the entire problem on some feature to produce left and right children, each containing exactly half the data set -- the first five samples for the left subproblem and the last five samples for the right subproblem.

\begin{figure}[hb]
    \centering
    \includegraphics[width=1.0\textwidth]{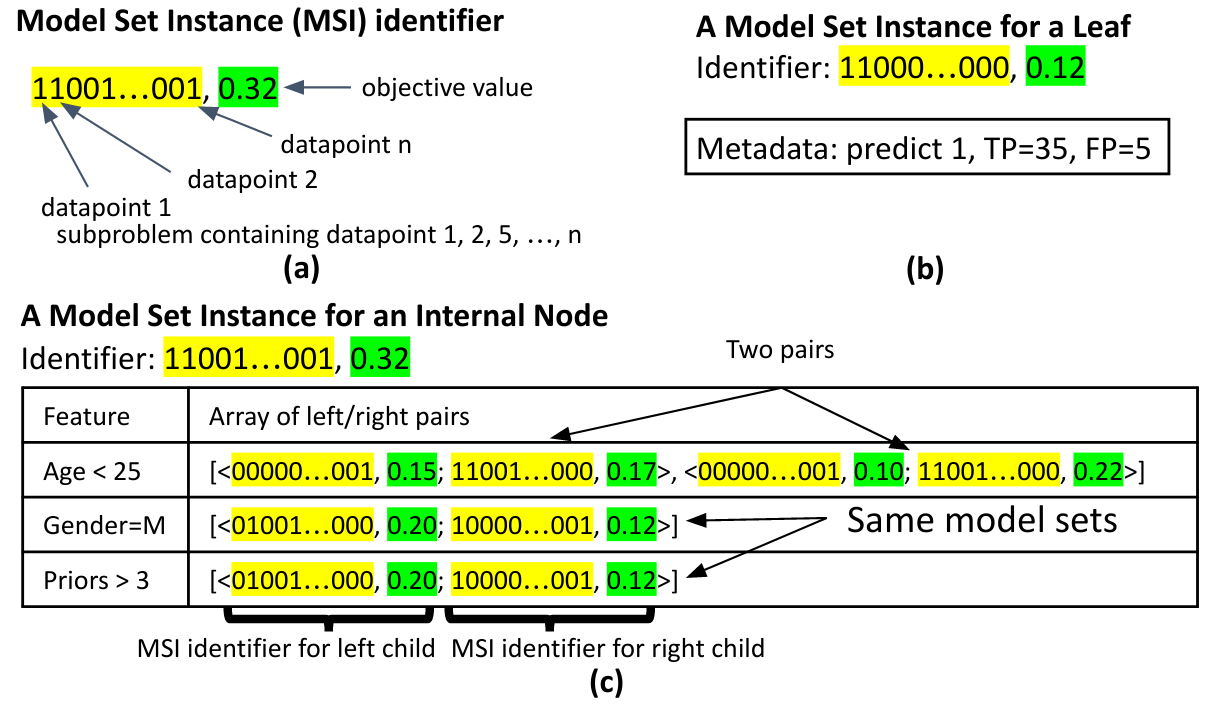}
    \caption{Model Set Instance (MSI) Representation. Each MSI represents a subproblem (yellow)/objective (green) pair (a). The subproblem is described by a bitvector identifying the samples in the subproblem. Figure (b) represents a leaf subproblem with objective 0.12. Figure (c) represents a subproblem that can be split to produce an objective of 0.32. There are three features (age, gender, and priors) for which a split produces this objective. If we split on age<25, then there are two different pairs of identifiers that both produce objectives of 0.32 (represented by the two pairs of MSI identifiers). Each pair of identifiers would further split on different features and have different tree structures. 
    %Each of those pairs represents another two model set instances (one on the left and one on the right), each of which can also represent many trees.
    The splits on gender and priors happen to produce the same pair of model set instances, as shown by the matching MSI identifiers in the bottom two lines of the table. Referencing these sets of trees by MSIs avoids massive data duplication.}
    \label{fig:modelset}
\end{figure}

\textit{extract} (see Algorithm \ref{alg:extract_short}, \ref{alg:newextract}) looks up the left subproblem
%presented in Figure ~\ref{fig:LRsets}
and finds that there are three MSIs for it, L1 represents a leaf and L2 and L3 each represent internal nodes with different objectives, each of which produce that objective in two ways. Thus, on the left, there are five possible trees. On the right, we have only two MSIs, one leaf (R1) and one internal node (R2). These represent three trees. Although the cross-product of the left and right children produce fifteen trees, we need only consider the 6 possible objectives values produced by combining each pair of model sets (i.e., L1+R1, L1+R2, L2+R1, etc.).
All but one of these satisfy the Rashomon threshold of 0.40 (L1+R1 produces an objective of 0.50, which is greater than the threshold), and they produce only three new model sets: one with objective 0.40 containing the two trees produced by combining L1 and R2 and the two trees produced by combining L2 and R1; one with objective 0.30 containing the four trees produced by combining L2 and R2 and the two trees produced by combining L3 and R1; and one with the objective 0.20 containing the four trees represented by combining L3 and R2.
\begin{figure}
    \centering
    \includegraphics[width=1.0\textwidth]{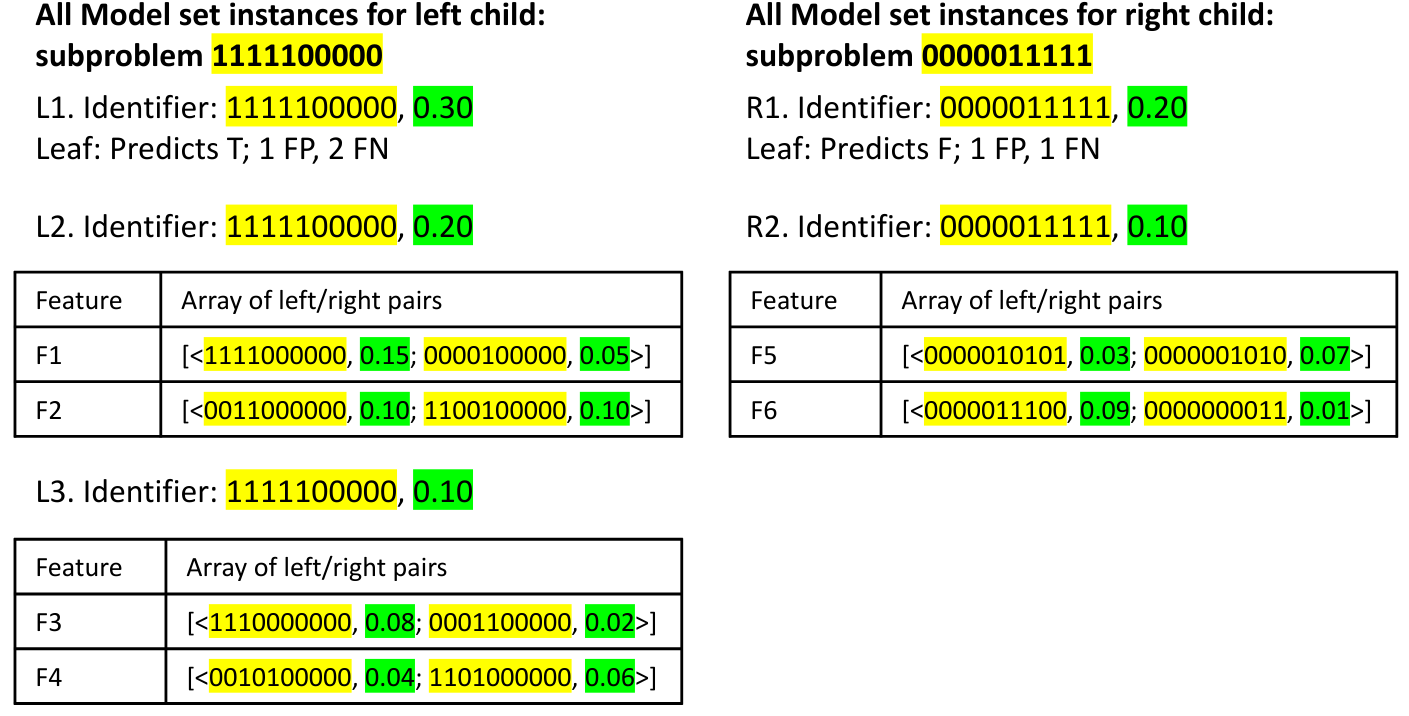}
    % \caption{Left and right Model Sets that can produce trees in the Rashomon Set for an example 10-sample data set.}
    \caption{An example model set representation of the left and right children of a root split on a 10-sample dataset with Rashomon threshold of 0.40. In the left child, L1 represents a leaf, and L2 and L3 represent internal nodes with different objectives. In the right child, R1 represents a leaf, and R2 represents an internal node. Each of the internal nodes (L2, L3, R2) produce their objective in two ways. All children Model Set Instances of internal nodes (L2, L3, R2) contain only one tree. Thus, on the left, there are five possible trees, and the right represents three trees. }
    \label{fig:LRsets}
\end{figure}

Our implementation augments the Model Set with two indexes:
the \textbf{Subproblem Index (SI)} and the \textbf{Subproblem-Objective Index (SOI)}.
The \textbf{Subproblem Index (SI)} is an index from a subproblem to the set of all MSI identifiers for that subproblem,  with objectives under a certain value named \textit{scope}. This value helps ensure that we do not extract MSI that are provably not in the Rashomon set using Theorem \ref{corollary:lb_dev}. 
The \textbf{Subproblem-Objective Index (SOI)} maps from a subproblem/objective pair to an MSI. For example, in Figure \ref{fig:LRsets}, SI maps subproblem 1111100000 to identifiers of L1, L2, L3, and SOI maps the identifier of L1 to its metadata.

\section{The \ourmethod{} Algorithm}
\label{app:algorithm}
In Section~\ref{sec:newalgorithm}, we presented a condensed version of the extraction algorithm (Algorithm~\ref{alg:extract_short}); Algorithm ~\ref{alg:newextract} presents the full detail.

We maintain global data structures, $\MS$, the collection of all Model Set Instances in the hierarchical Model Set, $SI$ the Subproblem Index, and $SOI$, the Subproblem-Objective Index as described in Section~\ref{sec:new_data_structure} and Appendix~\ref{app:datastructure_detail}.
They are all initialized to be empty before calling the recursive procedure \textit{extract}, shown below, on the full data set.

Let $\MS_{s,o}$ be the MSI with objective $o$ for subproblem $s$. We implement $\MS_{s,o}$ using the Subproblem-Objective Index, $SOI$.

%Let $\MS_{s}$ return the scope and set of MSI for subproblem $s$ (implemented using the $SI$ index, which ensures that we include all MSI for the subproblem with objectives less than or equal to $scope$). %after calling \textit{extract} as defined below.
%That is, $SOI.values = \{MS_{s, o} | o \leq scope\}$. $MS_{s,o}$ is then defined by $SOI[o]$.

Let $\MS_{s}$ return a pair consisting of a scope and set of MSI identifiers, $(\textit{scope, msiSet})$.
The scope returned is the objective such that all trees with objective under that value are represented by \textit{msiSet} for subproblem $s$. 
% In other words, it represents the scope of the solution, or how what part of the problem has been solved for subproblem $s$.
The set of MSI identifiers contains all MSI for subproblem $s$ with objective less than or equal to $\textit{scope}$.
We implement $\MS_{s}$ using the Subproblem Index, $SI$.

More formally:

\textbf{if} $\textit{scope},\{\textit{msiSet}\}~\in~\MS_{s}$ \textbf{then}

$\>\>\>\>\textit{scope},\{\textit{msiSet}\}~\in~SI[s]$ \textbf{and}
$\forall~m$ \textbf{ in } $\textit{msiSet},~obj(m) \le scope$

When $extract$ returns, the Rashomon Set is represented by $MS_p$ where $p$ is the complete data set.

\begin{algorithm}
\caption{\textit{extract}$(G, sub, scope)$}
\label{alg:newextract}
\begin{tabbing}
xxx \= xx \= xx \= xx \= xx \= xx \kill
\comment{Given a dependency graph, $G$, a subproblem, $sub$, and a scope, $scope$, populate $MS$ with the} \\
\comment{Rashomon set for sub.}\\
% $(SOI', scope')~\leftarrow~MS_{sub}$\\
\textit{scope'}, \textit{msi\_set}' ~$\leftarrow~SI[sub]$\\
\comment{When we have solved a problem for a given scope value, we guarantee to have found all possible } \\
\comment{objectives less than or equal to that scope. This line implements SOLVED in Algorithm \ref{alg:extract_short}.} \\
\textbf{if} \textit{scope'} is not None \textbf{and} \textit{scope}$ ~\leq~ $\textit{scope'} \textbf{then}\\
\>\comment{This subproblem is already solved.} \\
\> \textbf{return}\\

$\textit{msi\_set}~\leftarrow~\{\}$\\

$p~\leftarrow~G[sub]$ \comment{Find problem sub in the dependency graph.} \\
$\textit{base}\_\textit{bound}~\leftarrow~p.ub$ \comment{Objective if this node is a leaf.} \\

\textbf{if} ~\textit{base\_bound}~$\leq~$\textit{scope} \textbf{then} \comment{It is possible for this subproblem to be a leaf}\\
\> \comment{Select prediction that minimizes loss for this node. This line and the following implement }\\
\> \comment{newLeaf in Algorithm \ref{alg:extract_short}.}\\
\> \textit{prediction}$~\leftarrow$~\textrm{CALCULATE}$\_$\textrm{PREDICTION}$(p)$\\
\> \comment{Create single leaf tree represented in an MSI. \textit{newMSI} constructs an MSI using data } \\
\> \comment{provided in its arguments.} \\
\> \textit{msi}$~\leftarrow~\textit{newMSI}(\textit{leaf, sub, base\_bound, prediction})$ \\
\> $\textit{msi\_set}~\leftarrow~\textit{msi\_set} \cup \{\textit{msi.identifier}\}$\\
\> $SOI[sub, base\_bound]~\leftarrow~$\textit{msi}\\

\comment{Check all possible features on which we might split.} \\
 \textbf{for} $\textrm{each}~\textrm{feature}~j~\in~[1,M]$ \textbf{do} \\

\>\comment{Create subproblems by splitting problem by feature j.} \\
 \> $sub_l,sub_r~\leftarrow~split(sub,j)$ \\
 \>\comment{If either subproblem is not in dependency graph, we need not consider this split.}\\
 \> \textbf{if} $sub_l$ \textbf{not~in} $G$ \textbf{or} $sub_r$ \textbf{not~in} $G$ \textbf{then}\\
 \>\> \textbf{continue}\\
 \> $p_l~\leftarrow~G[sub_l]$ \comment{$p_l$ is a node in the dependency graph.}\\
 \> $p_r~\leftarrow~G[sub_r]$ \comment{$p_r$ is a node in the dependency graph.}\\
 \> \textbf{if} $p_l.lb + p_r.lb > scope$ \textbf{then}\\
 \>\>\textbf{continue}\\

 \> \comment{Populate Model Sets for the left and right subproblems. Recursively call \textit{extract} and } \\
 \> \comment{decrement scope using the lower bound of the other side. }\\
 \> $\textit{extract}(G, sub_l, scope-p_r.lb)$ \\
 \> $\textit{extract}(G, sub_r, scope-p_l.lb)$ \\
\>\comment{$MS$ now contains all Model Sets for the children; find all left and right model set} \\
\>\comment{instances.}\\
% \> $left~\leftarrow~MS_{sub_l}$\\
% \> $left~\leftarrow~ \{ msi ~|~msi \in left~ \textbf{and}~ obj(msi) > scope\} $\comment{Remove all  MSI whose objectives are too big.}\\
% \> $right~\leftarrow~MS_{sub_r}$\\
% \> $right~\leftarrow~\{ msi ~|~msi \in right~ \textbf{and}~ obj(msi) > scope\} $\comment{Remove all  MSI whose objectives are too big}\\
\> $\textit{left\_scope}, \textit{left\_msi\_set}~\leftarrow~SI[sub_l]$\\
\>\comment{Remove all  MSI whose objectives are too big.} \\
\> $\textit{left\_msi\_set}~\leftarrow~ \{ msi ~|~msi \in \textit{left\_msi\_set}~ \textbf{and}~ obj(msi) \leq scope\} $\\
\> $\textit{right\_scope}, \textit{right\_msi\_set}~\leftarrow~SI[sub_r]$\\
%\>\comment{Remove all  MSI whose objectives are too big.} \\
\> $\textit{right\_msi\_set}~\leftarrow~\{ msi ~|~msi \in \textit{right\_msi\_set}~ \textbf{and}~ obj(msi) \leq scope\} $\\

\>\comment{For each pair of model set instances in the cross product of left and right, if the sum of }\\
\>\comment{their objectives is less than or equal to $scope$, either create a new Model Set for this }\\
\>\comment{problem/objective, or add this pair to an existing Model Set for this problem/objective pair.}\\
 \> \textbf{for} each pair of model set instances $(m_l, m_r)\in (\textit{left\_msi\_set}\times \textit{right\_msi\_set})$ \textbf{do} \\

\>\> \comment{Skip leaf trivial extensions from the output.} \\

  \>\> \textbf{if} $\textit{is\_leaf}(m_l)$ \textbf{and} $\textit{is\_leaf}(m_r)$ \textbf{and} $m_l.\textit{predicts} == m_r.\textit{predicts}$ \textbf{then} \\

 \>\>\>  \textbf{continue}\\

\>\>$new\_obj \leftarrow m_l.obj + m_r.obj$ \\
\>\>\textbf{if } $new\_obj > scope$ \textbf{then} \comment{Skip combinations that exceed current scope.} \\
% \>\>\>  \textbf{break to next } $m_l$\\
\>\>\>  \textbf{continue}\\
% \>\>$msi~\leftarrow~MS_{sub, new\_obj}$\\
\>\>$msi~\leftarrow~SOI[sub, new\_obj]$\\
\>\> \textbf{if} not exists $msi$ \textbf{then} \\
\>\>\> $msi~\leftarrow~\textit{newMSI}(internal, sub, new\_obj)$ \\
\>\> \comment{Now, add this left/right pair to the model set for feature j.}\\
\>\> $msi[j].append(<m_l, m_r>)$\\
\>\> $\textit{msi\_set}~\leftarrow~\textit{msi\_set} \cup \{\textit{msi.identifier}\}$\\
\>\> $SOI[sub, new\_obj]~\leftarrow~msi$\\

\comment{We store solved instance should we need it later. }\\
$SI[sub]~\leftarrow~\textit{scope, msi\_set}$ \\
 \textbf{return}
\end{tabbing}
\end{algorithm}

\begin{algorithm}
\caption{CALCULATE\_PREDICTION$(p)$ $\rightarrow$ $\textit{prediction}$ \\ \comment{Compute the prediction of the given node.}\label{alg:calc-pred}}
\begin{tabbing}
xxx \= xx \= xx \= xx \= xx \= xx \kill

\comment{Prediction if we end this node as a leaf. Changes to 1 if there are more positives than negatives.} \\
$\textit{prediction} \leftarrow 0$  \\
% Do we need to keep this part? 
\comment{ Calculate the total positive and negative weights of each equivalence class.} \\
\comment{$s \in \{0,1\}^n$ is a bitvector indicating datapoints we are considering for this subproblem. } \\
$s \leftarrow p.keys$  \\
\comment{Compute negatives using samples from only those points captured in this subproblem. }\\
$\textit{negatives} \leftarrow \sum_{i=1}^n \mathbbm{1}[y_i = 0 \land s_i=1]$ \\
$\textit{positives} \leftarrow \sum_{i=1}^n \mathbbm{1}[y_i=1 \land s_i=1]$ \\
\comment{ Set the leaf prediction based on class with the higher selected total weight. } \\
\textbf{if} \textit{negatives} $<$ \textit{positives} \textbf{then} \\
\> \comment{Leaf predicts the majority class as 1 since positive weights are higher.}\\ \> \comment{Ties are predicted negative w.l.o.g$.$ since the error rate is the same either way.} \\
\> $\textit{prediction}\leftarrow 1$ \\
\textbf{return} \textit{prediction}
\end{tabbing}
\end{algorithm}

% \begin{algorithm}
% \caption{INSERT$(type, )$ $\rightarrow$ $\textit{prediction}$ \comment{Compute the prediction of the given node.}\label{alg:calc-pred}}
% \begin{tabbing}
% xxx \= xx \= xx \= xx \= xx \= xx \kill
% \> $OI.insert(base\_bound)$\\
% \> $SOI[base\_bound] = new~MSI(leaf, prob, base\_bound)$ \comment{Create single leaf tree} \\
% \end{tabbing}
% \end{algorithm}

\section{Sampling from the Rashomon Set}
\label{app:sample}

To facilitate sampling from Rashomon sets too large to materialize in memory, we add a small amount of metadata to our Model Set Representation.
Each MSI for a splittable problem retains a count of the total number of trees (unique trees, not MSIs) as well as counts for each entry in the map for the MSI (that is, a count of the number of unique trees attributable to each possible split).
Assume that $count(msi)$ returns the total number of trees represented by an entire Model Set Instance and $count(msi[j])$ returns the number of trees represented by a particular entry in the map of $msi$.
Algorithms~\ref{alg:sample} and \ref{alg:sample_recurse} present a brute-force algorithm for translating an index value between $[0, |R_{set}|)$ into a unique tree. To sample a tree, we randomly draw an index from the count of all trees (or the sum of count of all MSIs at the problem set level), and then call Algorithm \ref{alg:sample}. 

\begin{algorithm}
\label{alg:treeFarmFull}

\end{algorithm}

\begin{algorithm}
\caption{\textit{ndx\_to\_tree}$(\MS, ndx)~ \rightarrow~t$}
\comment{Find the $ndx^{th}$ tree in the Rashomon set represented by $\MS$}
\label{alg:sample}
\begin{tabbing}
xxx \= xx \= xx \= xx \= xx \= xx \kill
\comment{Get all MSI representing the full data set}\\
$\textit{msi\_set, scope}~\leftarrow~\MS_p$\\
\textbf{for} $msi$ \textbf{in} $\textit{msi\_set}$ \textbf{do} \\
\> \textbf{if} $count(msi) < ndx$ \textbf{then}\\
\>\> \textbf{break}\\
\> \comment{Align indices to the start of the next MSI} \\
\> $ndx~\leftarrow~ndx-count(msi)$\\
\comment{Either $ndx$ is larger than the size of the Rashomon set or $msi$ is the Model Set Instance}\\
\comment{in which we will find the tree corresponding to $ndx$.}\\
\textbf{if} $ndx \geq count(msi)$ \textbf{then}\\
\> \comment{No tree with this $ndx$ exists.}\\
\> \textbf{return NULL}\\
\textbf{return} \textit{ndx\_to\_tree\_in\_msi}$(msi, ndx)$
\end{tabbing}
\end{algorithm}

\begin{algorithm}

\caption{$\textit{ndx\_to\_tree\_in\_msi}(msi, ndx)~\rightarrow~t$}
\comment{Recursive procedure to construct the $ndx^{th}$ tree in Model Set Instance $msi$}
\label{alg:sample_recurse}
\begin{tabbing}
xxx \= xx \= xx \= xx \= xx \= xx \kill
\textbf{if} $has\_terminal(msi)$ \textbf{then}\\
\>\comment{this is a leaf; see if we want to return it, or a more complicated tree (with the same }\\
\>\comment{objective).}\\
\> \textbf{if} $ndx = 0$ \textbf{then}\\
\>\> \textbf{return} $\textit{make\_leaf\_node}(msi(\textit{prediction}))$\\
\> \comment{Continue on to a more complicated model, but account for the leaf model.}\\
\> $ndx~\leftarrow~ndx - 1$\\
\\
\comment{Iterate over features to determine in which feature this index appears.}\\
\textbf{for} $f$ \textbf{in} $msi.keys()$ \textbf{do}\\
\> \textbf{if} $count(msi[f]) < ndx$ \textbf{then}\\
\>\> \textbf{break}\\
\> $ndx~\leftarrow~ndx-count(msi[f])$\\
\\
\comment{At this point, f is the feature upon which we will split; next we need to find the pair in which $ndx$} \\
\comment{appears.}\\
\textbf{for} each pair ($\textit{left\_msi}, \textit{right\_msi}$) \textbf{in} $msi[f]$ \textbf{do}\\
\> $\textit{left\_count} ~\leftarrow~ count(\textit{left\_msi})$\\
\> $\textit{right\_count} ~\leftarrow~ count(\textit{right\_msi})$\\
\> \textbf{if} $ndx < \textit{left\_count} \times \textit{right\_count}$ \textbf{then}\\
\>\> \textbf{break}\\
\> $ndx~\leftarrow~ndx - (\textit{left\_count} \times \textit{right\_count})$\\
\\
\comment{We now have the precise pair in which we'll find $ndx$.}\\
$node~\leftarrow~\textit{make\_internal\_node}(f)$\\
$node['true']~\leftarrow~\textit{ndx\_to\_tree\_in\_msi}(\textit{left\_msi}, ndx ~//~ \textit{left\_count})$\\
$node['false']~\leftarrow~\textit{ndx\_to\_tree\_in\_msi}(\textit{left\_msi}, ndx ~\%~ \textit{left\_count})$\\
\textbf{return} $node$
\end{tabbing}
\end{algorithm}

\section{Model Class Reliance of Decision Trees}
\label{app:mcr}
In this appendix, we present the model class reliance calculation in detail. 
Given a dataset $\{\x, \y\}$, the error of a decision tree $t$ is defined by a nonnegative loss function $\ell$:
\begin{equation}
    e_{\textrm{orig}}(t) := \frac{1}{n}\sum_{i=1}^n \ell(t, \y_{[i]} \x_{[i, k]}, \x_{[i, \backslash k]}), 
\end{equation}
where $\x_{[i,k]}$ is the $k^{th}$ feature of sample $i$ and $\x_{[i, \backslash k]}$ is the set of  remaining features. 
To show how much the accuracy of a fixed tree $t$ relies on variable $k$, we define the permutation loss:
% \begin{equation}
%     e_{\textrm{switch}}(t):=\frac{1}{n}\sum_{i=1}^n\sum_{j\neq i} \ell(t, \x_{[j, \backslash k]}, \x_{[i, k]}, y_j). 
% \end{equation}
\begin{equation}
  e_{\textrm{divide}}(t) \! :=\!\frac{1}{2 \lfloor n/2\rfloor}\hspace*{-2pt}\sum_{i=1}^{\lfloor n/2 \rfloor} [\ell\{t, (\y_{[i]}, \x_{[i, \backslash k]}, \x_{[i+\lfloor n/2 \rfloor, k]})\} %\nonumber\\
   %&& 
   + \ell \{t, (\y_{[i+\lfloor n/2 \rfloor]}, \x_{[i+\lfloor n/2 \rfloor, \backslash k]}, \x_{[i, k]})\} ].
\end{equation}

We define the model reliance (MR) by division, i.e., $\textrm{MR}(t) := (e_{\textrm{divide}}(t) + \lambda H_t)/(e_{\textrm{orig}}(t)+\lambda H_t)$. This definition is slightly different from \citepAppendix{appfisher2019all} as we include the leaf penalty term in both numerator and denominator to take tree complexity into consideration. If MR$(t)$ is large, it means the error went up substantially when  feature $k$ was altered, thus it is important.
%The importance of a feature used in a more complicated tree is slightly adjusted to a lower value compared with that in a simpler tree. 

MR evaluates how important a feature is for a specific tree $t$. However, such an estimation may overestimate or underestimate the feature's general importance. For example, a feature with high model reliance with respect to a tree $t$ might have low model reliance for another tree $t'$. Therefore, we are more interested in how much \textit{any} well-performing model from a decision tree class $\mathcal{T}$ relies on a feature. Given an $\epsilon$-Rashomon set (see Eq \eqref{eq:rset_definition}), %i.e. $R_{\textrm{set}}(\epsilon, t_{\textrm{ref}}, \mathcal{T}) = \{t\in \mathcal{T}:https://www.overleaf.com/project/62698524f9681542eb4c451e e_{\textrm{orig}}(t)+\lambda H_t \leq (1+\epsilon) \times e_{\textrm{orig}}(t_{\textrm{ref}}) + \lambda H_{t_{\textrm{ref}}}\}$, 
the model class reliance is defined by:
\begin{equation}
    [\textrm{MCR}_-, \textrm{MCR}_+]:=[\min_{t\in R_{\set}(\epsilon, t_{\textrm{ref}}, \mathcal{T})}\textrm{MR}(t), \max_{t\in R_{\set}(\epsilon, t_{\textrm{ref}}, \mathcal{T})}\textrm{MR}(t)].
\end{equation}
According to \citepAppendix{appfisher2019all}, $\textrm{MCR}_-$ is usually easy to calculate as long as the loss function $\ell$ is convex and there is no leaf penalty term, but $\textrm{MCR}_+$ is hard to calculate since even if $\ell$ is convex, the maximization problem is usually non-convex. In the case of trees (or of any discrete functional class with interaction terms), this problem is discrete and certainly nonconvex, and
%Since $\mathcal{T}$ is a set of decision trees and we use the misclassification loss as $\ell$ and the number of leaves as $H_t$, which is non-convex, 
it is hard to compute either $\textrm{MCR}_-$ or $\textrm{MCR}_+$ without access to the type of algorithm presented in this work for enumerating the Rashomon set.
% However, given the whole Rashomon set of decision trees, we can \textit{directly} calculate MR for every tree in the Rashomon set and then find its minimum and maximum to compute the MCR. We show results on this in Section \ref{sec:exp_mcr}

% If we cannot enumerate the whole Rashomon Set, we use the sampling method described above to obtain sample estimates for the MCR. Section \ref{sec:exp_mcr_sampling} shows that the MCR can be estimated efficiently through sampling.

\section{Theorems and Proofs}\label{app:theorems}

We first recall some notation. A leaf set $t=(l_1, l_2, ..., l_{H_t})$ contains $H_t$ distinct leaves, where $l_i$ is the classification rule of leaf $i$. If a leaf is labeled, then $\hat{y}_i$ is the label prediction for all data in leaf $i$. A partially grown tree $t$ with the leaf set $t=(l_1, l_2, ..., l_{H_t})$ could be rewritten as $t=(t_{\fix}, \delta_{\fix}, t_{\splitrm}, \delta_{\splitrm}, H_t)$, where $t_{\fix}$ is a set of fixed leaves that are not permitted to be further split (those leaves will be split in another instance of this tree separately) and $\delta_{\fix}$ is a set of predicted labels of leaves in $t_{\fix}$. Similarly, $t_{\splitrm}$ is the set of leaves that can be further split and $\delta_{\splitrm}$ are the predicted labels of leaves $t_{\splitrm}$. We denote $\sigma(t)$ as the set of all $t$'s child trees whose fixed leaves contain $t_{\fix}$.

\textbf{Theorem \ref{corollary:lb}} \textit{ (Basic Rashomon Lower Bound)
Let $\theta_{\epsilon}$ be the threshold of the Rashomon set. Given a tree $t = (t_{\fix},\delta_{\fix}, t_{\splitrm}, \delta_{\splitrm}, H_t)$, we denote $b(t_{\fix}, \x, \y):=\ell(t_{\fix}, \x, 
\y)+\lambda H_t$ as the lower bound of the objective for tree $t$. If $b(t_{\fix}, \x, \y) > \theta_{\epsilon}$, then the tree $t$ and all of its children are not in the $\epsilon$-Rashomon set.}

\begin{proof}
According to the definition of the lower bound for tree $t$, we know 
$$Obj(t, \x, \y) = \ell(t, \x, \y) + \lambda H_t \geq \ell(t_{\fix}, \x, \y) + \lambda H_t = b(t_{\fix}, \x, \y).$$
By the same logic, for any child tree $t'=(t'_{\fix}, \delta'_{\fix}, t'_{\splitrm}, \delta'_{\splitrm}, H_{t'}) \in \sigma(t)$, we will also have $Obj(t', \x, \y) \geq b(t', \x, \y)$.  Since the child trees all have the fixed leaves of the parent, $t_{\fix} \subseteq t'_{\fix}$, and they have more leaves, $H_{t'} \geq H_t$, we have: $$b(t'_{\fix}, \x, \y) = \ell(t'_{\fix}, \x, \y) +\lambda H_{t'} \geq \ell(t_{\fix}, \x, \y) + \lambda H_t = b(t_{\fix}, \x, \y).$$
Thus, if $b(t_{\fix}, \x, \y) > \theta_{\epsilon}$, then $Obj(t', \x, \y) \geq b(t'_{\fix}, \x, \y) > \theta_{\epsilon}$, i.e., $t'$ is not in the $\epsilon$-Rashomon set. 

Therefore, if $b(t_{\fix}, \x, \y) > \theta_{\epsilon}$, we can eliminate the tree $t$ and its all of children from the search space.  
\end{proof}

We use notation defined in the main text for equivalent points. As a reminder, a set of points is equivalent if they have the same feature values; thus, those points will always receive identical predictions, and hence, some will be misclassified if they have opposite labels.

\textbf{Theorem \ref{thm:equiv}}
\textit{(Rashomon Equivalent Points Bound) Let $\theta_{\epsilon}$ be the threshold of the Rashomon set. Let $t$ be a tree with leaves $t_{\fix}, t_{\splitrm}$ and lower bound $b(t_{\fix}, \x, \y)$. Let $b_{\textrm{equiv}}(t_{\splitrm}, \x, \y):=\frac{1}{n}\sum_{i=1}^n \sum_{u=1}^U \textrm{cap}(\x_i, t_{\splitrm}) \wedge \mathbbm{1}[\x_i \in e_u] \wedge \mathbbm{1}[y_i=q_u]$ be the lower bound on the misclassification loss of leaves that can be further split. Let $B(t,\x, \y):= b(t_{\fix}, \x, \y) + b_{\textrm{equiv}}(t_{\splitrm}, \x, \y)$ be the Rashomon lower bound of $t$. If $B(t,\x, \y) > \theta_{\epsilon}$, tree $t$ and all its children are not in the $\epsilon$-Rashomon set.}  

\begin{proof}
According to the definition of $b(t_{\fix}, \x, \y)$ and $b_{\textrm{equiv}}(t_{\splitrm}, \x, \y)$, we know
$$Obj(t, \x, \y) = \ell(t_{\fix}, \x, \y) + \ell(t_{\splitrm}, \x, \y) + \lambda H_t \geq b(t_{\fix}, \x, \y) + b_{\textrm{equiv}}(t_{\splitrm}, \x, \y)=B(t, \x, \y).$$ 

For $t' \in \sigma(t)$, $Obj(t', \x, \y) \geq b(t_{\fix}, \x, \y) + b_{\textrm{equiv}}(t_{\splitrm}, \x, \y)=B(t, \x, \y)$. When $B(t, \x, \y) > \theta_{\epsilon}$, $Obj(t, \x, \y) > \theta_{\epsilon}$ and thus $Obj(t', \x, \y) > \theta_{\epsilon}$. 
\end{proof}

\begin{theorem}\label{corollary:lb_dev} (Rashomon Equivalent Points Bound for Subtrees)
Let $t$ be a tree such that the root node is split by a feature, where two subtrees $t_{\leftrm}$ and $t_{\rightrm}$ are generated with $H_{t_{\leftrm}}$ and $H_{t_{\rightrm}}$ leaves for $t_{\leftrm}$ and  $t_{\rightrm}$ respectively. Let $B(t_{\leftrm}, \x, \y)$ and $B(t_{\rightrm}, \x, \y)$ be the Rashomon equivalent points bound for the left and right subtrees respectively. (Note that $B(t_{\leftrm}, \x, \y) \leq \ell(t_{\leftrm}, \x, \y)+\lambda H_{t_{\leftrm}}$ and $B(t_{\rightrm}, \x, \y) \leq \ell(t_{\rightrm}, \x, \y)+\lambda H_{t_{\rightrm}}$).
If %$b(t_{\leftrm}, \x, \y) > \theta_{\epsilon}$ or $b(t_{\rightrm}, \x, \y) > \theta_{\epsilon}$ or 
$B(t_{\leftrm}, \x, \y) + B(t_{\rightrm}, \x, \y) > \theta_{\epsilon}$, then the tree $t$ is not a member of the $\epsilon$-Rashomon set.
\end{theorem}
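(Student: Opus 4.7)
The plan is to reduce the statement to a direct additive decomposition of the objective over the root split, and then invoke the Rashomon Equivalent Points Bound already established in Theorem~\ref{thm:equiv} (applied separately to each subtree) together with the two hypothesized inequalities on $B(t_{\leftrm}, \x, \y)$ and $B(t_{\rightrm}, \x, \y)$.

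First, I would observe that since the root split sends each training point to exactly one of the two subtrees, the sets of points captured by $t_{\leftrm}$ and by $t_{\rightrm}$ are disjoint and their union is the full dataset. Because the misclassification loss $\ell(\cdot,\x,\y)=\frac{1}{n}\sum_{i=1}^{n}\mathbbm{1}[\hat{y}_i\neq y_i]$ is normalized by $n$, it is additive over this disjoint partition:
\begin{equation*}
\ell(t,\x,\y)=\ell(t_{\leftrm},\x,\y)+\ell(t_{\rightrm},\x,\y).
\end{equation*}
Similarly, the leaves of $t$ are exactly the union of the leaves of the two subtrees, so $H_t=H_{t_{\leftrm}}+H_{t_{\rightrm}}$. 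Combining these two identities yields $Obj(t,\x,\y)=\bigl[\ell(t_{\leftrm},\x,\y)+\lambda H_{t_{\leftrm}}\bigr]+\bigl[\ell(t_{\rightrm},\x,\y)+\lambda H_{t_{\rightrm}}\bigr]$.

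Next, I would apply the two lower-bound inequalities the theorem supplies, namely $B(t_{\leftrm}, \x, \y)\le \ell(t_{\leftrm}, \x, \y)+\lambda H_{t_{\leftrm}}$ and $B(t_{\rightrm}, \x, \y)\le \ell(t_{\rightrm}, \x, \y)+\lambda H_{t_{\rightrm}}$. Adding these and chaining with the identity above gives $Obj(t,\x,\y)\ge B(t_{\leftrm}, \x, \y)+B(t_{\rightrm}, \x, \y)$. The hypothesis $B(t_{\leftrm}, \x, \y)+B(t_{\rightrm}, \x, \y)>\theta_{\epsilon}$ then immediately implies $Obj(t,\x,\y)>\theta_{\epsilon}$, so $t\notin R_{\textrm{set}}(\epsilon)$.

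There is essentially no serious obstacle here; the only delicate point is notational. One must be careful that the ``loss on a subtree'' $\ell(t_{\leftrm},\x,\y)$ really denotes the contribution to the full (length-$n$) normalized loss from the points routed into the left subtree, so that additivity over the root split holds exactly and no $n$-versus-subsample rescaling is needed. Once this is verified, the bound follows as a one-line consequence of Theorem~\ref{thm:equiv} and the additive decomposition. If a stronger statement about all children of $t$ were desired (paralleling Theorem~\ref{corollary:lb}), the same argument goes through because $B(t_{\leftrm},\x,\y)$ and $B(t_{\rightrm},\x,\y)$ continue to lower bound the corresponding subtree objectives of any refinement, but the statement as given only requires exclusion of $t$ itself.
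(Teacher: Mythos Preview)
Your proposal is correct and follows essentially the same approach as the paper's own proof: decompose $Obj(t,\x,\y)$ additively over the root split as $\ell(t_{\leftrm},\x,\y)+\ell(t_{\rightrm},\x,\y)+\lambda(H_{t_{\leftrm}}+H_{t_{\rightrm}})$, lower-bound each bracketed piece by the corresponding $B(\cdot)$, and conclude. Your write-up is in fact more explicit than the paper's one-line argument, spelling out why additivity of the normalized loss holds and flagging the notational point about $\ell(t_{\leftrm},\x,\y)$ being the contribution to the full-dataset loss.
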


\begin{proof}
$$Obj(t, \x, \y) = \ell(t_{\leftrm},\x,\y) + \ell(t_{\rightrm}, \x, \y) + \lambda(H_{t_{\leftrm}} + H_{t_{\rightrm}}) \geq B(t_{\leftrm}, \x, \y) + B(t_{\rightrm},\x,\y).$$
If 
$B(t_{\leftrm}, \x, \y) + B(t_{\rightrm}, \x, \y) > \theta_{\epsilon}$, then $Obj(t, \x, \y) > \theta_{\epsilon}$. Therefore, tree $t$ is not in the $\epsilon$-Rashomon set. 
\end{proof}

We recall some notations used for Theorem \ref{thm:bacc} and \ref{thm:f1-score}. Let $q^+$ be the proportion of positive samples and $q^-$ be the proportion of negative samples, i.e., $q^+ + q^-=1$. We denote $q_{\min} := \min(q^+, q^-)$ and $q_{\max} := \max(q^+, q^-)$. Let $FPR$ and $FNR$ be the false positive and false negative rates. %respectively. %Let $t^*$ be the optimal tree that minimizes the sum of misclassification loss and leaf penalty, i.e. $t^* := \arg\min_{t\in \mathcal{T}} q^- FPR_t + q^+ FNR_t+\lambda H_t$. 
We notate the Accuracy Rashomon set as $A_{\theta}:=\{t\in \mathcal{T}: q^- FPR_t + q^+ FNR_t+\lambda H_t \leq \theta\}$, where $\theta$ is the objective threshold of the Accuracy Rashomon set, % be the accuracy Rashomon set. 
similar to $\theta_\epsilon$ in Section \ref{sec:notation}. 
We denote $\delta$ in these theorems as the objective threshold of Balanced Accuracy or F1-Score Rashomon sets.

\textbf{Theorem} \ref{thm:bacc}
\textit{(Accuracy Rashomon set covers Balanced Accuracy Rashomon set)  Let %$\tilde{t}^* := \arg\min_{t\in \mathcal{T}} \frac{1}{2}(FPR_t + FNR_t) + \lambda H_t$ be the optimal tree that minimizes the negation of balanced accuracy objective, and
$B_{\delta}:=\{t \in \mathcal{T}: \frac{FPR_t + FNR_t}{2} + \lambda H_t \leq \delta\}$ be the Balanced Accuracy Rashomon set.
If \[\theta \geq \min\Big(2q_{\max}\delta, q_{\max} + (2\delta-1)q_{\min} + (1- 2q_{\min}) \lambda 2^d\Big),\] where $d$ is the depth limit, then $\forall t \in B_{\delta}, t\in A_{\theta}$.}

\begin{proof}
To prove this theorem, we first get the bound values through geometric intuition, and then prove the inequalities formally. Consider the plane with $FPR$ and $FNR$ being two axes, shown in Figure \ref{fig:bacc_explanation}, $\forall t \in B_{\delta}$, inequalities $\frac{FPR_t + FNR_t}{2} +\lambda H_t \leq \delta$ and $0 \leq FPR_t, FNR_t \leq 1$ bound the feasible region. Our goal is to find the upper bound of the accuracy objective in this feasible region, so that when $\theta$ is greater than this upper bound, we have $\forall t \in B_{\delta}, t\in A_{\theta}$. As shown in Figure \ref{fig:bacc_explanation} there are two different cases that can bound the accuracy of tree $t$. 

\begin{figure}
    \centering
    \includegraphics[width=0.95\linewidth]{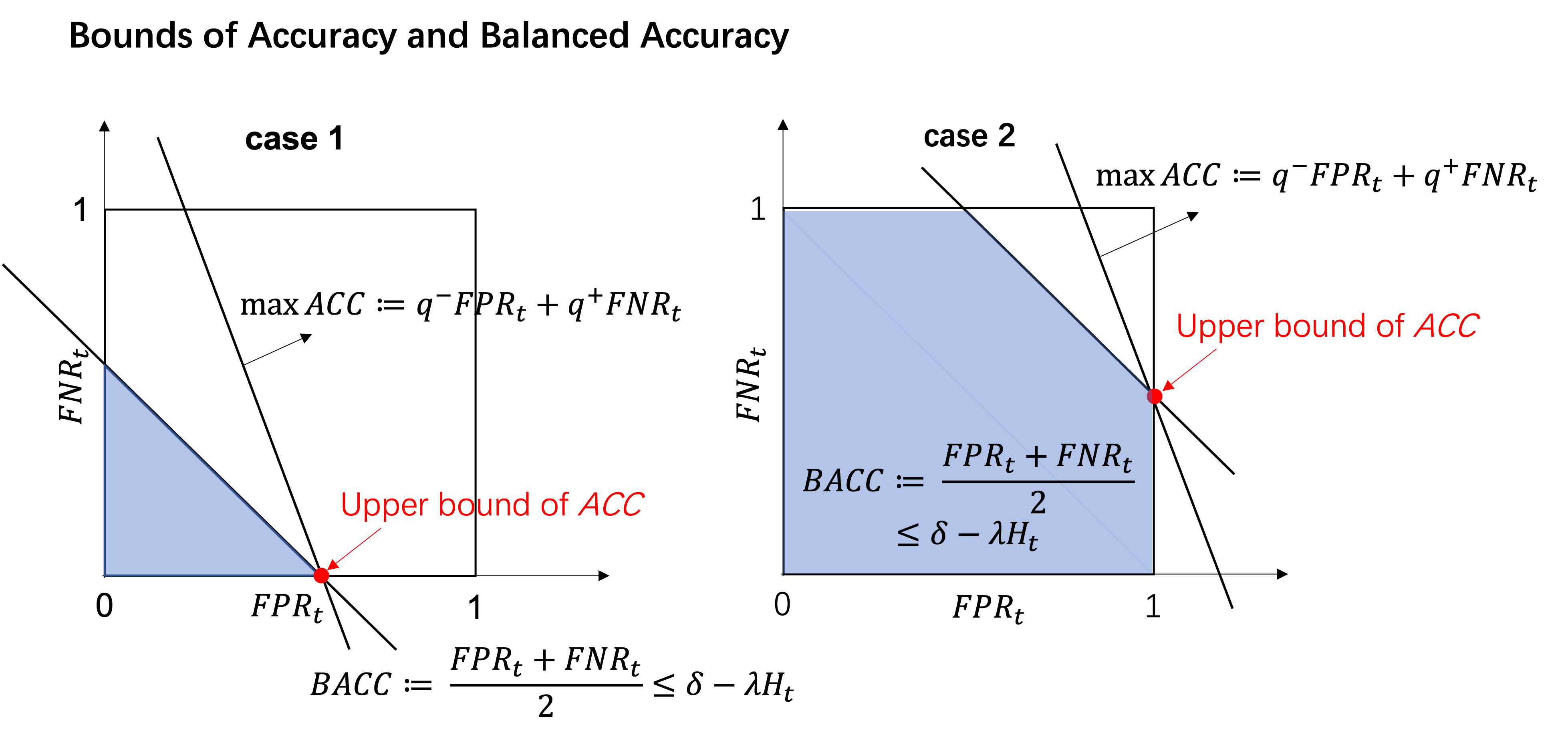}
    \caption{Two cases that can bound the accuracy of tree $t$. Note that for a tree $t$, its $FNR_t$ and $FPR_t$ are in range $[0,1]$. Left: when $2(\delta-\lambda H_t)<1$, the blue area identifies the feasible region. The point that maximize the accuracy is either $(0, 2(\delta-\lambda H_t))$ or $(2(\delta-\lambda H_t), 0)$. Right: when $2(\delta-\lambda H_t)\geq 1$, the blue area identifies the feasible region. The point that maximize the accuracy is either $(2(\delta-\lambda H_t)-1, 1)$ or $(1, 2(\delta-\lambda H_t)-1)$. }
    \label{fig:bacc_explanation}
\end{figure}

\noindent Case 1: 

When $2(\delta - \lambda H_t) < 1$ (see Figure \ref{fig:bacc_explanation} left), the accuracy loss is a line in the plane and thus maximized at $(0, 2(\delta-\lambda H_t))$ or $(2(\delta-\lambda H_t), 0)$. The corresponding maximum value is $\max(q^+ \times 2(\delta-\lambda H_t), q^-\times 2(\delta-\lambda H_t))=q_{\max}2(\delta-\lambda H_t)$. Inspired by this geometric intuition, formally we want to show,

\begin{equation} \label{eq:bacc_case1}
    \begin{aligned}
        q^- FPR_t + q^+ FNR_t &\leq  q_{\max} 2(\delta-\lambda H_t).
    \end{aligned}
\end{equation}
Eq \eqref{eq:bacc_case1} can be shown as follows,
\begin{equation}
    \begin{aligned}
        q^- FPR_t + q^+ FNR_t &\leq q_{\max}(FPR_t + FNR_t)\quad (q_{\max}\geq q^+, q_{\max}\geq q^-)\\
        &\leq q_{\max} 2(\delta-\lambda H_t) \quad (\textrm{see definition of } B_{\delta})
    \end{aligned}
\end{equation}
Therefore,
\begin{equation}
    \begin{aligned}
        q^- FPR_t + q^+ FNR_t + \lambda H_t &\leq q_{\max} 2(\delta-\lambda H_t) + \lambda H_t \quad\\
        &= 2 q_{\max}\delta - 2q_{\max}\lambda H_t + \lambda H_t\\
        &= 2 q_{\max}\delta - (2q_{\max}-1)\lambda H_t\\
        &\leq 2 q_{\max}\delta \quad (\textrm{since } q_{\max}\geq 0.5).
    \end{aligned}
\end{equation}

\noindent Case 2:
Inspired by Figure \ref{fig:bacc_explanation} right. When $2(\delta - \lambda H_t) \geq 1$, the the accuracy loss is maximized at either $(1, 2(\delta-\lambda H_t)-1)$ or $(2(\delta-\lambda H_t)-1, 1)$ on the $FPR$-$FNR$ plane, and the corresponding maximum value is $\max(q^+ \times (2(\delta-\lambda H_t)-1) + q^-,q^-\times (2(\delta-\lambda H_t)-1) + q^+)=q_{\max} + (2(\delta-\lambda H_t)-1)q_{\min}$. First, we want to show,  
\begin{equation}\label{eq:bacc_case2_main}
    q^-FPR_t + q^+ FNR_t \leq q_{\max} + (FPR_t + FNR_t-1)q_{\min}.
\end{equation}
\begin{equation}\label{eq:bacc_case2_induction}
\begin{aligned}
    \textrm{Eq \eqref{eq:bacc_case2_main} right -- Eq \eqref{eq:bacc_case2_main} left} &= q_{\max} + (FPR_t + FNR_t-1) q_{\min} - q^- FPR_t - q^+ FNR_t\\
    &=q_{\max} - q_{\min} + q_{\min}(FPR_t+FNR_t) - q^- FPR_t - q^+ FNR_t\\
    &=q_{\max} - q_{\min} + (q_{\min} - q^-)FPR_t + (q_{\min} - q^+)FNR_t.
\end{aligned}
\end{equation}
If $q^-\geq q^+$, then Eq \eqref{eq:bacc_case2_induction} = $q^--q^+ + (q^+ - q^-)FPR_t = (q^- - q^+) - (q^- - q^+)FPR_t \geq 0$ since $q^- - q^+ \geq 0$ and $1-FPR_t \geq 0$. Similarly, if $q^+ \geq q^-$, then Eq \eqref{eq:bacc_case2_induction} $=(q^+ - q^-) - (q^+ - q^-)FNR_t \geq 0$. Therefore, Eq \eqref{eq:bacc_case2_induction} is always greater than or equal to 0. Hence, Eq \eqref{eq:bacc_case2_main} holds. Therefore,
\begin{equation}
    \begin{aligned}
    q^- FPR_t + q^+ FNR_t &\leq q_{\max} + (FPR_t + FNR_t-1)q_{\min} \\
    & \leq q_{\max} + (2(\delta-\lambda H_t)-1) q_{\min} \quad (\textrm{see definition of } B_{\delta}),
    \end{aligned}
\end{equation}
which is the same as the maximum value we got intuitively from Figure \ref{fig:bacc_explanation} right. Adding $\lambda H_t$ to both sides, we have
\begin{equation}
    \begin{aligned}
    q^- FPR_t + q^+ FNR_t + \lambda H_t &\leq q_{\max} + (2(\delta-\lambda H_t)-1) q_{\min} + \lambda H_t\\
    &= q_{\max} + (2\delta-1) q_{\min} - 2 q_{\min}\lambda H_t + \lambda H_t\\
    &= q_{\max} + (2\delta-1) q_{\min} + (1- 2 q_{\min})\lambda H_t\\
    & \leq q_{\max} + (2\delta-1) q_{\min} + (1- 2 q_{\min}) \lambda 2^d \quad (\textrm{if depth is at most }d).
    \end{aligned}
\end{equation}
Combining these two cases, 
\begin{equation}
    q^- FPR_t + q^+ FNR_t + \lambda H_t\leq \min\bigg(2 q_{\max}\delta, q_{\max} + (2\delta-1)q_{\min} + (1- 2 q_{\min}) \lambda 2^d \bigg).
\end{equation}
Therefore, if $\theta \geq \min\bigg(2 q_{\max}\delta, q_{\max} + (2\delta-1)q_{\min} + (1- 2 q_{\min}) \lambda 2^d \bigg)$, then $\forall t \in B_{\delta}$, $q^-FPR_t + q^+FNR_t + \lambda H_t \leq \theta$. In other words, $\forall t \in B_{\delta}$, $t \in A_{\theta}$. 
\end{proof}

\textbf{Theorem} \ref{thm:f1-score} \textit{(Accuracy Rashomon set covers F1-score Rashomon set) Let 
\[F_{\delta}:=\left\{t \in \mathcal{T}: \frac{q^- FPR_t + q^+ FNR_t}{2q^+ + q^- FPR_t - q^+ FNR_t} + \lambda H_t \leq \delta\right\}\] be the F1-score Rashomon set. Suppose $q^+ \in (0,1)$, $q^- \in (0,1)$, and $\delta - \lambda H_t \in (0,1)$.
If $\theta \geq \min\Big(\max\big(\frac{2q^+\delta}{1-\delta}, \frac{2q^+(\delta - \lambda 2^d)}{1-(\delta - \lambda 2^d)} + \lambda 2^d\big),  \mathbbm{1}[\delta < \sqrt{2}-1]\frac{2\delta}{1+\delta} + \mathbbm{1}[\delta \geq \sqrt{2}-1](\delta + 3 -2\sqrt{2})\Big)$, then $\forall t \in F_{\delta}$, $t\in A_{\theta}$.
}

\begin{proof}
Similar to Theorem \ref{thm:bacc}, we first get the bound values through geometric intuitions, and then prove the inequalities formally. First, we want to represent the F1 loss with $FPR$ and $FNR$ so that we can put it in the $FPR$-$FNR$ plane. The F1-score is harmonic mean of precision and recall and is usually written in terms of $FP$ and $FN$ as $1-\frac{FP+FN}{2N^+ + FP - FN}$, where $N^+$ is the number of positive samples. 
With some operations we can rewrite this formula in terms of $FPR$ and $FPN$, 
\begin{equation*}
\begin{aligned}
    \frac{FP+FN}{2N^+ + FP - FN} &= \frac{(FP+FN)/ n}{(2q^+\times n + FP - FN)/n}\\
    &= \frac{\frac{FP}{n} \times \frac{q^-}{q^-} + \frac{FN}{n} \times \frac{q^+}{q^+}}{2q^+ + \frac{FP}{n}\times \frac{q^-}{q^-} -\frac{FN}{n} \times \frac{q^+}{q^+} }\\
    &= \frac{q^-FPR + q^+FNR}{2q^+ + q^-FPR - q^+FNR }
\end{aligned}
\end{equation*}
Here, 
for simplicity, we use $f1_t:= \frac{q^- FPR_t + q^+ FNR_t}{2q^+ + q^- FPR_t - q^+ FNR_t}$ to denote the F1 loss, which is 1 minus the F1-score of tree $t$. Based on the definition of $F_{\delta}$, $\forall t \in F_{\delta}$, we have

\begin{equation}\label{eq:f1-lp}
    \begin{aligned}
       & \frac{q^- FPR_t + q^+ FNR_t}{2q^+ + q^- FPR_t - q^+ FNR_t} + \lambda H_t \leq \delta\\
       \Rightarrow \quad & q^- FPR_t + q^+ FNR_t  \leq  (\delta - \lambda H_t) \times (2q^+ + q^- FPR_t - q^+ FNR_t)\\
       \Rightarrow \quad & q^+ FNR_t (1+\delta -\lambda H_t) \leq 2q^+ (\delta - \lambda H_t) + q^-(\delta-\lambda H_t-1) FPR_t\\
       \Rightarrow \quad & FNR_t \leq \frac{2(\delta - \lambda H_t)}{1+\delta-\lambda H_t} + \frac{q^-(\delta-\lambda H_t - 1)}{q^+(1+\delta-\lambda H_t)} FPR_t \quad (1+\delta-\lambda H_t>1)
    \end{aligned}
\end{equation}

This is a line. The slope of the F1-score boundary line is $\frac{q^-(\delta-\lambda H_t - 1)}{q^+(1+\delta-\lambda H_t)}$ and the slope of the accuracy boundary is $\frac{-q^-}{q^+}$. Since $\delta - \lambda H_t \in (0,1), \frac{\delta-\lambda H_t - 1}{1+\delta - \lambda H_t} < 0$ and $|\frac{\delta-\lambda H_t - 1}{1+\delta - \lambda H_t}| < 1$. Therefore, both slopes are negative and the slope of F1-score boundary is always larger than the slope of the accuracy boundary (see Figure \ref{fig:f1_explanation}). In addition, we can also show the intercept, which is also the upper bound of $FNR$ in Eq \eqref{eq:f1-lp}, $\frac{2(\delta-\lambda H_t)}{1+\delta+\lambda H_t}<1$, since $\delta-\lambda H_t\in (0,1)$ and $\frac{2(\delta-\lambda H_t)}{1+\delta+\lambda H_t}$ is monotonically increasing with $\delta-\lambda H_t$. Therefore,
\begin{equation} \label{eq:fnr_le1}
    FNR<1.
\end{equation}
% smaller than  $|\frac{q^-\delta-\lambda H_t - 1}{q^+(1+\delta - \lambda H_t})| \leq |\frac{-q^-}{q^+}|$. 
Since $0 \leq FPR, FNR \leq 1$, $\forall t \in F_{\delta}$, there are also two different cases that can bound the accuracy of tree $t$ (see Figure \ref{fig:f1_explanation}). 
\begin{figure}
    \centering
    \includegraphics[width=0.95\linewidth]{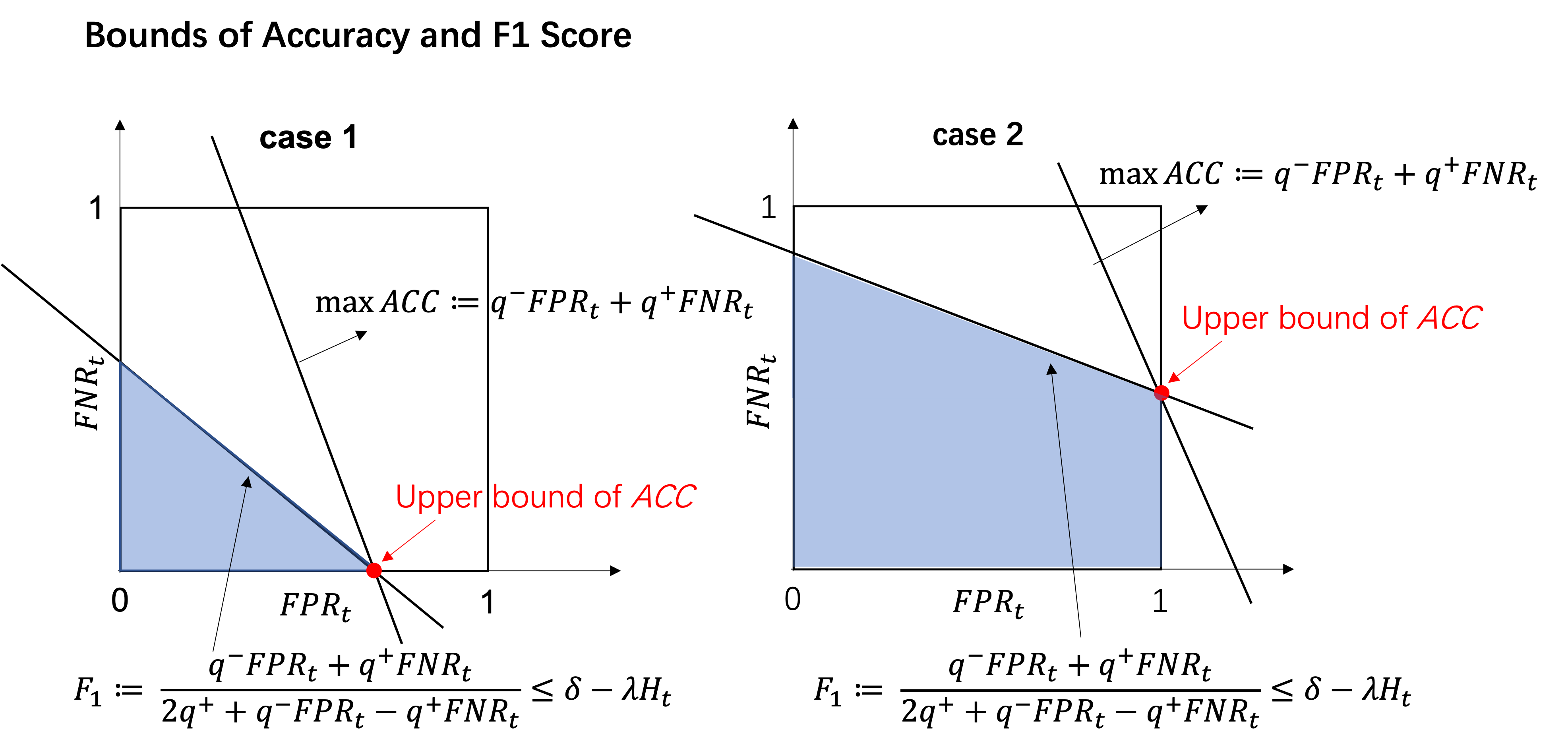}
    \caption{Two cases that can bound the accuracy of tree $t$. Note that for a tree $t$, its $FNR_t$ and $FPR_t$ are in range $[0,1]$. Left: when $FPR_t < 1$, the blue area identifies the feasible region. The point that maximize the accuracy is when $FNR_t=0$.
    Right: when $FPR_t = 1$, the blue area identifies the feasible region. The point that maximize the accuracy is when the line intersects the right boundary. }
    \label{fig:f1_explanation}
\end{figure}

\noindent Case 1: 
Based on Figure \ref{fig:f1_explanation} left, when $FPR_t < 1$, the point that maximizes the accuracy is when $FNR_t=0$. Using Eq \eqref{eq:f1-lp}, we know the point is $(\frac{-2q^+(\delta - \lambda H_t)}{q^-(\delta-\lambda H_t-1)}, 0)$, and the corresponding loss is $\frac{2q^+(\delta - \lambda H_t)}{1 + \lambda H_t -\delta}$. Formally, we want to show 

\begin{equation}\label{eq:fscore-case1}
    q^-FPR_t + q^+FNR_t \leq \frac{2q^+(\delta - \lambda H_t)}{1-(\delta - \lambda H_t)}
\end{equation}
Eq \eqref{eq:fscore-case1} is shown as follows, 
\begin{eqnarray}
    q^-FPR_t + q^+FNR_t \!&\leq&\! \frac{q^-FPR_t + q^+FNR_t}{1-FNR_t} \ \  ( \textrm{Eq \eqref{eq:fnr_le1}}, \textrm{and denominator}\leq 1) \\
    &=& \frac{2q^+ (q^-FPR_t + q^+FNR_t)}{2q^+(1-FNR_t)}\\
    &=&\frac{2q^+ (q^-FPR_t + q^+FNR_t)/(2q^+ + q^- FPR_t - q^+ FNR_t)}{(2q^+ - 2q^+ FNR_t)/(2q^+ + q^- FPR_t - q^+ FNR_t)}\\
    &=&\frac{2q^+ (q^-FPR_t + q^+FNR_t)/(2q^+ + q^- FPR_t - q^+ FNR_t)}{\frac{2q^+ + q^-FPR_t - q^+FNR_t - q^-FPR_t -q^+ FNR_t}{2q^+ + q^- FPR_t - q^+ FNR_t}}\\
    &=& \frac{2q^+ (q^-FPR_t + q^+FNR_t)/(2q^+ + q^- FPR_t - q^+ FNR_t)}{1-f1_t}\\
    &=&\frac{2q^+ f1_t}{1-f1_t}. \label{eq:f1_case1_induction}
\end{eqnarray}
Note that the denominator cannot be 0, because $FNR<1$ as shown in Eq \eqref{eq:fnr_le1}. Eq \eqref{eq:f1_case1_induction} is monotonically increasing in $f1_t$, since its first derivative is $\frac{2q^+(1-f1_t) - 2q^+f1_t(-1)}{(1-f1_t)^2} = \frac{2q^+}{(1-f1_t)^2} > 0$. Since $t\in F_{\delta}$, the maximum F1-score is $\delta-\lambda H_t$. Therefore,
\begin{equation}
    q^-FPR_t + q^+FNR_t \leq \frac{2q^+ f1_t}{1-f1_t} \leq \frac{2q^+(\delta - \lambda H_t)}{1-(\delta - \lambda H_t)}.
\end{equation}
Adding the leaf penalty on both sides, we get 
\begin{equation}
    q^-FPR_t + q^+FNR_t + \lambda H_t \leq \frac{2q^+(\delta - \lambda H_t)}{1-(\delta - \lambda H_t)} + \lambda H_t.
\end{equation}

\noindent Since $H_t$ is a variable, for simplicity, we let $a:=\lambda H_t$. Then we can define $g(a) :=  \frac{2q^+(\delta - a)}{1-(\delta - a)} + a$. 

\noindent Solving $g'(a) = \frac{-2q^+}{(1-\delta + a)^2} + 1= 0$, we get $a = \delta-1 \pm \sqrt{2q^+}$. Since $a$ is the leaf penalty, $a \geq 0$, we have only one solution for $a$. That is, $a=\delta-1 + \sqrt{2q^+}$. Now we consider $g''(a) = \frac{4q^+(1-\delta+a)}{(1-\delta+a)^4}$, and, plugging in our solution for $a$, we see  $g''(a)\geq 0$. Therefore, $g(a)$ achieves the minimum value when $a=\delta-1 + \sqrt{2q^+}$. 

%Given $0 \leq a \leq \lambda 2^d$, we know $g'(a)$ increases as $a$ increases. 
If $0\leq \delta-1 + \sqrt{2q^+} \leq \lambda 2^d$, then for $a \in [0, \delta-1+\sqrt{2q^+}], g'(a)\leq 0$, indicating $g(a)$ monotonically decreases in this range, while for $a \in [\delta-1+\sqrt{2q^+}, \lambda 2^d], g'(a)\geq 0$, indicating $g(a)$ monotonically increases. Therefore, for $a \in [0, \lambda 2^d]$, $g(a)$ first increases and then decreases, and thus maximized at the two ends, i.e., 
$$\max g(a) = \max(g(a=0), g(a=\lambda 2^d)) = \max\left(\frac{2q^+\delta}{1-\delta}, \frac{2q^+(\delta - \lambda 2^d)}{1-(\delta - \lambda 2^d)} + \lambda 2^d\right).$$ 

\noindent In summary, 
\begin{equation}
     q^-FPR_t + q^+FNR_t + \lambda H_t \leq \max\bigg(\frac{2q^+\delta}{1-\delta}, \frac{2q^+(\delta - \lambda 2^d)}{1-(\delta - \lambda 2^d)} + \lambda 2^d\bigg).
\end{equation}

\noindent Case 2: Inspired by Figure \ref{fig:f1_explanation} right, we can find that the maximizer when $FPR_t = 1$ at $(1, \frac{(q^++1)(\delta - \lambda H_t)-q^-}{q^+(1+\delta-\lambda H_t)})$ using Eq \eqref{eq:f1-lp}. And the corresponding loss is $\frac{2(\delta - \lambda H_t)}{1+\delta-\lambda H_t}$. Formally, we want to show 

\begin{equation}\label{eq:fscore-case2}
    q^-FPR_t + q^+FNR_t \leq \frac{2(\delta-\lambda H_t)}{1+\delta - \lambda H_t}. 
\end{equation}
To show Eq (\ref{eq:fscore-case2}),
\begin{eqnarray}
    q^-FPR_t + q^+FNR_t &\!\leq\!& \frac{q^-FPR_t + q^+FNR_t}{q^+ + q^- FPR_t} \ (FPR_t\! \leq\! 1, \textrm{denominator}\!\in\! (0, \!1])\\
    &\!=\!& \frac{(q^-FPR_t + q^+FNR_t)/(2q^+ + q^- FPR_t - q^+ FNR_t)}{(q^+ + q^- FPR_t)/(2q^+ + q^- FPR_t - q^+ FNR_t)}\\
    &=&\frac{f1_t}{(1+f1_t)/2}\\
    &=& \frac{2f1_t}{1+f1_t}.\label{eq:f1_case2_induction}
\end{eqnarray}

\noindent Eq \eqref{eq:f1_case2_induction} is monotonically increasing with $f1$, since its derivative $=\frac{2(1+f1_t) - 2f1_t}{(1+f1_t)^2} = \frac{2}{(1+f1_t)^2} \geq 0$. Moreover, since $t \in F_{\delta}$, the maximum f1-score loss is $\delta - \lambda H_t$. Therefore, 
\begin{equation}
    q^-FPR_t + q^+FNR_t \leq \frac{2f1_t}{1+f1_t} \leq \frac{2(\delta - \lambda H_t)}{1+\delta - \lambda H_t}. 
\end{equation}
\noindent Adding leaf penelty to two sides, we can get 
\begin{equation}\label{eq:f1_case2_change_var}
    q^-FPR_t + q^+FNR_t + \lambda H_t \leq \frac{2(\delta - \lambda H_t)}{1+\delta - \lambda H_t} + \lambda H_t. 
\end{equation}

\noindent Let's change the variable. Let $a=\lambda H_t$. Then the right-hand side of Eq \eqref{eq:f1_case2_change_var} is a function of $a$. Let $g(a) := \frac{2(\delta - a)}{1+\delta - a} + a$.

\begin{equation}
     \frac{dg}{da} = \frac{-2(1+\delta-a) - 2(\delta-a)(-1)}{(1+\delta-a)^2} + 1
     =\frac{-2}{(1+\delta-a)^2}+1
\end{equation}
Solving $\frac{dg}{da} = 0$, we get 
\begin{equation}
    a = 1+\delta \pm \sqrt{2}.  
\end{equation}
Since $a$ is the leaf penalty term, $\delta-a>0$ according to our assumption. If $a = 1+\delta + \sqrt{2}$, $\delta-a = -2.414<0$ which contradicts the assumption. %is unlikely in real case. 
Therefore, only $a = 1+\delta - \sqrt{2}$ could be the valid stationary point. We then calculate $g''(a) = \frac{-4(1+\delta-a)}{(1+\delta-a)^4} \leq 0$. Therefore, $a=1+\delta-\sqrt{2}$ is the maximizer of $g(a)$.
\begin{enumerate}[label=(\roman*)]
    \item when $\delta + 1 -\sqrt{2} < 0$, we still have that $a$ cannot be smaller than 0. Therefore, the maximum value of $g(a)$ happens when $a=0$, which is equal to $\frac{2\delta}{1+\delta}$. 
    \item when $\delta + 1 -\sqrt{2} \geq 0$, the maximum value of $g(a)$ is when $a=\delta + 1 - \sqrt{2}$, which is equal to $\delta+3-2\sqrt{2}$. 
\end{enumerate}

\noindent In summary, 
\begin{equation}
    q^-FPR_t + q^+FNR_t + \lambda H_t \leq \mathbbm{1}[\delta < \sqrt{2}-1]\frac{2\delta}{1+\delta} + \mathbbm{1}[\delta \geq \sqrt{2}-1](\delta + 3 -2\sqrt{2}). 
\end{equation}

Combining the two cases together, 
\begin{equation}
\begin{aligned}
    q^-FPR_t + q^+FNR_t + \lambda H_t \leq \min\Bigg(&\max\bigg(\frac{2q^+\delta}{1-\delta}, \frac{2q^+(\delta - \lambda 2^d)}{1-(\delta - \lambda 2^d)} + \lambda 2^d\bigg), \\
    &\mathbbm{1}[\delta < \sqrt{2}-1]\frac{2\delta}{1+\delta} + \mathbbm{1}[\delta \geq \sqrt{2}-1](\delta + 3 -2\sqrt{2})\Bigg).
\end{aligned}
\end{equation}

Therefore, If $\theta \geq \min\Big(\max\big(\frac{2q^+\delta}{1-\delta}, \frac{2q^+(\delta - \lambda 2^d)}{1-(\delta - \lambda 2^d)} + \lambda 2^d\big),  \mathbbm{1}[\delta < \sqrt{2}-1]\frac{2\delta}{1+\delta} + \mathbbm{1}[\delta \geq \sqrt{2}-1](\delta + 3 -2\sqrt{2})\Big)$, then $\forall t \in F_{\delta}$, $q^-FPR_t + q^+FNR_t + \lambda H_t \leq \theta$. In other words, $\forall t \in F_{\delta}$, $t \in A_{\theta}$. 
\end{proof}

Recall notation for Theorem \ref{thm:opt_tree_rm_instance} and \ref{thm:rset_rm_instance}. Let $\tilde{t}^*$ be the optimal tree trained on  $\{\x_{[\backslash K, \cdot]}, \y_{[\backslash K]}\}$ where $K$ is a set of indices of instances that we wish to analyze. We denote $|K|$ as the cardinality of the set $K$. %Let $$R_{set}(\epsilon, t^*, \x,\y):=\{t: Obj(t, \x, \y) \leq (1+\epsilon)\times Obj(t^*, \x, \y)\}$$ be the $\epsilon$-Rashomon set on the original full dataset. 
Overloading notation to include the dataset, let $ R_{set}(\epsilon, t^*, \mathcal{T},\x,\y) = R_{set}(\epsilon, t^*, \mathcal{T})$ (see Eq \eqref{def:eps-rashomon}) be the Rashomon set of the original dataset, 
%The $\epsilon$-Rashomon set of the original full dataset is defined by Eq \ref{eq:rset_def},
where $t_{\textrm{ref}}=t^*$ is the optimal tree trained on the original dataset, and we define the $\epsilon'$-Rashomon set on the reduced dataset as $$R_{set}(\epsilon', \tilde{t}^*,\mathcal{T}, \x_{[\backslash K, \cdot]}, \y_{[\backslash K]})\h2:=\h2\left\{t \in \mathcal{T}: Obj(t, \x_{[\backslash K, \cdot]}, \y_{[\backslash K]}) \h2\leq\h2 (1+\epsilon')\h2\times\h2 Obj(\tilde{t}^*, \x_{[\backslash K, \cdot]}, \y_{[\backslash K]})\right\}.$$

\textbf{Theorem} \ref{thm:opt_tree_rm_instance} \textit{(Optimal tree after removing a group of instances is still in full-dataset Rashomon set)
If $\epsilon \geq \frac{2|K|}{n\times Obj(t^*,\x,\y)}, \tilde{t}^* \in R_{set}(\epsilon, t^*, \mathcal{T}, \x,\y)$. } 

\begin{proof}
The objective of $t^*$ on the original dataset and the reduced dataset are $$Obj(t^*, \x, \y) = \frac{1}{n}\sum_{i=1}^n \mathbbm{1}[y_i\neq \hat{y}_i^{t^*}] + \lambda H_{t^*} \ \  \textrm{and} \ \  Obj(t^*, \x_{[\backslash K,\cdot]}, \y_{[\backslash K]} )=\frac{1}{n-|K|} \sum_{i\notin K}\mathbbm{1}[y_i\neq \hat{y}_i^{t^*}]+ \lambda H_{t^*}.$$ 
Similarly, the objective of $\tilde{t}^*$ on the original dataset and the reduced dataset are
$$Obj(\tilde{t}^*, \x,\y) = \frac{1}{n}\sum_{i=1}^n \mathbbm{1}[y_i \neq \hat{y}_i^{\tilde{t}^*}] + \lambda H_{\tilde{t}^*} \ \  \textrm{and} \ \  Obj(\tilde{t}^*, \x_{[\backslash K,\cdot]}, \y_{[\backslash K]})=\frac{1}{n-|K|} \sum_{i\notin K}\mathbbm{1}[y_i\neq \hat{y}_i^{\tilde{t}^*}]+ \lambda H_{\tilde{t}^*}.$$

Since $\tilde{t}^*$ is the optimal tree trained on $\{\x_{[\backslash K, \cdot]}, \y_{[\backslash K]}\}$, 
\begin{equation}\label{eq:opt_tree_rm_instance_step0}
    Obj(\tilde{t}^*, \x_{[\backslash K, \cdot]}, \y_{[\backslash K]}) \leq Obj(t^*, \x_{[\backslash K, \cdot]}, \y_{[\backslash K]}). 
\end{equation}

Step 1: bound the difference between $Obj(t^*, \x, \y)$ and $Obj(t^*, \x_{[\backslash K, \cdot]}, \y_{[\backslash K]})$

Since $\sum_{i=1}^n \mathbbm{1}[y_i \neq \hat{y}_i^{t^*}] = \sum_{i\notin K}  \mathbbm{1}[y_i \neq \hat{y}_i^{t^*}] + \sum_{i \in K} \mathbbm{1}[y_i \neq \hat{y}_i^{t^*}]$, we can get 
\begin{equation}\label{eq:opt_tree_rm_instance_step1}
    \begin{aligned}
    Obj(t^*, \x, \y) - Obj(t^*, \x_{[\backslash K, \cdot]}, \y_{[\backslash K]}) &= \frac{1}{n} \sum_{i=1}^n\mathbbm{1}[y_i \neq \hat{y}_i^{t^*}] - \frac{1}{n-|K|} \sum_{i\notin K}\mathbbm{1}[y_i \neq \hat{y}_i^{t^*}]\\
    &= \frac{1}{n} (\sum_{i\notin K}  \mathbbm{1}[y_i \neq \hat{y}_i^{t^*}] + \sum_{i \in K} \mathbbm{1}[y_i \neq \hat{y}_i^{t^*}]) - \frac{1}{n-|K|} \sum_{i\notin K}\mathbbm{1}[y_i \neq \hat{y}_i^{t^*}]\\
    &=\frac{(n-|K|)\sum_{i\notin K}  \mathbbm{1}[y_i \neq \hat{y}_i^{t^*}] + (n-|K|) \sum_{i \in K}\mathbbm{1}[y_i \neq \hat{y}_i^{t^*}]}{n(n-|K|)} \\
    & \quad - \frac{n \sum_{i\notin K}  \mathbbm{1}[y_i \neq \hat{y}_i^{t^*}]}{n(n-|K|)}\\
    &=\frac{(n-|K|)\sum_{i \in K} \mathbbm{1}[y_i \neq \hat{y}_i^{t^*}] - |K|\times \sum_{i\notin K}  \mathbbm{1}[y_i \neq \hat{y}_i^{t^*}]}{n(n-|K|)}.
    \end{aligned}
\end{equation}
Since $\sum_{i \in K}\mathbbm{1}[y_i \neq \hat{y}_i^{t^*}] \in \{0,1,\cdots,|K|\}$, $$(n-|K|) \sum_{i \in K}\mathbbm{1}[y_i \neq \hat{y}_i^{t^*}] \in \{0,n-|K|,\cdots, (n-|K|)|K|\}.$$ 

Similarly, $\sum_{i\notin K}\mathbbm{1}[y_i \neq \hat{y}_i^{t^*}] \in \{0,1,\cdots, n-|K|\}$, therefore, $$|K|\sum_{i\notin K}\mathbbm{1}[y_i \neq \hat{y}_i^{t^*}] \in \{0,|K|,\cdots, |K|(n-|K|)\}.$$ 

Combining the extreme values of these two terms together, we know Eq \eqref{eq:opt_tree_rm_instance_step1} is
\begin{equation}\label{eq:opt_tree_rm_instance_step1_end}
    Obj(t^*, \x, \y) - Obj(t^*, \x_{[\backslash K, \cdot]}, \y_{[\backslash K]}) \geq \frac{-|K|}{n}. 
\end{equation}

Step 2: bound the difference between $Obj(\tilde{t}^*, \x_{[\backslash K, \cdot]}, \y_{[\backslash K]})$ and $Obj(\tilde{t}^*, \x, \y)$. 
\begin{equation}\label{eq:opt_tree_rm_instance_step2}
    \begin{aligned}
    Obj(\tilde{t}^*, \x_{[\backslash K, \cdot]}, \y_{[\backslash K]}) - Obj(\tilde{t}^*, \x, \y) &= \frac{1}{n-|K|} \sum_{i\notin K}\mathbbm{1}[y_i \neq \hat{y}_i^{\tilde{t}^*}] - \frac{1}{n} \sum_{i=1}^n\mathbbm{1}[y_i \neq \hat{y}_i^{\tilde{t}^*}]\\
    &= \frac{1}{n-|K|} \sum_{i\notin K}\mathbbm{1}[y_i \neq \hat{y}_i^{\tilde{t}^*}] - \frac{1}{n} \left[\sum_{i\in K}\mathbbm{1}[y_i \neq \hat{y}_i^{\tilde{t}^*}] + \sum_{i\notin K}\mathbbm{1}[y_i \neq \hat{y}_i^{\tilde{t}^*}]\right]\\
    &= \frac{n \sum_{i\notin K}\mathbbm{1}[y_i \neq \hat{y}_i^{\tilde{t}^*}]}{(n-|K|)n}  \\
    &\quad - \frac{(n-|K|)(\sum_{i\in K}\mathbbm{1}[y_i \neq \hat{y}_i^{\tilde{t}^*}] + \sum_{i\notin K}\mathbbm{1}[y_i \neq \hat{y}_i^{\tilde{t}^*}])}{n(n-|K|)} \\
    &= \frac{|K|\sum_{i\notin K}\mathbbm{1}[y_i \neq \hat{y}_i^{\tilde{t}^*}] - (n-|K|)\sum_{i\in K}\mathbbm{1}[y_i \neq \hat{y}_i^{\tilde{t}^*}]}{n(n-|K|)}.
    \end{aligned}
\end{equation}
Since $\sum_{i\notin K}\mathbbm{1}[y_i \neq \hat{y}_i^{\tilde{t}^*}]) \in \{0,1,\cdots, n-|K|\}$, $$|K|\sum_{i\notin K}\mathbbm{1}[y_i \neq \hat{y}_i^{\tilde{t}^*}]) \in \{0,|K|,\cdots,|K|(n-|K|)\}.$$
Similarly, $\sum_{i\in K}\mathbbm{1}[y_i \neq \hat{y}_i^{\tilde{t}^*}]) \in \{0,1,\cdots, |K|\}$, $$(n-|K|)\sum_{i\in K}\mathbbm{1}[y_i \neq \hat{y}_i^{\tilde{t}^*}]) \in \{0,n-|K|,\cdots, |K|(n-|K|)\}.$$

Combining the extreme values of these two terms, we know Eq \eqref{eq:opt_tree_rm_instance_step2} obeys 
\begin{equation}\label{eq:opt_tree_rm_instance_step2_end}
    Obj(\tilde{t}^*, \x_{[\backslash K, \cdot]}, \y_{[\backslash K]}) - Obj(\tilde{t}^*, \x, \y) \geq \frac{-|K|}{n}.
\end{equation}

Given Eq \eqref{eq:opt_tree_rm_instance_step0}, \eqref{eq:opt_tree_rm_instance_step1_end}, \eqref{eq:opt_tree_rm_instance_step2_end}, we can get 
\begin{equation}
    \begin{aligned}
    Obj(t^*, \x, \y) &\geq Obj(t^*, \x_{[\backslash K, \cdot]}, \y_{[\backslash K]}) + \frac{-|K|}{n} \quad \textrm{(see Eq \eqref{eq:opt_tree_rm_instance_step1_end})}\\
    &\geq Obj(\tilde{t}^*, \x_{[\backslash K, \cdot]}, \y_{[\backslash K]}) + \frac{-|K|}{n} \quad \textrm{(see Eq \eqref{eq:opt_tree_rm_instance_step0})}\\
    &\geq Obj(\tilde{t}^*, \x, \y) + \frac{-2|K|}{n} \quad \textrm{(see Eq \eqref{eq:opt_tree_rm_instance_step2_end})}.
    \end{aligned}
\end{equation}
In other words, $Obj(\tilde{t}^*, \x,\y)\leq Obj(t^*, \x,\y) + \frac{2|K|}{n}$. To guarantee that $Rset(\epsilon, t^*, \mathcal{T}, \x, \y)$ covers $\tilde{t}^*$, i.e., $Obj(t^*, \x,\y) + \frac{2|K|}{n} \leq (1+\epsilon) \times Obj(t^*, \x,\y)$, we impose that
$\epsilon \geq \frac{2|K|}{n \times Obj(t^*, \x,\y)}$. 
\end{proof}

\textbf{Theorem} \ref{thm:rset_rm_instance}  \textit{(Rashomon set after removing a group of instances is within full-dataset Rashomon set)
If $\epsilon \geq \epsilon' + \frac{(2+\epsilon')|K|}{n \times Obj(t^*, \x, \y)}$, then $\forall t \in R_{\set}(\epsilon', \tilde{t}^*, \mathcal{T}, \x_{[\backslash K, \cdot]}, \y_{[\backslash K]})$, we have $t\in R_{\set}(\epsilon, t^*, \mathcal{T}, \x, \y)$.  } 

\begin{proof}
Given a decision tree $t \in R_{set}(\epsilon', \tilde{t}^*,\mathcal{T}, \x_{[\backslash K, \cdot]}, \y_{[\backslash K]})$, we know 
\begin{equation}\label{eq:rset_rm_instance_step1}
\begin{aligned}
    Obj(t, \x_{[\backslash K, \cdot]}, \y_{[\backslash K]}) &\leq (1+\epsilon')\times Obj(\tilde{t}^*, \x_{[\backslash K, \cdot]}, \y_{[\backslash K]}) \\
    & \leq (1+\epsilon')\times Obj(t^*, \x_{[\backslash K, \cdot]}, \y_{[\backslash K]}) \ \ (\tilde{t}^* \textrm{is the optimal tree on } \{\x_{[\backslash K, \cdot]}, \y_{[\backslash K]}\})\\
    &\leq (1+\epsilon')\times \left[ Obj(t^*, \x, \y) + \frac{|K|}{n}\right] \ \ (\textrm{see Eq \eqref{eq:opt_tree_rm_instance_step1_end}}).\\
\end{aligned}
\end{equation}
Then we bound the difference between $Obj(t, \x_{[\backslash K, \cdot]}, \y_{[\backslash K]})$ and $Obj(t, \x, \y)$. 
\begin{equation}\label{eq:rset_rm_instance_step2}
    \begin{aligned}
    Obj(t, \x_{[\backslash K, \cdot]}, \y_{[\backslash K]}) - Obj(t, \x, \y) &= \frac{1}{n-|K|} \sum_{i\notin K}\mathbbm{1}[y_i \neq \hat{y}_i^{t}] - \frac{1}{n} \sum_{i=1}^n\mathbbm{1}[y_i \neq \hat{y}_i^{t}]\\
    &= \frac{1}{n-|K|} \sum_{i\notin K}\mathbbm{1}[y_i \neq \hat{y}_i^{t}] - \frac{1}{n} \left[\sum_{i\in K}\mathbbm{1}[y_i \neq \hat{y}_i^{t}] + \sum_{i\notin K}\mathbbm{1}[y_i \neq \hat{y}_i^{t}]\right]\\
    &= \frac{n \sum_{i\notin K}\mathbbm{1}[y_i \neq \hat{y}_i^{t}] - (n-|K|)\sum_{i\notin K}\mathbbm{1}[y_i \neq \hat{y}_i^{t}]}{(n-|K|)n}  \\
    &\quad - \frac{(n-|K|)(\sum_{i\in K}\mathbbm{1}[y_i \neq \hat{y}_i^{t}])}{n(n-|K|)} \\
    &= \frac{|K|\sum_{i\notin K}\mathbbm{1}[y_i \neq \hat{y}_i^{t}] - (n-|K|)\sum_{i\in K}\mathbbm{1}[y_i \neq \hat{y}_i^{t}]}{n(n-|K|)}.
    \end{aligned}
\end{equation}

Since $\sum_{i\notin K}\mathbbm{1}[y_i \neq \hat{y}_i^{t}] \in \{0,1,\cdots, n-|K|\}$, $$|K|\sum_{i\notin K}\mathbbm{1}[y_i \neq \hat{y}_i^t] \in \{0,|K|,\cdots, |K|(n-|K|)\}.$$
Similarly, $\sum_{i\in K}\mathbbm{1}[y_i \neq \hat{y}_i^{t}]) \in \{0,1,\cdots,|K|\}$, $$(n-|K|)\sum_{i\in K}\mathbbm{1}[y_i \neq \hat{y}_i^{t}]) \in \{0,n-|K|,\cdots, |K|(n-|K|)\}.$$
Combining the extreme values of these two terms we know Eq \eqref{eq:rset_rm_instance_step2} obeys 
\begin{equation}\label{eq:rset_rm_instance_step2_end}
    Obj(t, \x_{[\backslash K, \cdot]}, \y_{[\backslash K]}) - Obj(t, \x, \y) \geq \frac{-|K|}{n}.
\end{equation}
% Using almost the same derivation in the step two (see Eq \ref{eq:opt_tree_rm_instance_step2}) in the proof of Theorem \ref{thm:opt_tree_rm_instance}, 
% \begin{equation}\label{eq:rset_rm_instance_step2}
%      Obj(t, \x_{[\backslash K, \cdot]}, \y_{[\backslash K]}) - Obj(t, \x, \y) \geq \frac{-|K|}{n}.
% \end{equation}
Combining Eq \eqref{eq:rset_rm_instance_step1} and \eqref{eq:rset_rm_instance_step2_end}, we get 
\begin{equation}\label{eq:rset_rm_instance_step3}
\begin{aligned}
    Obj(t, \x, \y) &\leq Obj(t, \x_{[\backslash K, \cdot]}, \y_{[\backslash K]}) + \frac{|K|}{n} \ \ (\textrm{see Eq \ref{eq:rset_rm_instance_step2_end}}) \\
    &\leq (1+\epsilon')\times \left[Obj(t^*, \x, \y) + \frac{|K|}{n}\right]+\frac{|K|}{n} \ \ (\textrm{see Eq \eqref{eq:rset_rm_instance_step1}}) \\
    &=(1+\epsilon')\times Obj(t^*, \x, \y) + \frac{(2+\epsilon')|K|}{n}.
\end{aligned}
\end{equation}
Thus, when $\epsilon \geq \epsilon' + \frac{(2+\epsilon')|K|}{n\times Obj(t^*, \x, \y)}$, Eq \eqref{eq:rset_rm_instance_step3} extends to 
\begin{equation}
\begin{aligned}
    Obj(t, \x, \y) & \leq (1+\epsilon')\times Obj(t^*, \x, \y) + \frac{(2+\epsilon')|K|}{n}\\
    & \leq (1+\epsilon')\times Obj(t^*, \x, \y) + (\epsilon - \epsilon')\times Obj(t^*, \x, \y)\\
    &= (1+\epsilon)\times Obj(t^*, \x, \y).
\end{aligned}
\end{equation}

In other words, when $\epsilon \geq \epsilon' + \frac{(2+\epsilon')|K|}{n\times Obj(t^*, \x, \y)}$,
$\forall t \in R_{set}(\epsilon', \tilde{t}^*, \mathcal{T}, \x_{[\backslash K, \cdot]}, \y_{[\backslash K]}), t \in R_{set}(\epsilon, t^*, \mathcal{T}, \x, \y)$.

\end{proof}

\section{Experimental Datasets}
\label{app:datasets}

We present results for 14 datasets: four from the UCI Machine Learning Repository \citepAppendix{appDua:2019} (Car Evaluation, Congressional Voting Records, Monk2, Iris, and Breast Cancer), a penguin dataset \citepAppendix{appgorman2014ecological}, a recidivism dataset (COMPAS) \citepAppendix{appLarsonMaKiAn16}, the Fair Isaac (FICO) credit risk dataset \citepAppendix{appcompetition} used for the Explainable ML Challenge, and four coupon datasets (Bar, Coffee House, Takeaway Food, Cheap Restaurant, and Expensive Restaurant), which were collected on Amazon Mechanical Turk via a survey \citepAppendix{appwang2017bayesian}. We predict which individuals are arrested within two years of release on the COMPAS dataset, whether an individual will default on a loan for the FICO dataset, and whether a customer will accept a coupon for takeaway food or a cheap restaurant depending on their coupon usage history, current conditions while driving, and coupon expiration time on two coupon datasets. All the datasets are publicly available. 

\subsection{Preprocessing}

Table \ref{tab:datasets_details} summarizes all the datasets after preprocessing.  

\begin{table}[ht]
    \centering
    \begin{tabular}{|c|c|c|}\hline
       Dataset & Samples & Binary Features \\\hline\hline
      Car Evaluation & 1728 & 15 \\\hline
      Congressional Voting Records & 435 & 32  \\\hline 
      Monk2 & 169 & 11 \\\hline
      Penguin & 333 & 13 \\\hline
      Iris & 151 & 15 \\\hline
      Breast Cancer & 699 & 10 \\\hline
      COMPAS & 6907 & 12 \\\hline
      FICO & 10459 & 17 \\\hline
      Bar-7 & 1913 & 14 \\\hline 
      Bar & 1913 & 15 \\\hline
      Coffee House & 3816 & 15 \\\hline
      Takeaway Food & 2280 & 15\\\hline
      Cheap Restaurant & 2653 & 15 \\\hline
      Expensive Restaurant & 1417 & 15  \\\hline
    \end{tabular}
    \caption{Preprocessed datasets}
    \label{tab:datasets_details}
\end{table}

\textbf{Car Evaluation, Congressional Voting Records, Monk2, Penguin}: We preprocess these datasets, which contain only categorical features, using one-hot encoding. % binary feature to encode every observable categorical value. 

% \textbf{Iris}: We encode one binary feature for the midpoint between each pair of adjacent values for all continuous features. This yields 123 features.
\textbf{Iris}: We select thresholds produced by splitting each numerical feature into three equal parts. We used an implementation of qcut in Pandas \citepAppendix{appreback2020pandas} to split features. This yields 15 features. 

\textbf{Breast Cancer}: We select features and thresholds that are used by a gradient boosted tree with 40 decision stumps, which are ``Clump\_Thickness $=$ 10'', ``Uniformity\_Cell\_Size$=$1'', ``Uniformity\_Cell\_Size$=$10'', ``Uniformity\_Cell\_Shape$=$1'', ``Marginal\_Adhesion$=$1'', ``Single\_Epithelial\_Cell\_Size$=$2'', ``Bare\_Nuclei$=$1'', ``Bare\_Nuclei$=$10'', ``Normal\_Nucleoli$=$1'', ``Normal\_Nucleoli$=$10''. 

\textbf{COMPAS}: We use the same discretized binary features of COMPAS produced in \citepAppendix{apphu2019optimal}, which are the following: ``sex $=$ Female'', ``age $<$ 21'', ``age $<$ 23'', age $<$ 26, ``age $<$ 46'', ``juvenile felonies $=$ 0'', ``juvenile misdemeanors $=$ 0'', ``juvenile crimes $=$ 0'', ``priors $=$ 0'', ``priors $=$ 1'', ``priors $=$ 2 to 3'', ``priors $>$ 3''.

\textbf{FICO}: We use the same discretized binary features of FICO produced in \citepAppendix{apphu2019optimal}, which are the following: ``External Risk Estimate $<$ 0.49'' , ``External Risk Estimate $<$ 0.65'', ``External Risk Estimate $<$
0.80'', ``Number of Satisfactory Trades $<$ 0.5'', ``Trade Open Time $<$ 0.6'', ``Trade Open Time $<$ 0.85'', ``Trade Frequency $<$ 0.45'', ``Trade Frequency $<$ 0.6'', ``Delinquency $<$ 0.55'', ``Delinquency $<$ 0.75'', ``Installment $<$ 0.5'', ``Installment $<$ 0.7'', ``Inquiry $<$ 0.75'', ``Revolving Balance $<$ 0.4'', ``Revolving Balance $<$ 0.6'', ``Utilization $<$ 0.6'', ``Trade W. Balance $<$ 0.33''.

\textbf{Bar, Coffee House, Takeaway Food, Cheap Restaurant, Expensive Restaurant}: We selected features ``destination'', ``passanger'', ``weather'', ``temperature'', ``time'', ``expiration'', ``gender'', ``age'', ``maritalStatus'', ``childrenNumber'', ``education'', ``occupation'', ``income'', ``Bar'', ``CoffeeHouse'', ``CarryAway'', ``RestaurantLessThan20'', ``Restaurant20To50'', ``toCouponGEQ15min'', ``toCouponGEQ25min'', ``directionSame'' and the label Y, and removed observations with missing values. We used one-hot encoding to transform these categorical features into binary features. We then selected 15 binary features with the highest variable importance value trained using gradient boosted trees with 100 max-depth 3 weak classifiers.

\begin{itemize}
    \item \textbf{Bar}: The selected binary features are ``Bar $=$ 1 to 3'', ``Bar $=$ 4 to 8'', ``Bar $=$ less1'', ``maritalStatus $=$ Single'', ``childrenNumber $=$ 0'', ``Bar $=$ gt8'', ``passanger $=$ Friend(s)'', ``time $=$ 6PM'', ``passanger $=$ Kid(s)'', ``CarryAway $=$ 4 to 8'', ``gender $=$ Female'',	``education $=$ Graduate degree (Masters Doctorate etc.)'', ``Restaurant20To50 $=$ 4 to 8'', ``expiration $=$ 1d'', ``temperature $=$ 55''.
    \item \textbf{Coffee House}: The selected binary features are 
    ``CoffeeHouse $=$ 1 to 3'', ``CoffeeHouse $=$ 4 to 8'', ``CoffeeHouse $=$ gt8'', ``CoffeeHouse $=$ less1'', ``expiration $=$ 1d'', ``destination $=$ No Urgent Place'', ``time $=$ 10AM'', ``direction $=$ same'',	``destination $=$ Home'',	``toCoupon $=$ GEQ15min'', ``Restaurant20To50 $=$ gt8'', ``education $=$ Bachelors degree'', ``time $=$ 10PM'', ``income $=$ \$75000 - \$87499'', ``passanger $=$ Friend(s)''.
    
    \item \textbf{Takeaway Food}:
    The selected binary features are ``expiration $=$ 1d'', ``time $=$ 6PM'', ``CoffeeHouse $=$ gt8'', ``education$=$ Graduate degree'', ``weather $=$ Rainy'', ``maritalStatus $=$ Single'', ``time $=$ 2PM'', ``occupation $=$ Student'', ``income $=$\$62500-\$74999'', ``occupation $=$ Legal'', ``occupation $=$ Installation  Maintenance \& Repair'', ``direction $=$ same'', ``destination $=$ No Urgent Place'', ``income $=$ \$100000 or more'', ``Bar $=$ less1''. 
    
    \item \textbf{Cheap Restaurant}: 
    The selected binary features are ``toCoupon $=$ GEQ25min'', ``expiration $=$ 1d'', ``time $=$ 6PM'', ``destination $=$ No Urgent Place'', ``time $=$ 10PM'', ``CarryAway $=$ less1'', ``passanger $=$ Kid(s)'', ``weather $=$ Snowy'', ``passanger $=$ Alone'', ``income $=$ \$87500 - \$99999'',	``occupation $=$ Retired'', ``CoffeeHouse $=$ gt8'', ``age $=$ 36'', ``weather $=$ Rainy'', ``direction $=$ same''.
    
    \item \textbf{Expensive Restaurant}: The selected binary features are ``expiration $=$ 1d'', ``CoffeeHouse $=$ 1 to 3'', ``Restaurant20To50 $=$ 4 to 8'', ``Restaurant20To50 $=$ 1 to 3'', ``occupation $=$ Office \& Administrative Support'', ``age $=$ 31'', ``Restaurant20To50 $=$ gt8'', ``income $=$\$12500 - \$24999'', ``toCoupon $=$ GEQ15min'', ``occupation $=$ Computer \& Mathematical'', ``time $=$ 10PM'', ``CoffeeHouse $=$ 4 to 8'', ``income$=$\$50000 - \$62499'', ``passanger $=$ Alone'',	``destination $=$ No Urgent Place''.

\end{itemize}

\textbf{Bar-7}: We use the same discretized binary features of Bar-7 produced in \citepAppendix{applin2020generalized}, which are the following: ``passanger $=$ Kids'', ``age $=$ 21'',	``age $=$ 26'',``age $=$ 31'',``age $=$ 36'', ``age $=$ 41'',``age $=$ 46'', ``age $=$ 26'', ``age $=$ 50plus'', ``Bar $=$ 1 to 3'', ``Bar $=$ 4 to 8'', ``Bar $=$ gt8'', ``Bar $=$ less1'', ``Restaurant20to50 $\geq$4, ``direction $=$ same''.

\section{More Experimental Results}\label{app:more_exp_results}

\subsection{Scalability and Efficiency of Calculating Rashomon set with \ourmethod{} versus Baselines}\label{app:exp_scalability}
\textbf{Collection and Setup:} We ran this experiment on 10 datasets: \textbf{Monk2, COMPAS, Car Evaluation, FICO, Congressional Voting Records, Bar, Bar7, Coffee House, Cheap Restaurant, Expensive Restaurant}. To run \ourmethod{}, we set $\lambda$ to 0.01 for Monk2, COMPAS, FICO, Congressional Voting Records, Bar, Bar7, Coffee House and Expensive Restaurant and to 0.005 for Car Evaluation and Cheap Restaurant. These choices yielded sufficiently larger Rashomon sets.
We set $\epsilon$ to 0.15 for Congressional Voting Records and 0.10 for all other datasets. This means we store all models within 10\% of the optimal solution.

We used the R package BART \citepAppendix{appbart_package} and set the number of trees in each iteration to 1. We sampled models from the posterior with 10-iteration intervals between draws. To get sparser models, we used a sparse Dirichlet prior. To get a set of trees from Random Forest and CART, we used RandomForestClassifier from scikit-learn \nocite{scikit-learn}\citepAppendix{appscikit-learn} and set min\_samples\_split to $\max(\lceil 2n\lambda\rceil, 2)$, min\_samples\_leaf to $\lceil n\lambda\rceil$, and max\_leaf\_nodes=$\lfloor 1 / (2 \lambda)\rfloor$. We set max\_features to ``auto'' for Random Forest and ``None'' for CART. For GOSDT, we sampled 75\% of the dataset size, with replacement to improve diversity, and fit an optimal tree on the sampled data. To improve performance for BART, Random Forest, and CART with sampling, we collapsed trivial splits (nodes that have two leaves with the same predicted label) into a single node. We record the time used to construct the Rashomon set and ran the baselines for a similar time to sample trees. Thus, all methods were given the same amount of time to produce good trees. For each dataset and baseline method, we used 5 different random seeds to compute the average and standard deviation of number of trees found in Rashomon set and run time. We did not report errors on \ourmethod{} as it is deterministic. We used results obtained from the first random seed to generate Figure \ref{fig:more-vs-baselines}.

We ran this experiment on a 2.7Ghz (768GB RAM 48 cores) Intel Xeon Gold 6226 processor. We set a 200-GB memory limit.

\textbf{Results:} Figure \ref{fig:more-vs-baselines} compares the Rashomon set with the four baselines on the Car Evaluation, Coffee House, Cheap and Expensive Restaurant datasets. Similar to the patterns shown in Figure \ref{fig:comp_baselines}, the four subfigures on the top show that \ourmethod{} (in purple) found \textbf{\textit{many more distinct trees in the Rashomon set}} than any of the four baselines on all of the datasets. The baseline methods tend to find many duplicated trees, and most trees found by the baselines have objective values higher than the threshold of the Rashomon set, and thus, are not in the Rashomon set. The four subfigures on the bottom show the distribution of objective values of trees from each method. 
We note that in the method GOSDT with sampling the optimal tree for a given subsample of the dataset might not be identical to the optimal tree to the original dataset. 
Thus, GOSDT, which finds provably optimal trees for a given dataset, does not obtain many distinct trees that are in the Rashomon set: instead, many subsamples produce either the same optimal solutions or trees that are outside of the full dataset's Rashomon set, which is why its curve is not visible. 
\ourmethod{} is the only method \textit{guaranteed} to find all trees in the Rashomon set.

Table \ref{tab:more_time_baselines} shows that \ourmethod{} finds \textit{\textbf{more trees in the Rashomon set per second}} than all other baselines on all the datasets. %Though greedy methods such as Random Forest and CART with sampling can run faster overall, they find no or few trees in the Rashomon set. Thus, even if we train a larger forest or sample more subsets to get CART trees, they are unable to cover most trees in the Rashomon set. GOSDT with sampling is able to find relatively more trees in the Rashomon set compared with other baselines, but it runs much longer than constructing the Rashomon set directly. 

\begin{figure}[ht]
    \centering
    \includegraphics[height=0.187\textwidth]{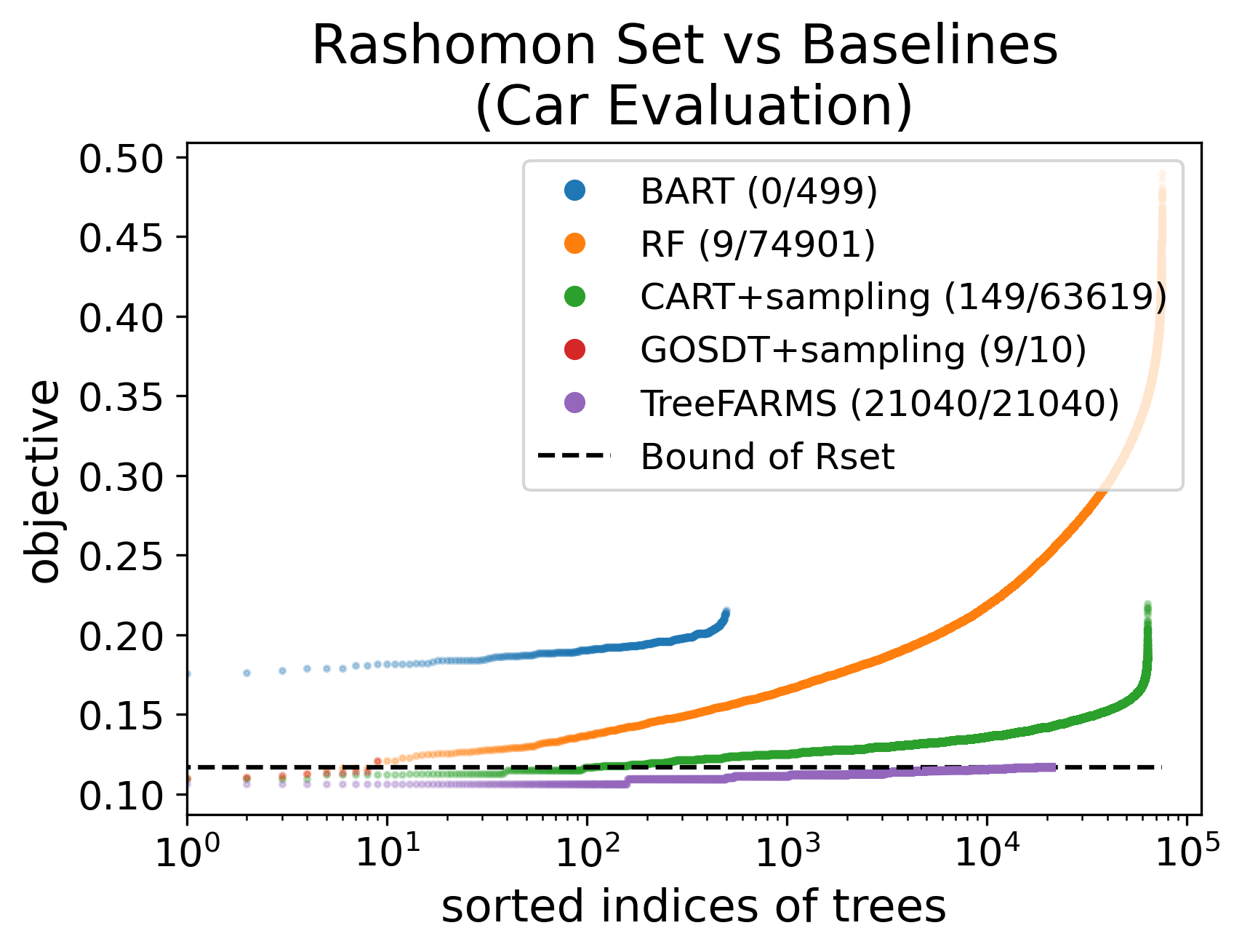}
    \includegraphics[height=0.186\textwidth]{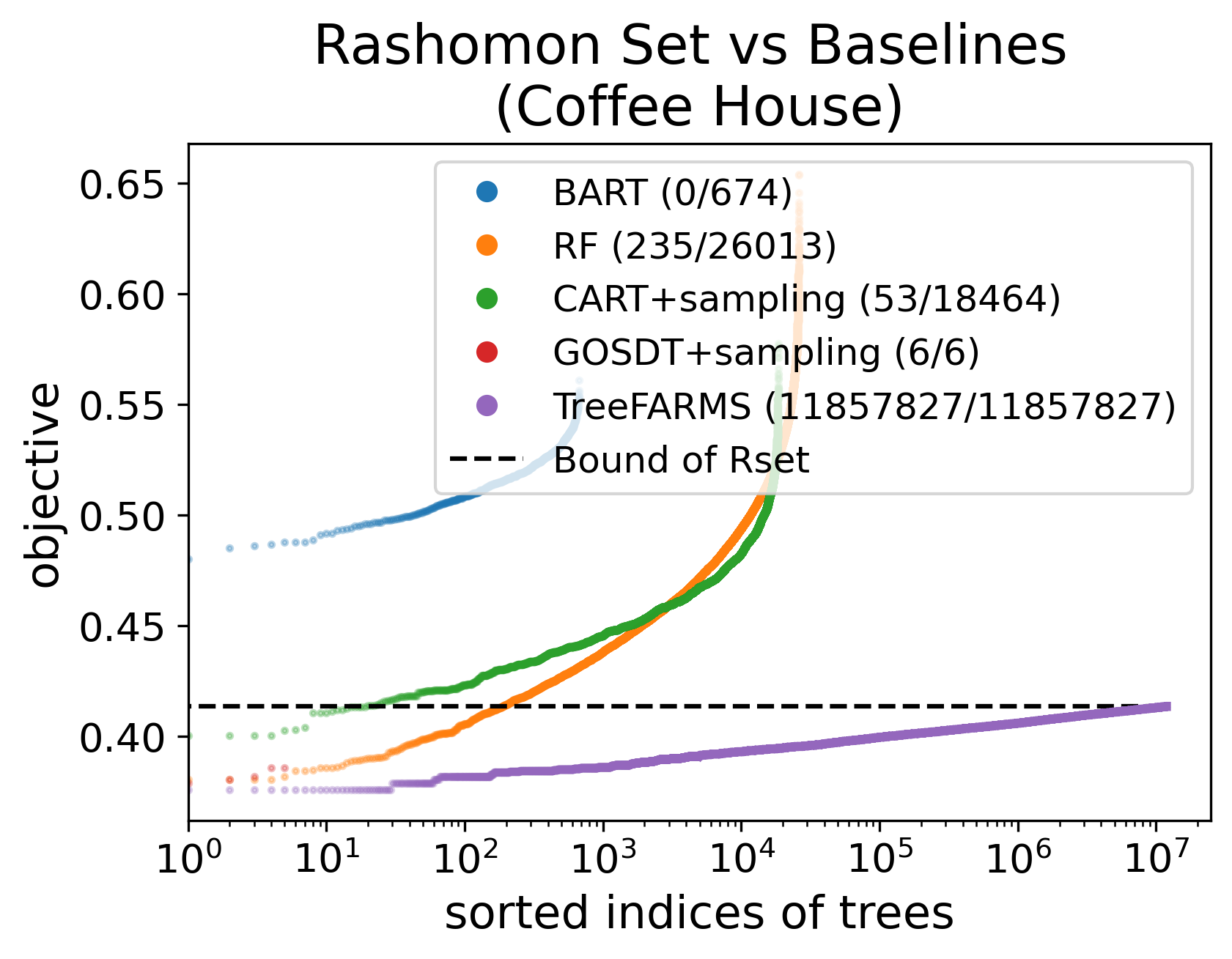}
    \includegraphics[height=0.187\textwidth]{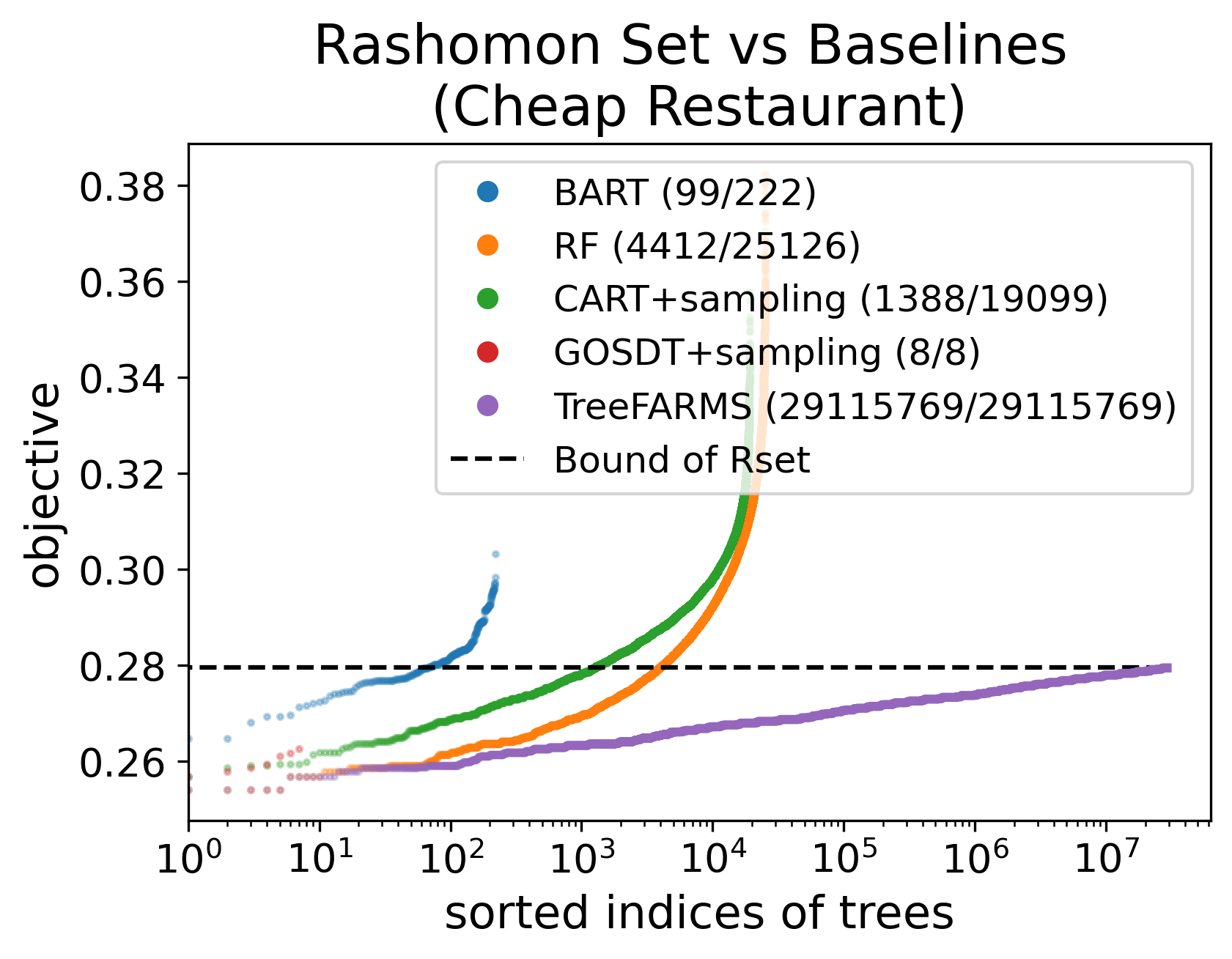}
    \includegraphics[height=0.187\textwidth]{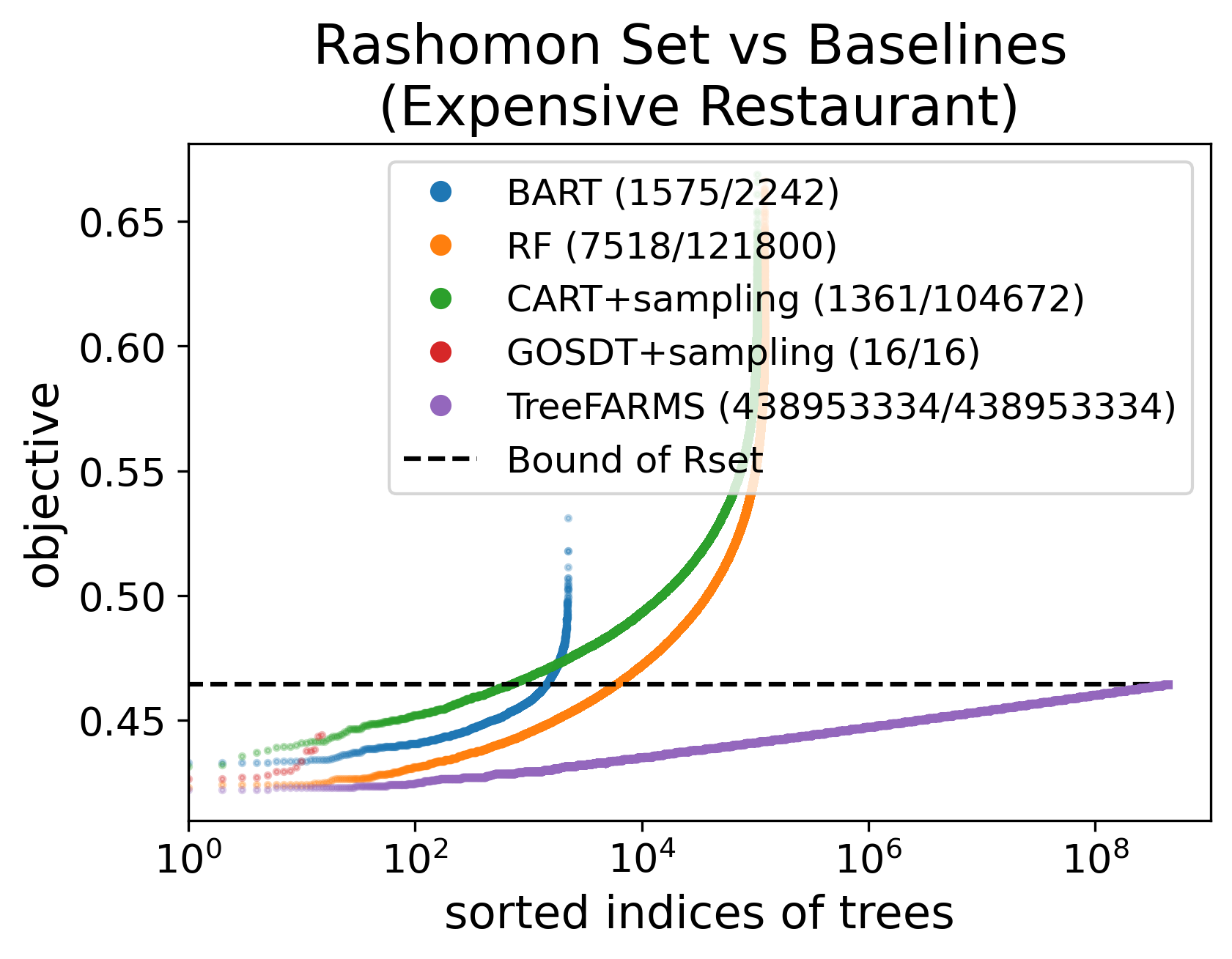}
    \includegraphics[height=0.187\textwidth]{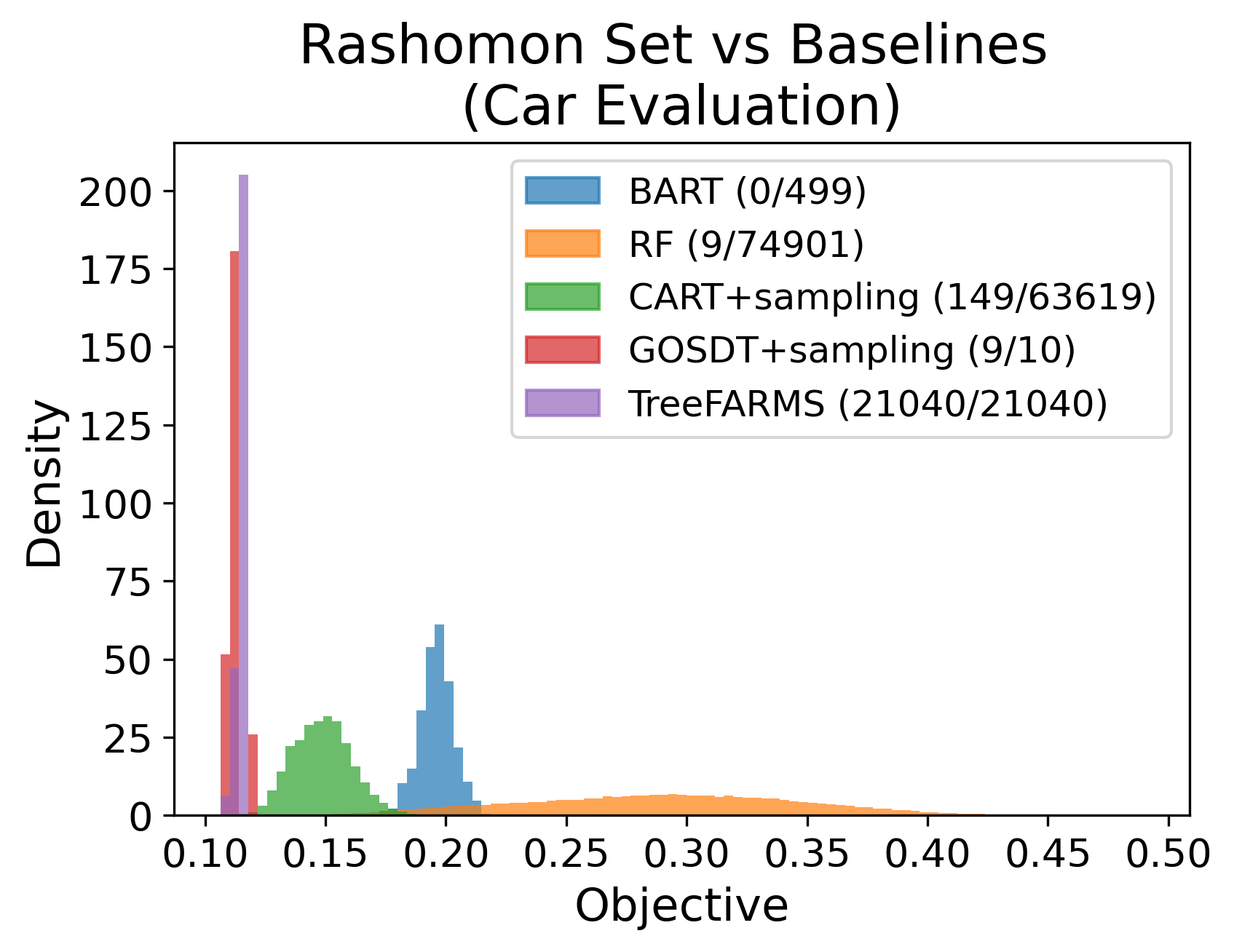}
    \includegraphics[height=0.187\textwidth]{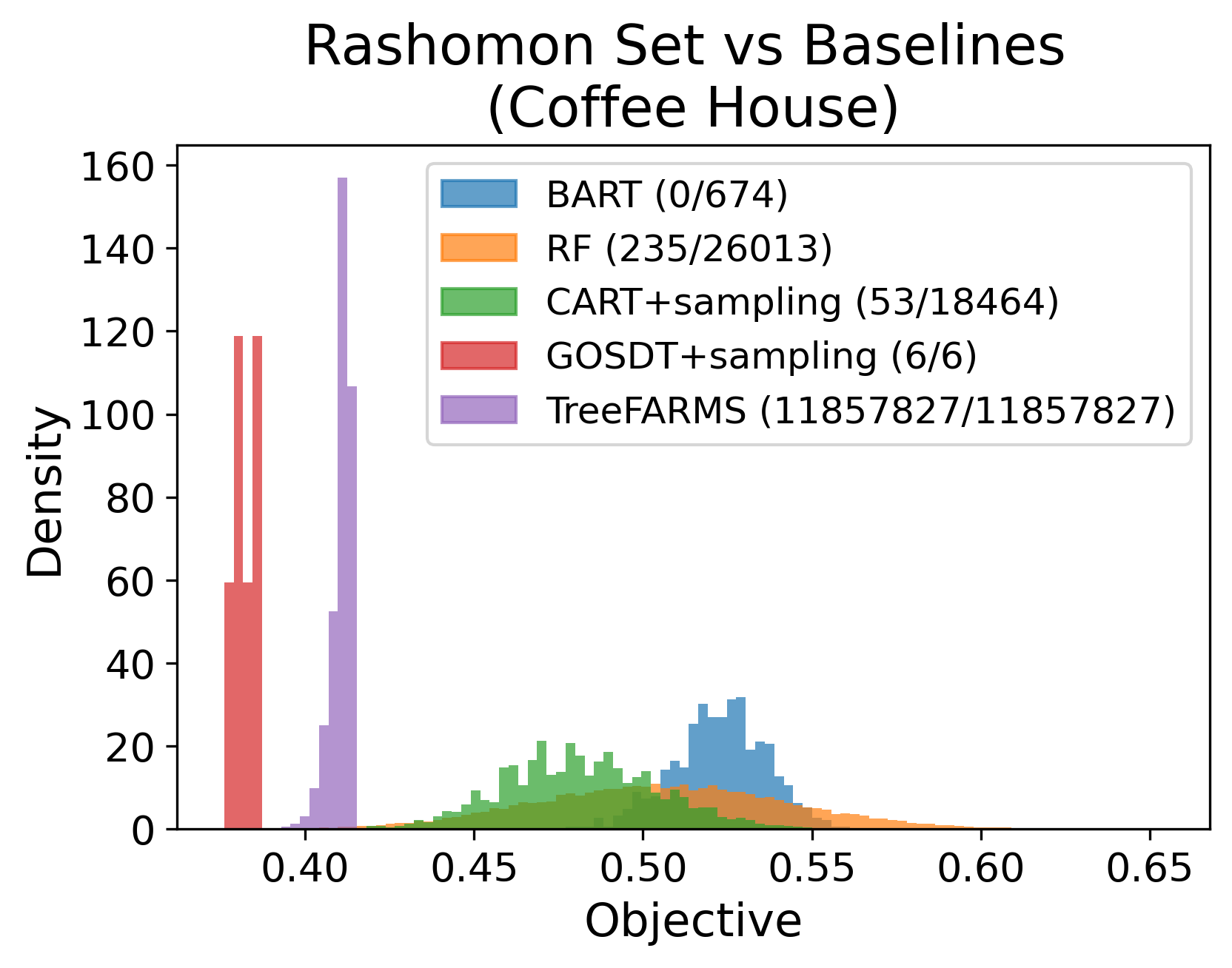}
    \includegraphics[height=0.187\textwidth]{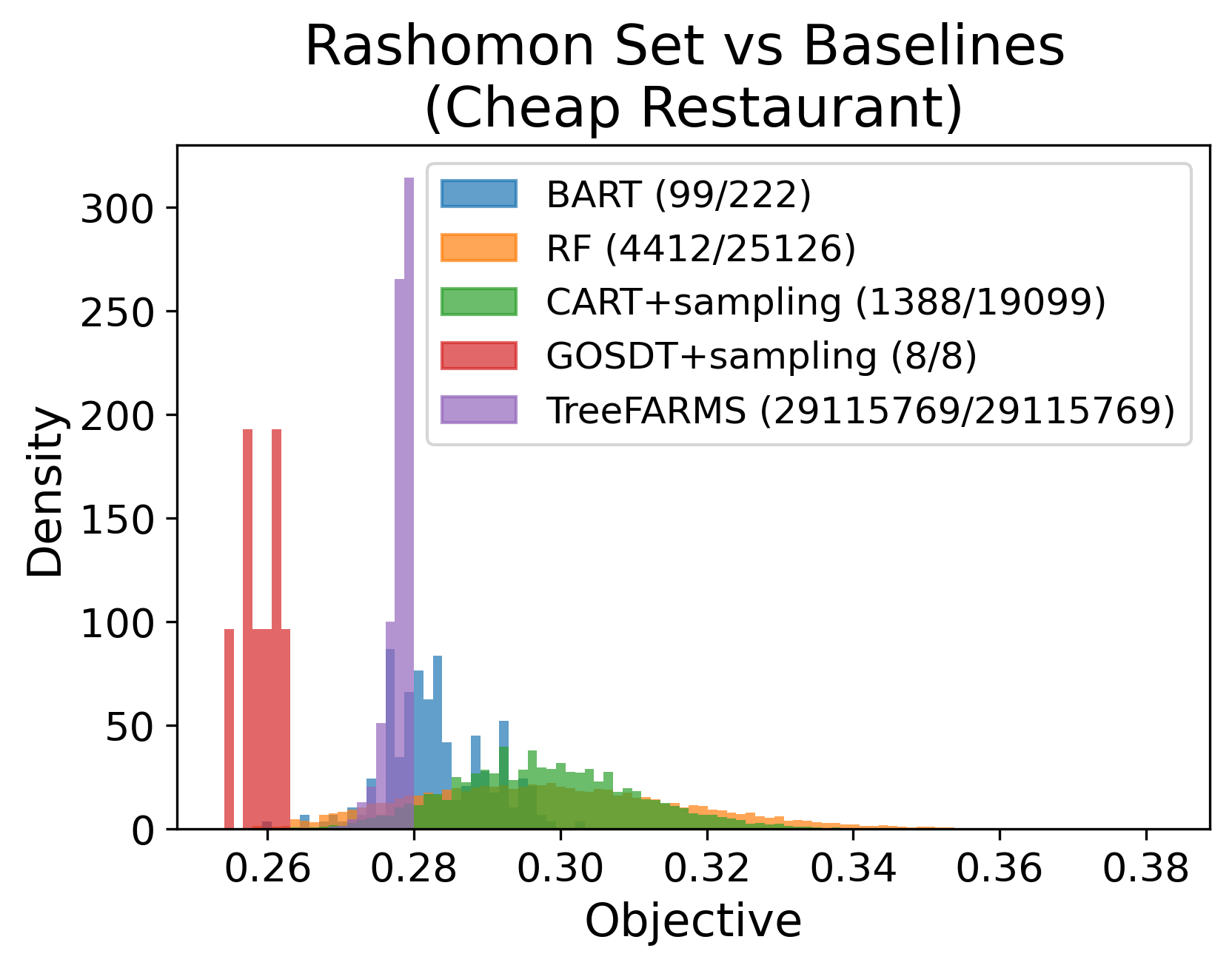}
    \includegraphics[height=0.187\textwidth]{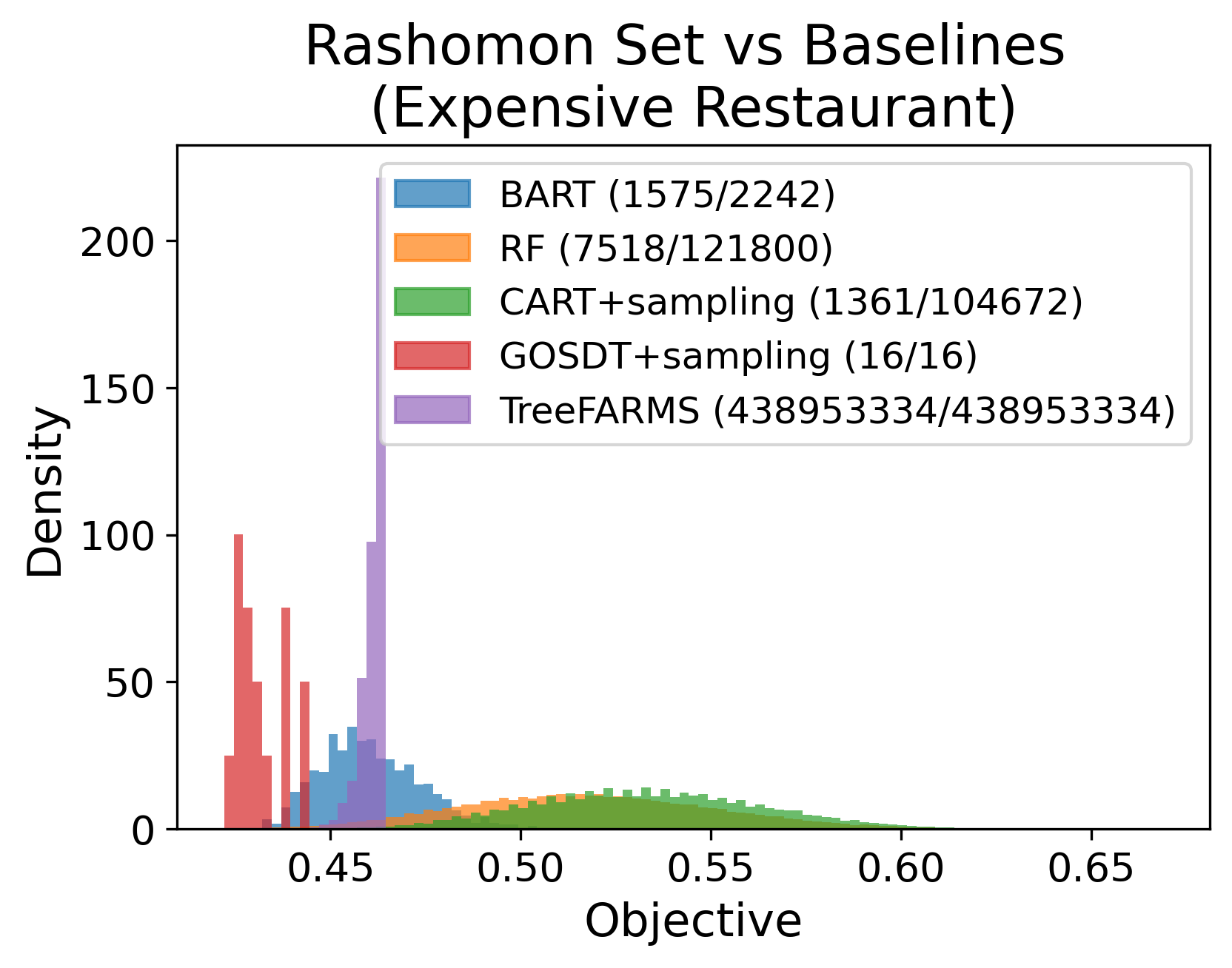}
    \caption{Comparison of trees in the Rashomon set and trees generated by baselines. (A/B) in the legend indicates that A trees of the B trees produced by the baseline method are in the Rashomon set. For example, GOSDT$+$sampling (9/10) means that 9 trees of 10 distinct trees found by GOSDT with sampling are in the Rashomon set.}
    \label{fig:more-vs-baselines}
\end{figure}

\renewcommand\tabularxcolumn[1]{m{#1}}
\newcolumntype{A}{>{\RaggedLeft\arraybackslash}X}
\newcolumntype{M}{>{\Centering\hsize=\dimexpr2\hsize+5\tabcolsep+\arrayrulewidth\relax}X}
\newcolumntype{L}[1]{>{\RaggedRight\let\newline\\\arraybackslash\hspace{0pt}}m{#1}}
\newcolumntype{C}[1]{>{\Centering\let\newline\\\arraybackslash\hspace{0pt}}m{#1}}
\newcolumntype{R}[1]{>{\RaggedLeft\let\newline\\\arraybackslash\hspace{0pt}}m{#1}}
\renewcommand\cellgape{\Gape[2pt]}
\begin{table}[ht]
   \footnotesize
       \centering
       \begin{tabularx}{\textwidth}
       {@{} |L{5.3em}|R{3.3em}|R{2.8em}|*{4}{A|R{2.4em}|} @{}}\hline
           & \multicolumn{2}{c|}{\ourmethod} & \multicolumn{2}{c|}{BART} & \multicolumn{2}{c|}{RF} & \multicolumn{2}{c|}{\makecell{CART + \\ sampling}} & \multicolumn{2}{c|}{\makecell{GOSDT + \\ sampling}}  \\
           \cline{2-11}
         %   & \multicolumn{1}{L{5.8em}|}{time (s)} 
         %   & \multicolumn{1}{R{3em}|}{\# trees in Rset/s} 
         %   & \multicolumn{1}{A|}{norm. time} 
         %   & \multicolumn{1}{R{3.5em}|}{\# trees in Rset/s} 
         %   & \multicolumn{1}{A|}{norm. time} 
         %   & \multicolumn{1}{R{3.5em}|}{\# trees in Rset /s} 
         %   & \multicolumn{1}{A|}{norm. time} 
         %   & \multicolumn{1}{R{3.5em}|}{\# trees in Rset/s} \
         %   & \multicolumn{1}{A|}{norm. time} 
         %   & \multicolumn{1}{R{3.5em}|}{\# trees in Rset/s}\\\hline
           & time (s)
           & \# trees in Rset/s
           & norm. time
           & \# trees in Rset/s
           & norm. time
           & \# trees in Rset/s
           & norm. time
           & \# trees in Rset/s
           & norm. time
           & \# trees in Rset/s\\\hline
 Monk2 & \textcolor{gray}{46.46} & 2.28 $\times 10^{6}$ & \textcolor{gray}{0.99 \tiny{± 0.02}} & 0.03 \tiny{± 0.02} & \textcolor{gray}{1.01 \tiny{± 0.00}} & 0.00 \tiny{± 0.00} & \textcolor{gray}{0.99 \tiny{± 0.02}} & 0.20 \tiny{± 0.11} & \textcolor{gray}{0.95 \tiny{± 0.00}} & 0.31 \tiny{± 0.04}\\\hline
COMPAS & \textcolor{gray}{3.03} & 8.75 $\times 10^{4}$ & \textcolor{gray}{0.98 \tiny{± 0.03}} & 0.14 \tiny{± 0.17} & \textcolor{gray}{0.99 \tiny{± 0.01}} & 68.67 \tiny{± 2.34} & \textcolor{gray}{1.00 \tiny{± 0.00}} & 44.83 \tiny{± 2.17} & \textcolor{gray}{0.96 \tiny{± 0.01}} & 2.20 \tiny{± 0.34}\\\hline
Car Evaluation & \textcolor{gray}{207.76} & 101.27 & \textcolor{gray}{0.95 \tiny{± 0.00}} & 0.00 \tiny{± 0.00} & \textcolor{gray}{1.01 \tiny{± 0.01}} & 0.03 \tiny{± 0.02} & \textcolor{gray}{1.00 \tiny{± 0.01}} & 0.71 \tiny{± 0.04} & \textcolor{gray}{0.97 \tiny{± 0.02}} & 0.03 \tiny{± 0.01}\\\hline
Bar & \textcolor{gray}{266.79} & 1.91 $\times 10^{4}$ & \textcolor{gray}{0.96 \tiny{± 0.00}} & 0.15 \tiny{± 0.29} & \textcolor{gray}{1.00 \tiny{± 0.02}} & 4.78 \tiny{± 0.14} & \textcolor{gray}{1.00 \tiny{± 0.01}} & 3.88 \tiny{± 0.11} & \textcolor{gray}{0.96 \tiny{± 0.01}} & 0.06 \tiny{± 0.01}\\\hline
Coffee House & \textcolor{gray}{81.77} & 1.45 $\times 10^{5}$ & \textcolor{gray}{0.99 \tiny{± 0.01}} & 0.00 \tiny{± 0.00} & \textcolor{gray}{1.00 \tiny{± 0.00}} & 2.60 \tiny{± 0.17} & \textcolor{gray}{1.00 \tiny{± 0.00}} & 0.67 \tiny{± 0.03} & \textcolor{gray}{0.99 \tiny{± 0.03}} & 0.06 \tiny{± 0.01}\\\hline
Expensive Restaurant & \textcolor{gray}{316.40} & 1.39 $\times 10^{6}$ & \textcolor{gray}{0.96 \tiny{± 0.01}} & 3.72 \tiny{± 1.31} & \textcolor{gray}{1.01 \tiny{± 0.00}} & 24.44 \tiny{± 0.76} & \textcolor{gray}{1.01 \tiny{± 0.01}} & 4.51 \tiny{± 0.15} & \textcolor{gray}{0.96 \tiny{± 0.01}} & 0.06 \tiny{± 0.01}\\\hline
Cheap Restaurant & \textcolor{gray}{74.91} & 3.89 $\times 10^{5}$ & \textcolor{gray}{0.99 \tiny{± 0.00}} & 0.98 \tiny{± 0.39} & \textcolor{gray}{1.00 \tiny{± 0.00}} & 59.63 \tiny{± 0.43} & \textcolor{gray}{1.01 \tiny{± 0.01}} & 17.75 \tiny{± 0.55} & \textcolor{gray}{0.96 \tiny{± 0.01}} & 0.15 \tiny{± 0.03}\\\hline
Bar-7 & \textcolor{gray}{13.11} & 1.13 $\times 10^{4}$ & \textcolor{gray}{1.01 \tiny{± 0.01}} & 0.67 \tiny{± 0.83} & \textcolor{gray}{1.00 \tiny{± 0.00}} & 22.78 \tiny{± 1.35} & \textcolor{gray}{1.00 \tiny{± 0.00}} & 8.06 \tiny{± 0.35} & \textcolor{gray}{1.03 \tiny{± 0.08}} & 0.33 \tiny{± 0.07}\\\hline
FICO & \textcolor{gray}{3841.94} & 21.08 & \textcolor{gray}{0.98 \tiny{± 0.03}} & 0.00 \tiny{± 0.00} & \textcolor{gray}{1.02 \tiny{± 0.01}} & 3.17 \tiny{± 0.03} & \textcolor{gray}{1.00 \tiny{± 0.02}} & 0.05 \tiny{± 0.00} & \textcolor{gray}{0.94 \tiny{± 0.00}} & 0.00 \tiny{± 0.00}\\\hline
Congressional Voting Records & \textcolor{gray}{16.67} & 0.06 & \textcolor{gray}{1.00 \tiny{± 0.01}} & 0.00 \tiny{± 0.00} & \textcolor{gray}{1.00 \tiny{± 0.00}} & 0.06 \tiny{± 0.00} & \textcolor{gray}{1.00 \tiny{± 0.00}} & 0.06 \tiny{± 0.00} & \textcolor{gray}{0.96 \tiny{± 0.02}} & 0.06 \tiny{± 0.00}\\\hline
       \end{tabularx}
       \vspace{4pt}
       
    \caption{Runtime and number of trees found for \ourmethod{} and the four baselines. We show the runtime (in seconds) for \ourmethod{} to find the Rashomon set. For the baselines, we show their average runtime and standard deviation normalized to that of \ourmethod{}. For instance, a normalized time of 1.1 means more time than \ourmethod{} by 10\%. We also show the average number of trees in the Rashomon set produced per second and its standard deviation. The Congressional Voting Record dataset has only one tree in the Rashomon set. \emph{The only method guaranteed to find the full Rashomon set is \ourmethod{}.}}
       \label{tab:more_time_baselines}
   \end{table}

\subsection{Landscape of the Rashomon set}\label{sec:rset_visual}
\textbf{Collection and Setup:} We ran this experiment on \textbf{Monk2}. We used dimension reduction techniques to visualize the Rashomon set. 

\begin{figure*}
\centering
\includegraphics[height=0.3\textwidth]{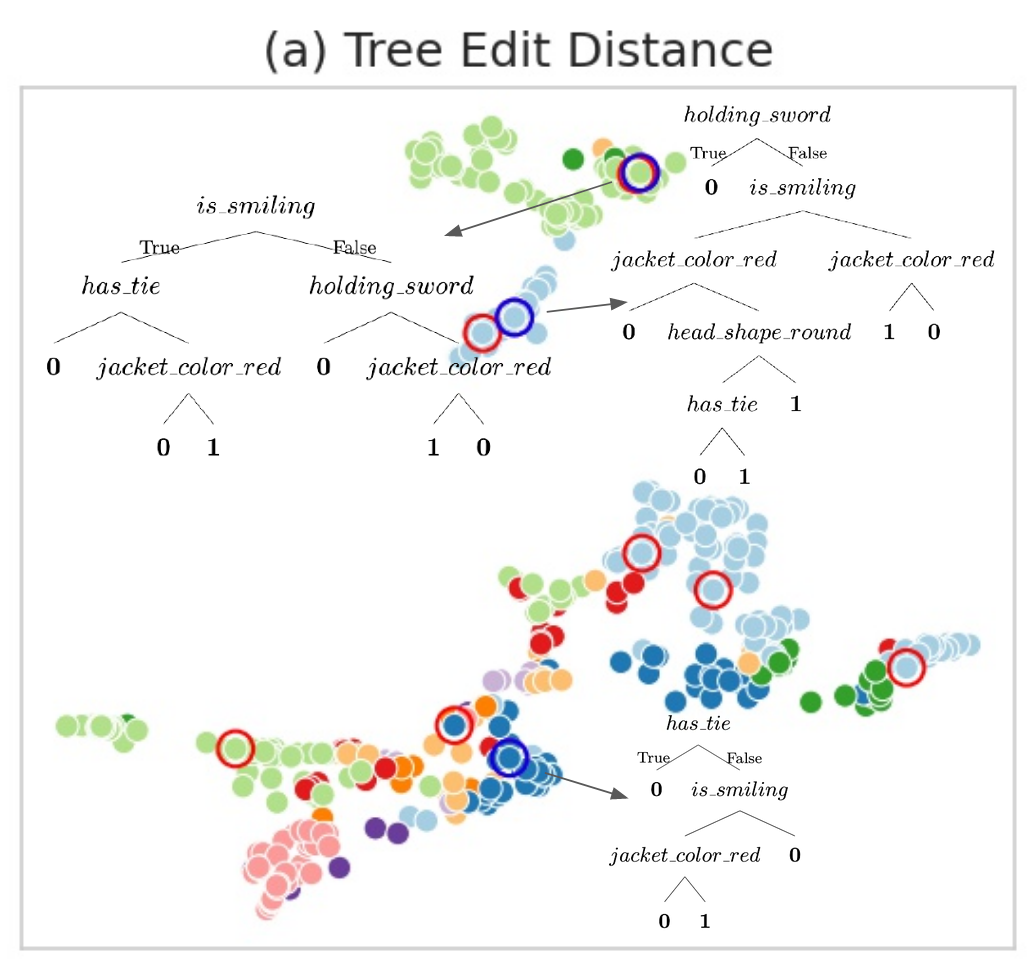}
\includegraphics[height=0.3\textwidth]{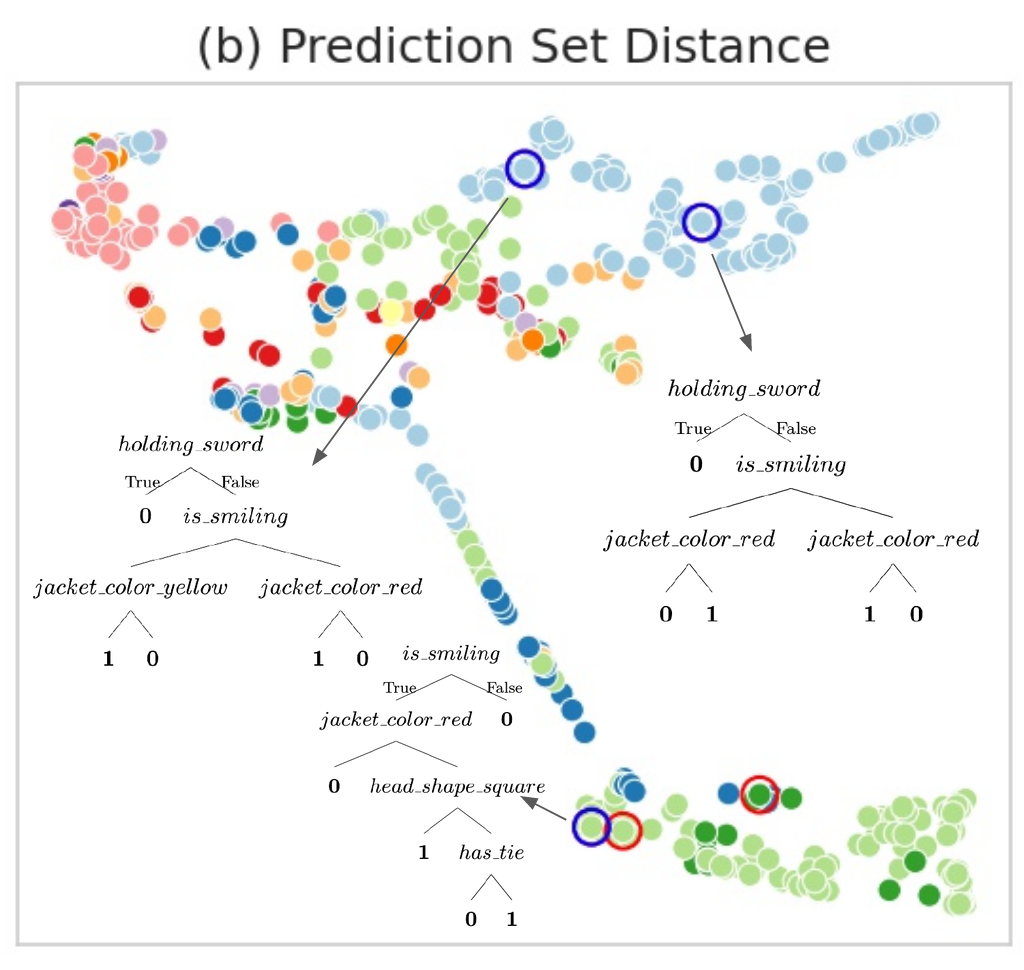}
\includegraphics[height=0.3\textwidth]{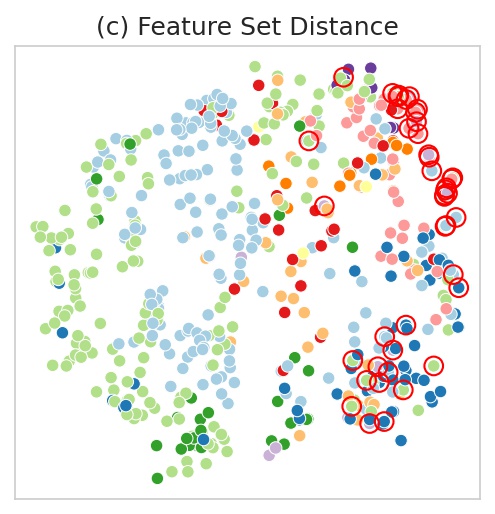}
\includegraphics[width=0.9\textwidth]{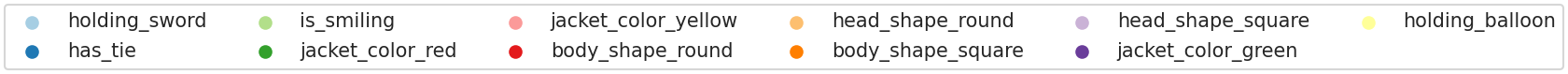}
\caption{Dimension-reduced images of the Rashomon set for Monk2. Colors represent the feature on which we split at the root. Subfigures (a), (b), and (c) show the embedding using tree edit distance, prediction distance, and feature set distance, respectively. In (a), red and blue circles identify the 10 trees with the lowest objective; the blue circles correspond to the trees on the side. The circles in (b) identify the 5 sparsest trees with accuracy above 0.7. The circles in (c) correspond to trees that use feature \textit{head}$\_$\textit{shape}=\textit{square} but not \textit{head}$\_$\textit{shape}=\textit{round} and not \textit{jacket}$\_$\textit{color=red}.} \label{fig:scatter}
\end{figure*}

We first generated the Rashomon set with $\lambda=0.025$ and $\epsilon=0.2$. We then selected the best tree for each unique set of features in the Rashmon set; that is, if two trees used the same features, we kept the tree with the better objective.
%found it useful to remove excess trees that a practitioner would be unlikely to be interested in. Thus, we
%use the same features as another tree with a better (??), we grouped trees in the Rashomon set by the set of features they used, and selected the best tree among each set.  
Next, we reduced the Rashomon set to $\mathbb{R}^2$ using the PaCMAP dimension reduction technique \nocite{wang2021understanding}\citepAppendix[which handles global structure better than techniques such as t-SNE and UMAP, see][]{appwang2021understanding}. 

\textbf{Results:} Figure \ref{fig:scatter} shows the embedding using three different distance metrics: tree edit distance (a), prediction set distance (b), and feature set distance (c). The \textit{tree edit distance} is the edit distance between two trees using the operations add-node, delete-node, or swap-nodes, all with equal weight. The \textit{prediction set distance} is the Hamming distance between the set of predictions each tree creates for the dataset (up to $n$). 
The \textit{feature set distance} between two trees is the Hamming distance between the feature sets they used (up to $p$). 
%We used the implementation of \citet{zhang1989simple} to compute the distances.
%\footnote{https://github.com/timtadh/zhang-shasha} %Zhang-Shasha algorithm \citepAppendix{appzhang1989simple}) 
% Our visualizations using tree edit distance show only the first 500 trees to improve the execution speed.

The images for the Monk2 dataset, shown in Figure \ref{fig:scatter}, yield interesting results.
% In the visualization generated using prediction set distance, we see that the entire Rashomon set is divided into two clusters. Using KMeans clustering algorithm, we computed the average hamming distance between the predictions each tree creates for the dataset. We report for trees within each cluster an average prediction set hamming distance of $0.209$, while $0.266$ for inter-cluster trees. This indicates that there is indeed separation within the visualization.
When designing decision trees, we usually expect that the root split contains most of the information, and thus we might expect that much of the Rashomon set shares the same root split. All three embeddings in Figure \ref{fig:scatter}, in which color identifies the feature that splits the root node, show that this is not the case. %Trees with a variety of root splits appear within the Rashomon set. 
Interestingly, Figure \ref{fig:scatter}a exhibits clustering, where each cluster includes models with different root splits, which is not what one might expect. Figure \ref{fig:scatter}c suggests the importance of two prominent features that often appear at the root split, \textit{holding}$\_$\textit{sword} and \textit{is}$\_$\textit{smiling}, while still highlighting the diversity in the root node.

Querying the Rashomon set is now extremely easy.
Here are some examples.

-- \emph{Find the 10 best trees}.
The circles in Figure \ref{fig:scatter}a identify the ten models with the best objective values. Interestingly, these trees appear in different parts of the space and have diverse structure.

-- \emph{Find the five sparsest trees with accuracy above 0.7.} Figure \ref{fig:scatter}b shows the result. These trees are (again) diverse.

-- \emph{Find trees with feature \textit{head}$\_$\textit{shape}=\textit{square} but neither \textit{head}$\_$\textit{shape}=\textit{round} nor \textit{jacket}$\_$\textit{color=red}.}

The red circles in Figure \ref{fig:scatter}c identify these trees. Constraining models in this way produces trees that use features quite different from all the trees on the left, even when they use the same root.

\subsection{Variable Importance: Model Class Reliance}\label{app:exp_mcr}

\textbf{Collection and Set:} We ran this experiments on \textbf{COMPAS, Bar, Coffee House, Expensive Restaurant}. We find the whole Rashomon set of each dataset given $\lambda=0.01$ and $\epsilon=0.05$, and calculate $\textrm{MCR}_-$ and $\textrm{MCR}_+$ of each feature as described in Appendix \ref{app:mcr}. We then study the how sampling can help us estimate the true model class reliance by sampling 1\%, 5\%, and 25\% trees from the Rashomon set and calculating the MCR based on these subsets of the Rashomon set. Figure \ref{fig:more_mcr} shows sampled MCR converges to true MCR. For each sampling proportion, we use 5 different random seeds to compute the average and standard deviation of $\textrm{MCR}_-$ and $\textrm{MCR}_+$. 
We ran this particular experiment on a 2.7Ghz (768GB RAM 48 cores) Intel Xeon Gold 6226 processor. We set a 200-GB memory limit. 

\textbf{Results}: Figure \ref{fig:more_mcr} shows the model class reliance on four different datasets. For the COMPAS dataset (top-left subfigure), features related to prior counts generally have high $\textrm{MCR}_+$, which means these features are very important for some of trees in the Rashomon set. For the Bar dataset (top-right subfigure), features “Bar\_1-3” and “Bar\_4-8” have dominant $\textrm{MCR}_+$ and $\textrm{MCR}_-$ compared with other features, indicating that for all well-performing trees, these features are the most important. Similarly, features "CoffeeHouse\_1~3" and "CoffeeHouse\_4~8" have dominant $\textrm{MCR}_+$ and $\textrm{MCR}_-$. This makes sense, since people who go to bar or coffee house regularly would be likely to accept a coupon for a bar or a coffee house.

Sampling can help us calculate MCR more efficiently by considering only part of the whole Rashomon set. Figure \ref{fig:more_mcr} shows that sampled MCR, in general, converges to true MCR even if only 1\% or 5\% of trees are sampled.  

\begin{figure}
    \centering
    \includegraphics[width=1\linewidth]{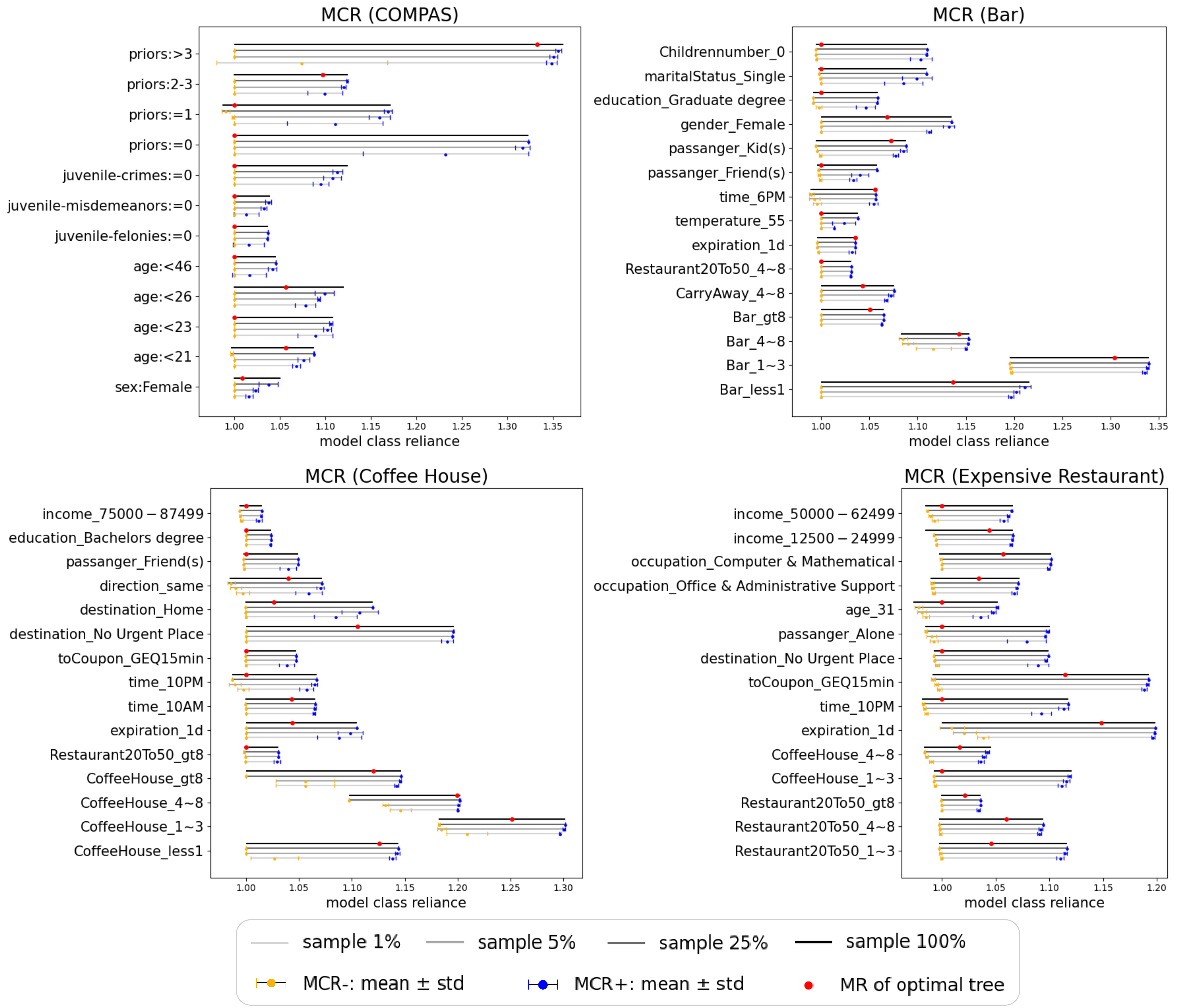}
    \caption{Variable Importance: Model class reliance on the COMPAS, Bar, Coffee House, and Expensive Restaurant ($\lambda$ = 0.01, $\epsilon$ = 0.05). Four different line segments colored in different gray levels are model class reliance calculated by sampling different proportions from the whole Rashomon set. Yellow/blue dots with bars indicate $\textrm{MCR}_-$/$\textrm{MCR}_+$ within one standard deviation of the mean. ``Sample 100\%'' means using the whole Rashomon set and there is no variation. 
    Red dots indicate the model reliance (variable importance) calculated by the optimal tree. }
    \label{fig:more_mcr}
\end{figure}

\subsection{Balanced Accuracy and F1-score Rashomon set from Accuracy Rashomon set}\label{app:exp_other_metrics}

\textbf{Collection and Set:} We ran this experiments on three imbalanced datasets \textbf{Breast Cancer, Cheap Restaurant, Takeaway Food}. We ran this particular experiment on a 2.7Ghz (768GB RAM 48 cores) Intel Xeon Gold 6226 processor. We set a 200-GB memory limit. 

\textbf{Results:} Figure \ref{fig:more_bacc_results}, \ref{fig:more_f1_results} show trees in the Accuracy Rashomon set which covers the Balanced Accuracy Rashomon set and F1-score Rashomon set respectively. The black dashed line indicates the corresponding objective thresholds and blue dots below the dashed line are trees within these Rashomon sets. In some cases, the tree with the minimum misclassification objective is also the tree with the minimum other metric objective (see Figure \ref{fig:more_bacc_results} left and Figure \ref{fig:more_f1_results} right). But this is not always guaranteed. For example, in the right subfigure of Figure \ref{fig:more_bacc_results}, a single split node has the optimal accuracy objective (in yellow), while another three-leaf tree minimizes the balanced accuracy objective (in green). Actually, many trees have better balanced accuracy objective than the tree that minimizes the accuracy objective. A similar pattern holds for the F1-score Rashomon set (see left subfigure Figure \ref{fig:more_f1_results}). 

\begin{figure}
    \centering
    \includegraphics[scale=0.33]{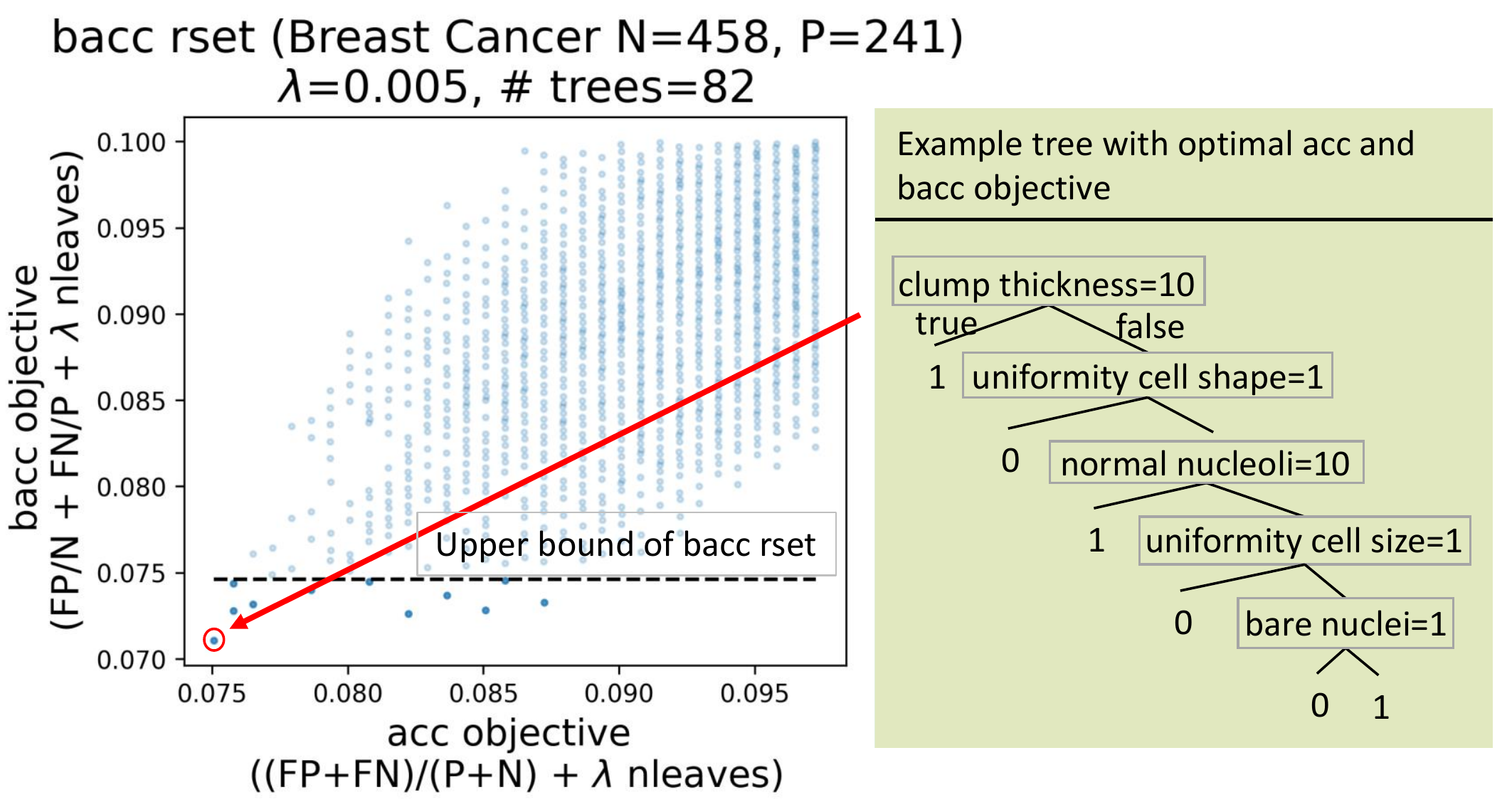}
    \includegraphics[scale=0.33]{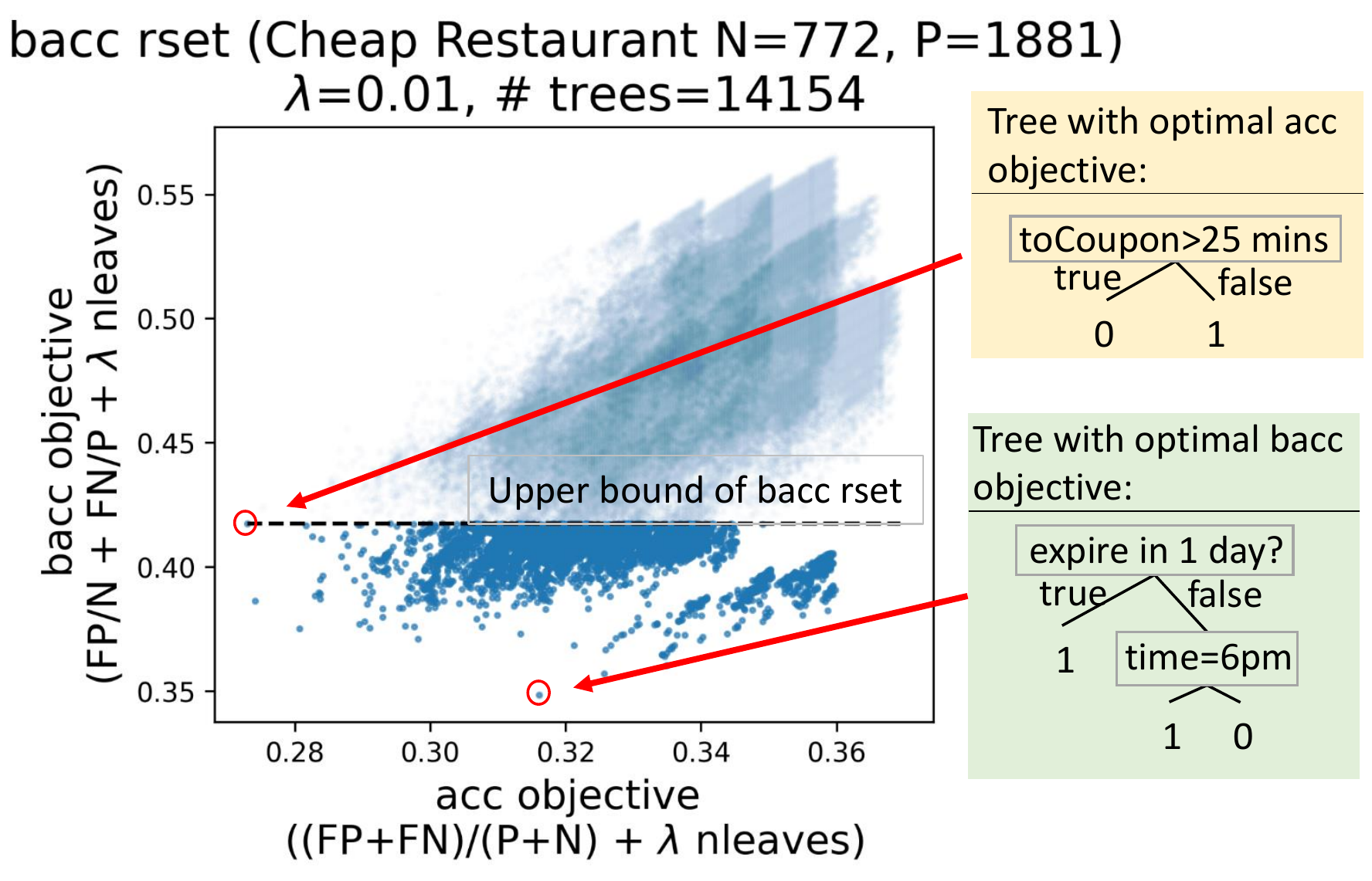}
    \caption{Example of Balanced Accuracy Rashomon sets. \# trees indicates the number of trees within the Balanced Accuracy Rashomon set. Trees in the yellow region have optimal accuracy objective and trees in the green region have optimal balanced accuracy. (Breast Cancer: $\lambda=0.005$, Cheap Restaurant: $\lambda=0.01$)}
    \label{fig:more_bacc_results}
\end{figure}

\begin{figure}
    \centering
    \includegraphics[scale=0.32]{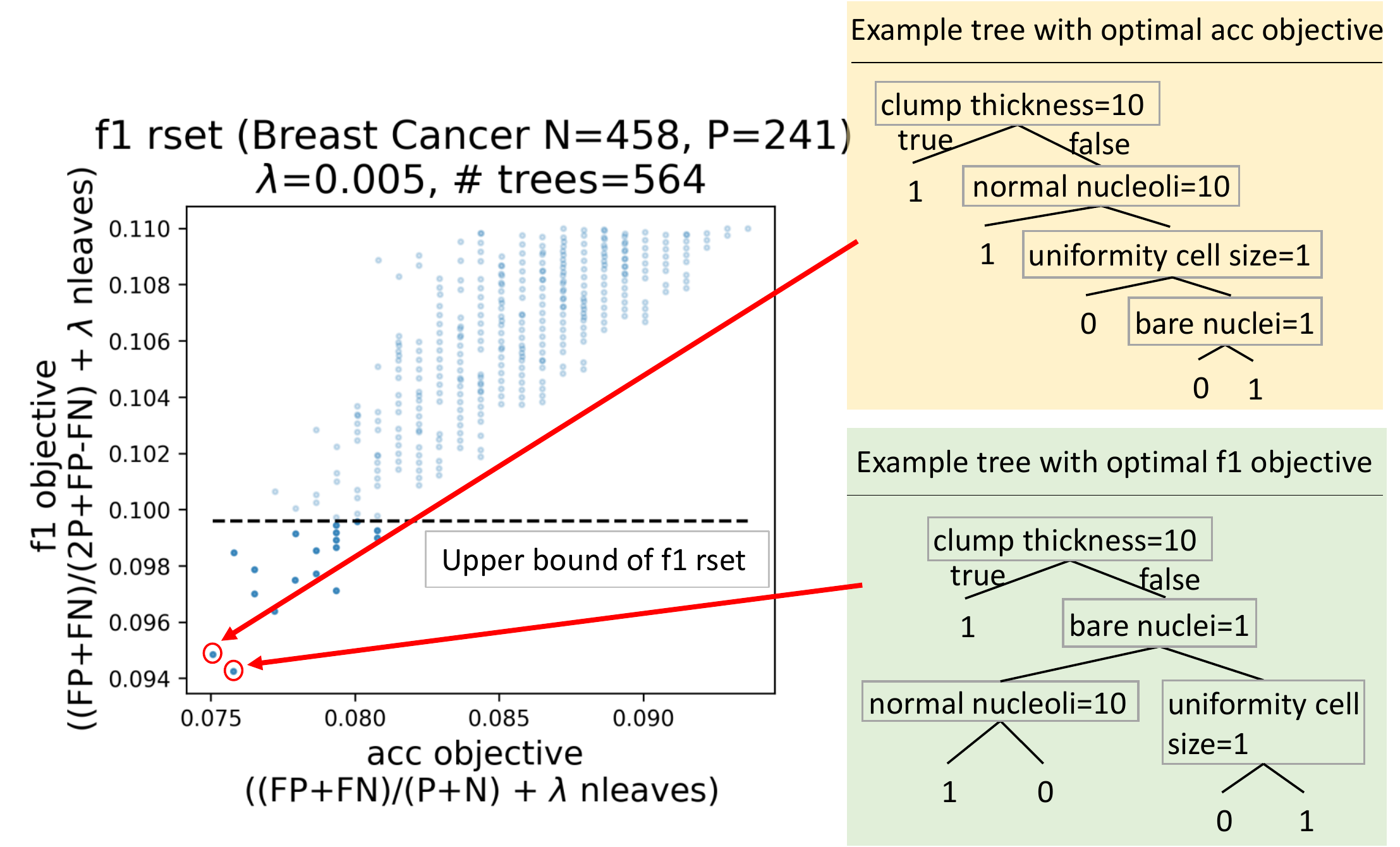}
    \includegraphics[scale=0.32]{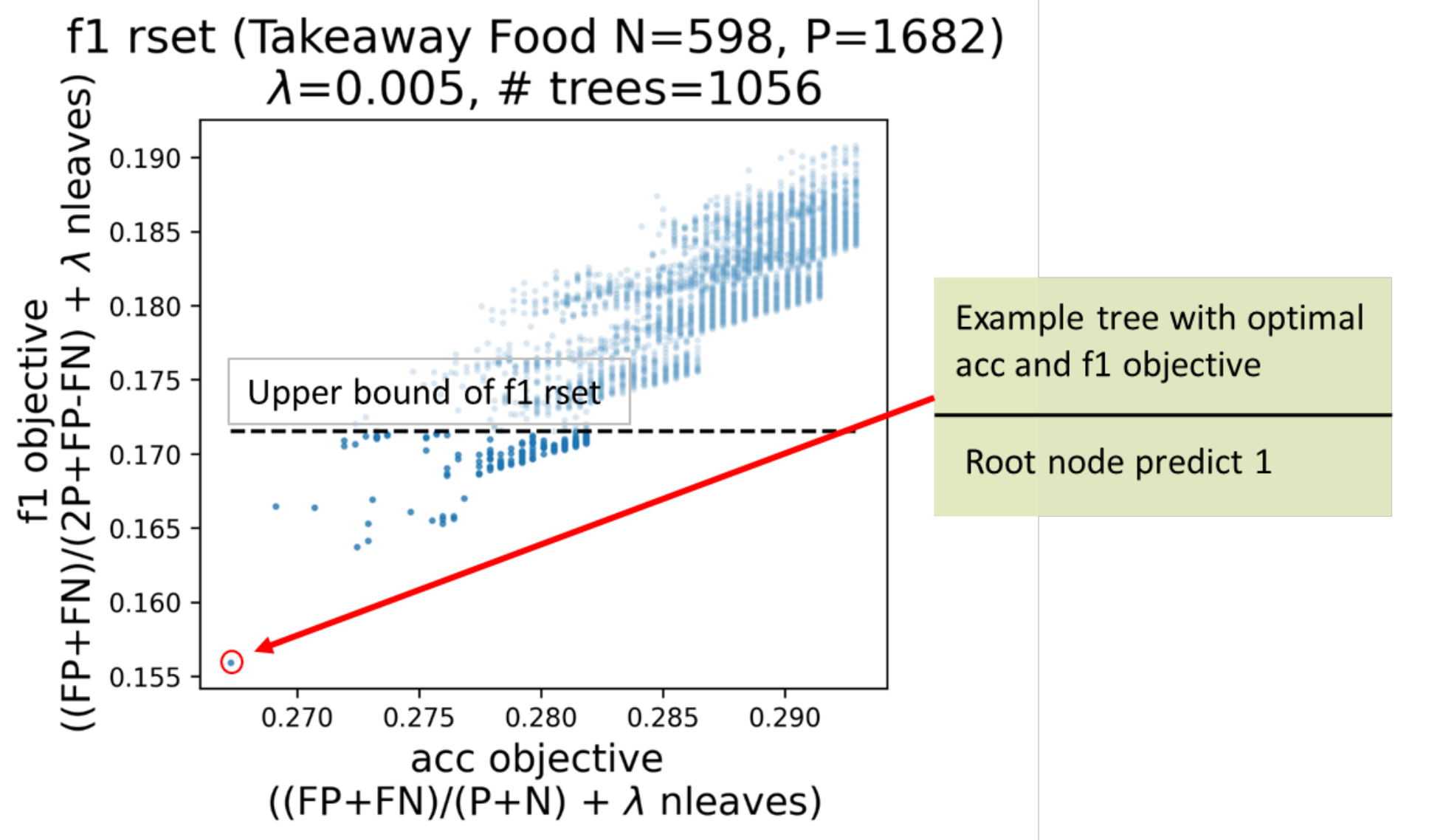}
    \caption{Example of F1-Score Rashomon set. \# trees indicates the number of trees within the F1-score Rashomon set. Trees in the yellow region have optimal accuracy objective and trees in the green region have optimal F1-score objective.(Breast Cancer: $\lambda=0.005$, Takeaway Food: $\lambda=0.005$)}
    \label{fig:more_f1_results}
\end{figure}

\subsection{Rashomon set after removing a group of samples}\label{app:exp_sensitivity}

\textbf{Collection and Set:} We ran this experiments on \textbf{Monk2, COMPAS, Bar, Expensive Restaurant}. We ran this particular experiment on a 2.7Ghz (768GB RAM 48 cores) Intel Xeon Gold 6226 processor. We set a 200-GB memory limit. 

\textbf{Results:} Figure \ref{fig:more_sensitivity_results} show accuracy objective on the full dataset versus objective on the reduced dataset after 1\% of different samples are removed on the Bar and Monk2 datasets. The black dashed line indicates the objective threshold of the reduced Rashomon set and blue dots below the dashed line are trees within the reduced Rashomon set. Similar to Figure \ref{fig:reduced} all scatter plots show a high correlation between the accuracy objective on the full dataset and the reduced dataset, indicating sparse near-optimal trees are robust to the shift in sample distribution. Optimal trees on the reduced dataset might be different, as we see by comparing the trees in the orange region and blue region. 

\begin{figure}
    \centering
    \begin{subfigure}[b]{0.98\linewidth}
        \includegraphics[scale=0.31]{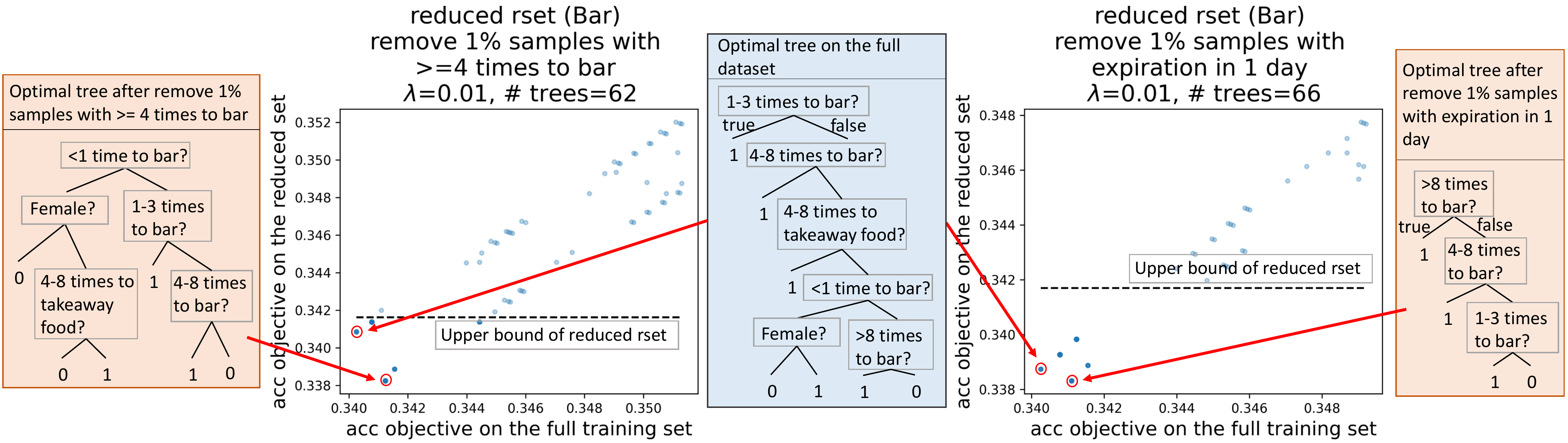}
        \caption{Example Rashomon sets and optimal trees after we remove the 1\% of samples with “number of times to bar $\geq 4$” (left) and “expiration of coupon in 1 day” (right) on the Bar dataset.}
    \end{subfigure}
    \begin{subfigure}[b]{0.98\linewidth}
        \includegraphics[scale=0.36]{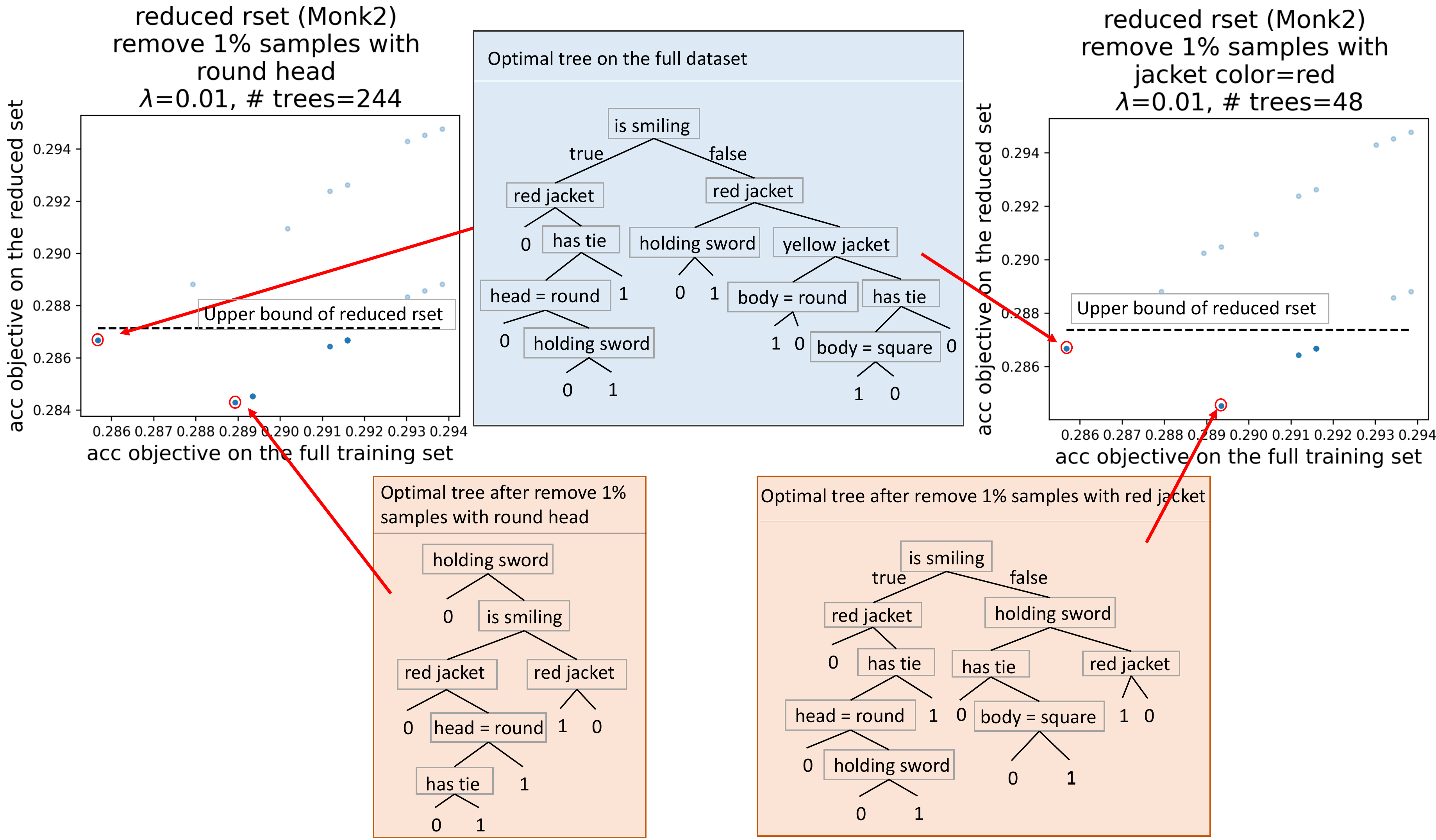}
        \caption{Example Rashomon sets and optimal trees after we remove the 1\% of samples with “round head” (left) and “red jacket" (right) on the Monk2 dataset.}
    \end{subfigure}
    
    \caption{Example Rashomon sets and optimal trees after 1\% of samples are removed. The optimal tree on the full dataset is shown in the blue region and optimal trees on the corresponding reduced datasets are in the orange region.}
    \label{fig:more_sensitivity_results}
\end{figure}

\clearpage
\bibliographystyleAppendix{unsrt}
\bibliographyAppendix{appref}

\end{document}